\documentclass{amsart}

\makeatletter

\usepackage{amsmath,amssymb,amsthm,mathtools}
\usepackage{booktabs,array,multirow}
\usepackage{graphicx}
\usepackage{xcolor}
\usepackage{enumitem}
\usepackage{microtype}
\usepackage{url}

\@ifundefined{algorithm}{\usepackage{algorithm}}{}
\@ifundefined{State}{\usepackage{algpseudocode}}{}

\@ifpackageloaded{hyperref}{}{%
  \usepackage[colorlinks=true,
              linkcolor=blue!45!black,
              citecolor=blue!45!black,
              urlcolor=blue!45!black]{hyperref}}

\usepackage[initials,backrefs]{amsrefs}

\@ifpackageloaded{cleveref}{}{%
  \usepackage[capitalise,nameinlink]{cleveref}}

\newcommand\project@newtheorem[3]{
  \@ifundefined{#1}{\newtheorem{#1}[#2]{#3}}{}}

\@ifundefined{theorem}{}{}
\project@newtheorem{lemma}{theorem}{Lemma}
\project@newtheorem{proposition}{theorem}{Proposition}
\project@newtheorem{corollary}{theorem}{Corollary}
\project@newtheorem{conjecture}{theorem}{Conjecture}
\theoremstyle{definition}
\project@newtheorem{definition}{theorem}{Definition}
\project@newtheorem{example}{theorem}{Example}
\project@newtheorem{assumption}{theorem}{Assumption}
\project@newtheorem{hypothesis}{theorem}{Hypothesis}
\project@newtheorem{problem}{theorem}{Problem}
\project@newtheorem{question}{theorem}{Question}
\project@newtheorem{notation}{theorem}{Notation}
\theoremstyle{remark}
\project@newtheorem{remark}{theorem}{Remark}
\project@newtheorem{observation}{theorem}{Observation}
\theoremstyle{plain}

\project@newtheorem{thm}{theorem}{Theorem}
\project@newtheorem{lem}{theorem}{Lemma}
\project@newtheorem{prop}{theorem}{Proposition}
\project@newtheorem{cor}{theorem}{Corollary}
\project@newtheorem{conj}{theorem}{Conjecture}
\theoremstyle{definition}
\project@newtheorem{defn}{theorem}{Definition}
\project@newtheorem{exa}{theorem}{Example}
\project@newtheorem{assum}{theorem}{Assumption}
\project@newtheorem{hyp}{theorem}{Hypothesis}
\theoremstyle{remark}
\project@newtheorem{rem}{theorem}{Remark}
\theoremstyle{plain}

\crefname{theorem}{Thm.}{Thms.}                \Crefname{theorem}{Thm.}{Thms.}
\crefname{thm}{Thm.}{Thms.}                    \Crefname{thm}{Thm.}{Thms.}
\crefname{lemma}{Lem.}{Lems.}                  \Crefname{lemma}{Lem.}{Lems.}
\crefname{lem}{Lem.}{Lems.}                    \Crefname{lem}{Lem.}{Lems.}
\crefname{proposition}{Prop.}{Props.}          \Crefname{proposition}{Prop.}{Props.}
\crefname{prop}{Prop.}{Props.}                 \Crefname{prop}{Prop.}{Props.}
\crefname{corollary}{Cor.}{Cors.}              \Crefname{corollary}{Cor.}{Cors.}
\crefname{cor}{Cor.}{Cors.}                    \Crefname{cor}{Cor.}{Cors.}
\crefname{conjecture}{Conj.}{Conjs.}           \Crefname{conjecture}{Conj.}{Conjs.}
\crefname{conj}{Conj.}{Conjs.}                 \Crefname{conj}{Conj.}{Conjs.}
\crefname{definition}{Def.}{Defs.}             \Crefname{definition}{Def.}{Defs.}
\crefname{defn}{Def.}{Defs.}                   \Crefname{defn}{Def.}{Defs.}
\crefname{example}{Ex.}{Exs.}                  \Crefname{example}{Ex.}{Exs.}
\crefname{exa}{Ex.}{Exs.}                      \Crefname{exa}{Ex.}{Exs.}
\crefname{assumption}{Assum.}{Assums.}         \Crefname{assumption}{Assum.}{Assums.}
\crefname{assum}{Assum.}{Assums.}              \Crefname{assum}{Assum.}{Assums.}
\crefname{hypothesis}{Hyp.}{Hyps.}             \Crefname{hypothesis}{Hyp.}{Hyps.}
\crefname{hyp}{Hyp.}{Hyps.}                    \Crefname{hyp}{Hyp.}{Hyps.}
\crefname{problem}{Prob.}{Probs.}              \Crefname{problem}{Prob.}{Probs.}
\crefname{question}{Ques.}{Ques.}              \Crefname{question}{Ques.}{Ques.}
\crefname{notation}{Not.}{Nots.}               \Crefname{notation}{Not.}{Nots.}
\crefname{remark}{Rem.}{Rems.}                 \Crefname{remark}{Rem.}{Rems.}
\crefname{rem}{Rem.}{Rems.}                    \Crefname{rem}{Rem.}{Rems.}
\crefname{observation}{Obs.}{Obs.}             \Crefname{observation}{Obs.}{Obs.}
\crefname{algorithm}{Alg.}{Algs.}              \Crefname{algorithm}{Alg.}{Algs.}
\crefname{equation}{Eq.}{Eqs.}                 \Crefname{equation}{Eq.}{Eqs.}
\crefname{section}{Sec.}{Secs.}                \Crefname{section}{Sec.}{Secs.}
\crefname{subsection}{Sec.}{Secs.}             \Crefname{subsection}{Sec.}{Secs.}
\crefname{figure}{Fig.}{Figs.}                 \Crefname{figure}{Fig.}{Figs.}
\crefname{table}{Tab.}{Tabs.}                  \Crefname{table}{Tab.}{Tabs.}
\crefname{appendix}{App.}{Apps.}               \Crefname{appendix}{App.}{Apps.}

\creflabelformat{equation}{#2\textup{(#1)}#3}

\newcommand\ProvideMathOperator[2]{%
  \@ifundefined{\expandafter\@gobble\string#1}{\DeclareMathOperator{#1}{#2}}{}}

\providecommand{\R}{\mathbb{R}}
\providecommand{\C}{\mathbb{C}}
\providecommand{\Q}{\mathbb{Q}}
\providecommand{\Z}{\mathbb{Z}}

\providecommand{\SO}{\mathrm{SO}}
\providecommand{\Or}{\mathrm{O}}

\providecommand{\SL}{\mathrm{SL}}

\providecommand{\GL}{\mathrm{GL}}

\providecommand{\Sph}{S^{2}}                 
\providecommand{\Harm}{\mathcal{H}}          

\providecommand{\cA}{\mathcal{A}}

\providecommand{\abs}[1]{\lvert #1\rvert}
\providecommand{\norm}[1]{\lVert #1\rVert}
\providecommand{\ip}[2]{\langle #1,#2\rangle}

\ProvideMathOperator{\tr}{tr}
\ProvideMathOperator{\rank}{rank}
\ProvideMathOperator{\diag}{diag}
\ProvideMathOperator{\dist}{dist}
\ProvideMathOperator{\disc}{disc}
\ProvideMathOperator{\Disc}{Disc}
\ProvideMathOperator{\Res}{Res}
\ProvideMathOperator{\lead}{lead}
\ProvideMathOperator{\Fix}{Fix}
\ProvideMathOperator{\Stab}{Stab}
\ProvideMathOperator{\Aut}{Aut}
\ProvideMathOperator{\End}{End}
\ProvideMathOperator{\Hom}{Hom}
\ProvideMathOperator{\Sym}{Sym}
\ProvideMathOperator{\Spec}{Spec}
\ProvideMathOperator{\Jac}{Jac}
\ProvideMathOperator{\Kum}{Kum}
\ProvideMathOperator{\Hess}{Hess}
\ProvideMathOperator{\Gr}{Gr}
\ProvideMathOperator{\Sing}{Sing}
\ProvideMathOperator{\supp}{supp}
\ProvideMathOperator{\sgn}{sgn}
\ProvideMathOperator{\codim}{codim}
\ProvideMathOperator{\Span}{Span}
\ProvideMathOperator{\Id}{Id}
\ProvideMathOperator{\AUC}{AUC}
\ProvideMathOperator{\Emb}{Emb}
\ProvideMathOperator{\Diff}{Diff}
\ProvideMathOperator{\Gram}{Gram}

\makeatother

\usepackage[margin=1.4in]{geometry}
\usepackage{placeins}

\begin{document}

\title{Decoy and disclosure radii of invariant shape descriptors}

\author[T. Shaska]{T. Shaska}
\address{Department of Computer Science and Engineering, Oakland University, Rochester, MI 48309}

\author[L. Beshaj]{L. Beshaj}
\address{Department of Mathematical Sciences, United States Military Academy, West Point, NY 10996}

\subjclass[2020]{Primary 13A50, 68M25; Secondary 14P10, 68T45, 68U05}

\keywords{invariant theory, spherical harmonics, bispectrum, separating invariants, orbit spaces, shape recognition, decoys, disclosure, task certification}

\begin{abstract}
A recognizer that compares rotation-invariant descriptors sees a surface only up to the fiber of the descriptor. We measure this fiber by its radius in the orbit distance from the enrolled surface. A large radius admits decoys, that is, distant shapes that pass the matcher. A small radius discloses the enrolled shape to anyone who captures the stored value. For star-shaped surfaces truncated to spherical harmonics of degree at most $L$, with $n$ coefficients, a descriptor of generic rank $r$ has generic fibers of dimension $n-3-r$ modulo rotations. The standard pool of band powers, even bispectra, and three invariants of the degree-three band therefore admits decoy families of dimension $5$, $13$, $20$ at $L=4,6,8$. Its rank first reaches $n-3$ at $L=16$, and a mirror decoy remains at every $L$. The odd bispectra remove the mirror decoy generically for $L \geq 4$. Yet at fixed mean radius the same pool determines the enclosed volume exactly, and it does not determine whether a surface meets a clearance requirement. We certify two cases by exact and interval arithmetic. At $L=6$ a decoy matches all $32$ invariants to relative precision $2 \cdot 10^{-18}$ at orbit distance at least $0.87$ times the norm of the enrolled tuple. For the radar shape model of asteroid (101955) Bennu, the pool recovers the modeled volume, misses the handedness, and leaves the keep-out radius uncertain by more than $7 \, \mathrm{m}$.
\end{abstract}

\maketitle

\section{Introduction}\label{sec-intro}

Consider a recognizer that identifies three-dimensional objects from their geometry and must ignore the pose of the object. A standard way to do this is to replace the surface by a descriptor that is invariant under rotations, and to compare the descriptor with an enrolled library. The recognizer then sees only the descriptor. Two surfaces with the same descriptor are indistinguishable from that representation alone. The set of such surfaces is the fiber of the descriptor, and we measure it modulo rotations. We analyze this fiber from two adversarial perspectives: decoy construction and disclosure from a released descriptor.

The first adversary builds decoys. A decoy is a surface that is not a rotated copy of an enrolled object but has the same descriptor, up to the tolerance of the recognizer. At zero tolerance a decoy lies in the exact fiber; at positive tolerance it lies in a thickened fiber. Large fibers therefore permit geometrically distinct accepted shapes. A fiber of positive dimension modulo rotations is a whole family of decoys; this ambiguity is intrinsic to the representation and exists independently of classifier training. Every classifier that acts on the descriptor is constant on its fibers, so no training removes them. The second adversary obtains the stored descriptor of an enrolled object. A captured descriptor is consistent with every surface in its fiber, so small fibers disclose the enrolled geometry. We measure both threats by one number, the radius of the fiber in the orbit distance, centered at the enrolled surface. For a fixed admissible set and tolerance, this radius, centered at the enrolled point, has a decoy interpretation and a disclosure interpretation. An observer who knows only the released value faces a different uncertainty, which is governed by the orbit diameter, or by the corresponding minimax radius, of the set of surfaces consistent with that value. The diameter lies between the radius and twice the radius (\cref{prop-minimax-shape}). This statement concerns the shape as a whole. It does not address the accuracy of a particular task, and it is not a statistical notion of privacy.

The setting is that of \cite{sh-181,sh-182}. A star-shaped surface is given by its radial function on the sphere, truncated to spherical harmonics of degree at most $L$. Rotations act band by band on the coefficients, and descriptors are polynomial invariants of the coefficient tuple. A descriptor that separates orbits admits no exact decoys; conversely, an exact released value determines the enrolled orbit. On a compact semialgebraic admissible set, the radius of any descriptor at tolerance $\varepsilon$ exceeds its exact radius by at most $C \varepsilon^{\alpha}$, with constants that may depend on the enrolled tuple. The stability inequality of \cite{sh-181} bounds the orbit distance by a H\"older power of the descriptor discrepancy. It was proved there to certify the descriptor. Read as a disclosure statement, it says that a released descriptor known to accuracy $\varepsilon$ locates the enrolled surface within orbit distance $C \varepsilon^{\alpha}$. On the dipole and quadrupole bands the separation proof of \cite{sh-181} is constructive and yields an explicit reconstruction.

A descriptor that does not separate admits decoys, and the constant-rank theorem measures how many. Let the descriptor have generic rank $r$ on a coefficient space of dimension $n$. Modulo rotations, the part of its fiber in the free maximal-rank locus is a manifold of dimension $n - 3 - r$. If $r = n - 3$, that regular part is a finite union of orbits. If $r < n - 3$, the recognizer accepts a family of decoys of positive dimension near every generic enrolled surface. A descriptor with fewer than $n - 3$ components is always in the second case, whatever its components are.

The standard spectral pool consists of the band powers of \cite{KazhdanFunkhouserRusinkiewicz03}, the even bispectra of \cite{Kakarala12}, and three invariants of the band-three sextic. It is the invariant block used in \cite{sh-182}. We compute its generic rank exactly. For $2 \leq L \leq 15$ the rank equals the size of the pool, which is less than $n - 3$. At the cutoffs $L = 4, 6, 8$ the decoy families have dimension $5$, $13$, and $20$. At $L = 16$ the rank reaches $n - 3$ for the first time, and the continuous ambiguity disappears on the free maximal-rank locus. A discrete ambiguity remains. The pool consists of true scalars, so a surface and its mirror image have the same pool. For every surface that is not a rotated copy of its mirror image, the mirror image is an exact decoy, at every $L$, and at $L = 16$ the finite ambiguity still contains the mirror pair. The odd bispectra are pseudoscalars. For $L\geq4$, adjoining them removes the mirror decoy for generic surfaces; the augmented pool first saturates at $L = 10$. The latter threshold also follows from the degree-three orbit-recovery theorem of \cite{FanEtAl24}.

A certified $L=6$ benchmark complements the dimension count: an integer tuple and a rational tuple agree in all $32$ invariants to relative precision $2 \cdot 10^{-18}$ and have orbit distance at least $0.87$ times the enrolled tuple norm. The rational comparison is exact, and the distance bounds are established by interval arithmetic through a validated covering criterion given in \cref{sec-decoy}.

On the quadrupole band, the decoy radius of the power can be computed exactly. The fiber of the quadrupole power $\norm{Q}^2$ is a sphere. The Hoffman--Wielandt inequality of \cite{HoffmanWielandt53} turns the orbit distance on it into the distance between sorted spectra. The decoy radius at $Q$ is $2 r \sin ( \max \{ \theta, \pi/3 - \theta \} / 2 )$, where $r = \norm{Q}$ and $\cos 3 \theta = 3 \sqrt{6} \det Q / r^3$. It lies between $\frac{\sqrt{6} - \sqrt{2}}{2} \, r$ and $r$, and adjoining the cubic invariant $\det Q$ makes it zero. With the power known exactly, the radius of $( \norm{Q}^2, \det Q )$ at determinant tolerance $\varepsilon$ grows linearly at matrices with distinct eigenvalues and like $\varepsilon^{1/2}$ at matrices with a repeated eigenvalue. On the quadrupole band, the passage from the power spectrum to the bispectrum is the passage from decoys to disclosure.

Whole-shape ambiguity does not settle task utility. We therefore introduce task intervals and prove an exact attribute-disclosure result: at a fixed constant radial term, the even pool determines enclosed volume. An explicit pair of positive radial surfaces has identical descriptors and volumes but opposite outcomes for a specified clearance requirement. The example can be checked by hand, so the conflict is exhibited for a stated task and is not merely inferred from a difference in shape. In \cref{sec-bennu} the same analysis is applied to the radar shape model of the near-Earth asteroid (101955) Bennu, with certified descriptor, distance, positivity, and task bounds for the rounded $L = 6$ model.

The coefficient model assumes a fully observed closed surface that is star-shaped about a specified origin. A centroid convention is an additional constraint. It follows neither from positivity nor from omitting the constant harmonic, and \cref{sec-admissibility} states it explicitly. Surfaces of higher genus and partial views are outside this model. The discussion of applications in \cref{sec-evaluation} proposes a validation setting; it makes no claim about any deployed system.

\section{Descriptors, fibers, and radii}\label{sec-radii}

Let $\R^3$ carry the Euclidean inner product $\ip{x}{y} = x_1 y_1 + x_2 y_2 + x_3 y_3$ and the norm $\abs{x} = \ip{x}{x}^{1/2}$. Let $\Sph = \{ x \in \R^3 : \abs{x} = 1 \}$ be the unit sphere, and let the special orthogonal group $\SO(3)$ be the group of real $3 \times 3$ matrices $R$ with $R^{\top} R = I$ and $\det R = 1$, where $I$ is the identity matrix. Every $R \in \SO(3)$ maps $\Sph$ onto $\Sph$.

Let $l \geq 1$ be an integer. A polynomial $P \in \R [ x_1, x_2, x_3 ]$ is harmonic if $\Delta P = 0$, where $\Delta = \partial_1^2 + \partial_2^2 + \partial_3^2$. Let $\Harm_l$ be the real vector space of the functions $\Sph \to \R$ that are restrictions of homogeneous harmonic polynomials of degree $l$. Its elements are the real spherical harmonics of degree $l$. The space $\Harm_l$ has dimension $2l + 1$; see \cite[Sec.~2.1]{sh-181} for further details.

Let $R \in \SO(3)$ and $f \in \Harm_l$. Define $R \cdot f \colon \Sph \to \R$ by
\[
( R \cdot f ) (p) = f ( R^{-1} p ) , \qquad p \in \Sph ,
\]
where $R^{-1} p$ is the product of the matrix $R^{-1} = R^{\top}$ with the vector $p$. Let $P$ be a homogeneous harmonic polynomial of degree $l$ whose restriction to $\Sph$ is $f$. Then $R \cdot f$ is the restriction of $P \circ R^{-1}$. This polynomial is homogeneous of degree $l$, and it is harmonic because $\Delta$ commutes with orthogonal changes of variables. Hence $R \cdot f \in \Harm_l$.

The group $\SO(3)$ acts on $\Harm_l$ via
\[
\begin{split}
\SO(3) \times \Harm_l & \to \Harm_l, \\
( R, f ) & \mapsto R \cdot f.
\end{split}
\]
Then $I \cdot f = f$ and $( R_1 R_2 ) \cdot f = R_1 \cdot ( R_2 \cdot f )$ for all $R_1, R_2 \in \SO(3)$ and $f \in \Harm_l$, and the action is linear in $f$. Hence the map
\[
\rho_l \colon \SO(3) \to \GL ( \Harm_l ),
\]
defined by $\rho_l (R) (f) = R \cdot f$, is a group homomorphism, that is, a representation of $\SO(3)$ on $\Harm_l$. By \cite[Thm.~3.8]{sh-181}, the complexification $\Harm_l \otimes \C$ is equivariantly isomorphic to the space of binary forms of degree $2l$, which is an irreducible $\SL_2 ( \C )$-module. The group $\SO(3)$ acts on it through the covering $\mathrm{SU}(2) \to \SO(3)$, and $\mathrm{SU}(2)$ is Zariski dense in $\SL_2 ( \C )$, so a subspace invariant under $\SO(3)$ is invariant under $\SL_2 ( \C )$. Hence $\Harm_l \otimes \C$ is irreducible, and every $\SO(3)$-equivariant linear map $\Harm_l \to \Harm_l$ is a real multiple of the identity. For $l \neq l'$ the representations on $\Harm_l$ and $\Harm_{l'}$ are not isomorphic, because their dimensions $2l + 1$ and $2l' + 1$ differ.

\begin{rem}[Julia reduction]\label[rem]{rem-julia}
Let $l \geq 1$ and $f \in \Harm_l$. Let $F_f$ be the binary form of degree $2l$ that corresponds to $f$ under the isomorphism above. Assume that $F_f$ has $2l$ distinct roots. Let $J_{F_f}$ be the Julia quadratic of $F_f$, and let $\xi ( J_{F_f} ) \in \mathbb{H}^3$ be its Julia point, as defined in \cite[Sec.~4.1]{MR3579531} and \cite{Beshaj2015}. Then $\xi ( J_{F_f} ) = j$, where $j = (0, 1)$.

Indeed, put $\sigma (z_0, z_1) = ( - \bar{z}_1, \bar{z}_0 )$. The Veronese map $v$ of \cite{sh-181} satisfies $v ( \sigma (z) ) = - \overline{v (z)}$. Since $f$ is real and homogeneous of degree $l$, we have $F_f ( \sigma (z) ) = (-1)^l \, \overline{F_f (z)}$. Hence the set of roots of $F_f$ is invariant under $\alpha \mapsto - 1 / \bar{\alpha}$. The Julia point is $\SL_2 ( \C )$-equivariant by \cite[Thm.~3.4, Lem.~4.4]{MR3579531}. The group $\mathrm{SU}(2)$ fixes $j$, so after a rotation we may assume that no root of $F_f$ is at infinity. By \cite[Thm.~4.1]{MR3579531}, the Julia point is determined by the roots, and conjugating the roots replaces $(z, t)$ by $(\bar{z}, t)$. The map $\alpha \mapsto - 1 / \bar{\alpha}$ extends to the isometry
\[
(z, t) \mapsto \left( \frac{- z}{\abs{z}^2 + t^2} , \frac{t}{\abs{z}^2 + t^2} \right)
\]
of $\mathbb{H}^3$. Its only fixed point is $j$. The stabilizer of $j$ in $\mathrm{PSL}_2 ( \C )$ is $\mathrm{PSU}(2) \cong \SO(3)$. Hence the reduction theory of binary forms in \cite{MR3579531, Beshaj2015} gives no normalization of pose on a single real band.
\end{rem}

Let $\Harm_0$ be the space of constant functions on $\Sph$. Let $d \omega$ be the surface measure on $\Sph$ and $d \mu = d \omega / ( 4 \pi )$ the round probability measure. For real continuous functions $f, g$ on $\Sph$ put
\[
\ip{f}{g} = \int_{\Sph} f g \, d \mu.
\]
The angle brackets thus denote the Euclidean product for vectors and this product for functions; the arguments determine which is meant. The measure $\mu$ is invariant under $\SO(3)$, so $\ip{R \cdot f}{R \cdot g} = \ip{f}{g}$ for every $R \in \SO(3)$. The spaces $\Harm_0, \Harm_1, \Harm_2, \dots$ are pairwise orthogonal for this product; see \cite[Sec.~2.1]{sh-181}. We call $\Harm_l$ the \textbf{band of degree $l$}.

A closed surface $\Sigma \subset \R^3 \setminus \{ 0 \}$ is \textbf{star-shaped} about the origin if every ray $\{ t p : t > 0 \}$ with $p \in \Sph$ meets $\Sigma$ in exactly one point. Let $\Sigma$ be such a surface, and for $p \in \Sph$ let $\rho (p) > 0$ be the unique number with $\rho (p) \, p \in \Sigma$. The function $\rho \colon \Sph \to \R$ is the \textbf{radial function} of $\Sigma$, and $\Sigma = \{ \rho (p) \, p : p \in \Sph \}$. We assume that $\rho$ is continuous. Conversely, every positive continuous function $\rho \colon \Sph \to \R$ is the radial function of the star-shaped surface $\{ \rho (p) \, p : p \in \Sph \}$. This surface is the boundary of the solid $\{ t p : p \in \Sph , \ 0 \leq t \leq \rho (p) \}$, which contains the segment from the origin to each of its points.

Let $L \geq 1$. For $0 \leq l \leq L$ let $f_l$ be the orthogonal projection of $\rho$ onto $\Harm_l$. The truncation of $\rho$ at degree $L$ is
\[
\rho_L = f_0 + f_1 + \dots + f_L .
\]
The band $\Harm_0$ is omitted from the coefficient space below because $\SO(3)$ acts trivially on it. Put
\[
U_L = \bigoplus_{l=1}^{L} \Harm_l , \qquad n = \dim U_L = \sum_{l=1}^{L} ( 2l + 1 ) = L^2 + 2L .
\]
We write $c = ( f_1, \dots, f_L )$ for a point of $U_L$. The point $c$ built from the projections of $\rho$ is the \textbf{coefficient tuple} of $\Sigma$ at degree $L$. The group $\SO(3)$ acts on $U_L$ diagonally by
\[
R \cdot c = ( R \cdot f_1, \dots, R \cdot f_L ).
\]
Rotating the surface by $R \in \SO(3)$ replaces $\rho$ by $R \cdot \rho$, where $( R \cdot \rho ) (p) = \rho ( R^{-1} p )$. This action preserves each $\Harm_l$ and the product $\ip{\cdot}{\cdot}$, so it commutes with the orthogonal projections. Hence the rotation replaces each $f_l$ by $R \cdot f_l$, and the coefficient tuple $c$ by $R \cdot c$. When comparing physical radial solids, the constant $f_0$ must be fixed or retained separately; throughout the volume and clearance statements it is a known constant $\rho_0$.

Let $c \in U_L$. The \textbf{orbit} of $c$ is the set $\SO(3) \cdot c = \{ R \cdot c : R \in \SO(3) \}$. Two points of $U_L$ lie in the same orbit if and only if one is obtained from the other by a rotation. The orbits partition $U_L$, and the set of orbits is denoted $U_L / \SO(3)$.

An inner product on $U_L$ is \textbf{$\SO(3)$-invariant} if $\ip{R \cdot c}{R \cdot c'} = \ip{c}{c'}$ for all $R \in \SO(3)$ and $c, c' \in U_L$. The bands are irreducible, their complexifications are irreducible, and distinct bands are not isomorphic. By Schur's lemma, every $\SO(3)$-invariant inner product on $U_L$ therefore has the form
\[
\ip{c}{c'} = \sum_{l=1}^{L} w_l \ip{f_l}{f_l'},
\]
where $\ip{\cdot}{\cdot}$ on $\Harm_l$ is the product defined above and $w_l > 0$. We fix such an inner product and its norm $\norm{\cdot}$. Let $c, c' \in U_L$. The function $R \mapsto \norm{c - R \cdot c'}$ is continuous on the compact group $\SO(3)$, so it attains its minimum. The \textbf{orbit distance} is
\[
\delta ( c, c' ) = \min_{R \in \SO(3)} \norm{c - R \cdot c'} .
\]
It is a metric on $U_L / \SO(3)$, and $\delta ( c, c' ) = 0$ if and only if $c' \in \SO(3) \cdot c$. It is continuous and semialgebraic on $U_L \times U_L$. These statements are \cite[Thm.~7.3]{sh-181}, whose proof uses only the invariance of the norm and the compactness of $\SO(3)$.

The space $U_L$ is a real vector space of dimension $n$. A map $\varphi \colon U_L \to \R$ is a \textbf{polynomial} if it is a polynomial in the coordinates of $U_L$ with respect to some basis. A change of basis is linear, so this property does not depend on the basis. A map
\[
\Phi = ( \Phi_1, \dots, \Phi_N ) \colon U_L \to \R^N
\]
is a \textbf{polynomial map} if each component $\Phi_j$ is a polynomial. A map $\Phi \colon U_L \to \R^N$ is \textbf{$\SO(3)$-invariant} if
\[
\Phi ( R \cdot c ) = \Phi (c)
\]
for every $R \in \SO(3)$ and every $c \in U_L$. Equivalently, $\Phi$ is constant on every orbit, so it induces a map $U_L / \SO(3) \to \R^N$.

\begin{defn}\label[defn]{def-descriptor}
A \textbf{descriptor} is an $\SO(3)$-invariant polynomial map $\Phi \colon U_L \to \R^N$. Let $c \in U_L$. The \textbf{fiber} of $\Phi$ through $c$ is the set $\Phi^{-1} ( \Phi (c) )$ of all points $x$ of $U_L$ with $\Phi (x) = \Phi (c)$. The descriptor $\Phi$ \textbf{separates orbits} if every fiber of $\Phi$ is a single orbit.
\end{defn}

Every fiber contains the orbit of each of its points, so it is a union of orbits. The descriptor $\Phi$ separates orbits if and only if $\Phi (c) = \Phi (c')$ implies $c' \in \SO(3) \cdot c$. We use the Euclidean norm $\abs{\cdot}$ on $\R^N$.

\begin{defn}\label[defn]{def-radius}
Let $\Phi \colon U_L \to \R^N$ be a descriptor, let $K \subset U_L$ be compact, let $c \in K$, and let $\varepsilon \geq 0$. The \textbf{radius} of $\Phi$ at $c$ with tolerance $\varepsilon$ is
\[
r^{\varepsilon}_{\Phi} ( c ; K ) = \max \{ \delta ( c, c' ) : c' \in K , \ \abs{\Phi (c') - \Phi (c)} \leq \varepsilon \} .
\]
We write $r_{\Phi} ( c ; K ) = r^{0}_{\Phi} ( c ; K )$.
\end{defn}

The maximum exists. The set over which it is taken is closed in $K$, hence compact, and it contains $c$. The function $\delta$ is continuous. The radius is nondecreasing in $\varepsilon$. By the properties of $\delta$, $r_{\Phi} ( c ; K ) = 0$ if and only if the fiber of $\Phi$ through $c$ meets $K$ only in $\SO(3) \cdot c$.

The radius has two interpretations. In the first, an adversary builds decoys against a recognizer. The recognizer holds $\Phi (c)$ for the coefficient tuple $c$ of an enrolled surface and accepts $c'$ when
\[
\abs{\Phi (c') - \Phi (c)} \leq \varepsilon.
\]
A \textbf{decoy} for $c$ is an accepted point $c' \notin \SO(3) \cdot c$. Then $r^{\varepsilon}_{\Phi} ( c ; K )$ is the largest orbit distance from $c$ of a decoy in $K$, and it is zero if there is none. We call it the \textbf{decoy radius}. In the second interpretation, an adversary has obtained $\Phi (c)$ and knows that $c \in K$. Every point of the fiber of $\Phi$ through $c$ that lies in $K$ is consistent with this information, and $r_{\Phi} ( c ; K )$ is the largest orbit distance from $c$ among them. A large radius serves the first adversary. A small radius serves the second. \Cref{fig-power-decoy} shows two surfaces whose band powers coincide. Each is a decoy for the other under the descriptor $P = ( P_1, P_2 )$ of \cref{sec-powers}.

For comparison across components of different degrees, fix a nonsingular matrix $W \in \R^{N \times N}$ and use the discrepancy $\abs{W ( \Phi(c') - \Phi(c) )}$. This is equivalent to replacing $\Phi$ by the descriptor $W\Phi$. It leaves exact fibers and Jacobian ranks unchanged, but in general it changes positive-tolerance radii. The matrix must be fixed when the tolerance is specified. In particular, a componentwise relative error corresponds to
\[
W = \diag ( 1 / \abs{\Phi_1 (c)}, \dots, 1 / \abs{\Phi_N (c)} ),
\]
which depends on $c$ and is undefined when a component vanishes; it requires a positive absolute floor $\eta_j$, that is, the entries $1 / \max \{ \abs{\Phi_j (c)}, \eta_j \}$.

\begin{defn}\label[defn]{def-observation-fiber}
Let $\Phi \colon U_L \to \R^N$ be a descriptor, let $W \in \R^{N \times N}$ be nonsingular, let $K \subset U_L$ be compact, let $y \in \R^N$, and let $\varepsilon \geq 0$. The \textbf{observation-consistent set} is
\[
\mathcal F^{\varepsilon}_{\Phi,W}(y;K)
= \{ x \in K : \abs{W(\Phi(x)-y)} \leq \varepsilon \}.
\]
When this set is nonempty, its \textbf{orbit diameter} is
\[
D^{\varepsilon}_{\Phi,W}(y;K)
= \max_{x,x'\in\mathcal F^{\varepsilon}_{\Phi,W}(y;K)}\delta(x,x').
\]
\end{defn}

Let $c \in K$. If $y=\Phi(c)$, the largest distance from $c$ in this set is $r^{\varepsilon}_{W\Phi}(c;K)$. If both
\[
\abs{W(y-\Phi(c))}\leq\varepsilon \quad \text{and} \quad x\in\mathcal F^{\varepsilon}_{\Phi,W}(y;K),
\]
then $\abs{W(\Phi(x)-\Phi(c))}\leq 2\varepsilon$ by the triangle inequality. An empty consistent set signals incompatibility among the observation, the error budget, and $K$; it is not a certificate about an enrolled shape. All information statements below are conditional on the specified $K$. A catalog or a geometric prior can make the part of a fiber that lies in $K$ much smaller than the whole fiber.

\section{Separating descriptors disclose}\label{sec-disclosure}

Separating descriptors exist on every $U_L$. Real forms of the generators of the invariant ring separate the $\SO(3)$-orbits by \cite[Thm.~4.1]{sh-181}. Recovery of an orbit from the values of invariants, estimated from noisy samples, is the subject of \cite{BandeiraEtAl23}. Let $I$ be a separating descriptor and let $K \subset U_L$ be compact and semialgebraic. By \cite[Thm.~7.3(iii)]{sh-181} there are constants $C_K > 0$ and $\alpha_K \in (0, 1]$ such that
\[
\delta ( c, c' ) \leq C_K \, \abs{I (c) - I (c')}^{\alpha_K} , \qquad c, c' \in K .
\]
The proof is the \L{}ojasiewicz inequality for the continuous semialgebraic functions $\delta ( c, c' )$ and $\abs{I (c) - I (c')}$ on $K \times K$, which have the same zero set. It applies verbatim on $U_L$ with any invariant inner product. In \cite{sh-181} the inequality certifies the descriptor. The same argument bounds the positive-tolerance radius of every descriptor by its exact radius plus a H\"older error term.

\begin{prop}\label[prop]{prop-tolerance}
Let $\Phi \colon U_L \to \R^N$ be a descriptor, let $K \subset U_L$ be compact and semialgebraic, and let $c \in K$. Then there are $C > 0$ and $\alpha \in (0, 1]$ such that
\[
r^{\varepsilon}_{\Phi} ( c ; K ) \leq r_{\Phi} ( c ; K ) + C \varepsilon^{\alpha}
\]
for every $\varepsilon \geq 0$.
\end{prop}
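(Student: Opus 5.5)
We prove this with a \L{}ojasiewicz inequality on $K$, comparing distance to the exact fiber with the descriptor discrepancy. Put $F = \{ x \in K : \Phi(x) = \Phi(c) \}$. It is compact, semialgebraic, nonempty (it contains $c$), and a union of orbit pieces. Define on $K$ the two functions
\[
g(x) = \min_{z \in F} \delta(x, z), \qquad h(x) = \abs{\Phi(x) - \Phi(c)} .
\]
Both are continuous and semialgebraic. For $h$ this holds because $\Phi$ is polynomial. For $g$ it holds because $\delta$ is continuous and semialgebraic on $U_L \times U_L$ by \cite[Thm.~7.3]{sh-181} and $F$ is compact and semialgebraic, so the minimum over $F$ is attained and is definable by a first-order formula.

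The key step is to check that $g$ and $h$ have the same zero set on $K$. If $h(x) = 0$ then $x \in F$, so $g(x) = 0$. Conversely, if $g(x) = 0$ then $\delta(x, z) = 0$ for some $z \in F$. Hence $x = R \cdot z$ for some rotation $R$, and invariance of $\Phi$ gives $\Phi(x) = \Phi(z) = \Phi(c)$, so $h(x) = 0$; since $x \in K$, in fact $x \in F$. The \L{}ojasiewicz inequality for continuous semialgebraic functions on a compact semialgebraic set (the same tool invoked above for \cite[Thm.~7.3(iii)]{sh-181}) now gives $C_0 > 0$ and $\alpha \in (0,1]$ with $g(x) \leq C_0 \, h(x)^{\alpha}$ on $K$. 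The exponent $\alpha$ can be taken $\leq 1$ after enlarging $C_0$, using that $h$ is bounded on $K$.

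Now let $c' \in K$ with $h(c') \leq \varepsilon$. Choose $z \in F$ attaining $g(c')$. Since $\delta$ is a metric on orbits,
\[
\delta(c, c') \leq \delta(c, z) + \delta(z, c') \leq r_{\Phi}(c;K) + C_0 \varepsilon^{\alpha}.
\]
The bound $\delta(c,z) \leq r_{\Phi}(c;K)$ holds by \cref{def-radius} because $z \in K$ lies in the exact fiber. Taking the maximum over such $c'$ gives the claim with $C = C_0$; the case $\varepsilon = 0$ is trivial.

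The main obstacle is the semialgebraicity of $g$. I would handle it by noting that $g(x) \le t$ is equivalent to $\exists z\, \exists R\, (z \in F,\ R^\top R = I,\ \det R = 1,\ \norm{x - R\cdot z}^2 \le t^2)$, which is a first-order formula in semialgebraic data. The matrices $\rho_l(R)$ are polynomial in the entries of $R$, being induced by composition of harmonic polynomials with $R^{-1}=R^\top$. Tarski--Seidenberg then gives that $g$ is semialgebraic.
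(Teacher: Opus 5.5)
Your proof is correct and follows the paper's argument: apply the \L{}ojasiewicz inequality on $K$ to the distance to the exact fiber $F$ and the discrepancy $\abs{\Phi(x)-\Phi(c)}$, then use the triangle inequality through a nearest point of $F$. The only difference is that the paper uses the norm distance $\min_{y\in F}\norm{x-y}$ and then the bound $\delta(x,c')\leq\norm{x-c'}$; that distance is semialgebraic without any Tarski--Seidenberg argument, so the step you called your ``main obstacle'' is not needed.
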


\begin{proof}
Let $F$ be the set of $x \in K$ with $\Phi (x) = \Phi (c)$. It is compact and semialgebraic, and it contains $c$. The functions $d (x) = \min_{y \in F} \norm{x - y}$ and $h (x) = \abs{\Phi (x) - \Phi (c)}$ are continuous and semialgebraic on $K$, with the same zero set $F$. The \L{}ojasiewicz inequality in \cite{BochnakCosteRoy98} gives $C > 0$ and $\alpha \in (0, 1]$ with $d (x) \leq C h (x)^{\alpha}$ on $K$. Let $c' \in K$ with $h (c') \leq \varepsilon$, and choose $x \in F$ with $\norm{c' - x} = d (c')$. Then
\[
\delta ( c, c' ) \leq \delta ( c, x ) + \delta ( x, c' ) \leq r_{\Phi} ( c ; K ) + \norm{x - c'} \leq r_{\Phi} ( c ; K ) + C \varepsilon^{\alpha} .
\]
Taking the maximum over $c'$ proves the claim.
\end{proof}

The constants depend on $c$. For a separating descriptor $r_{\Phi} ( c ; K ) = 0$, and the inequality of \cite{sh-181} makes the constants uniform on $K$. The next proposition gives the corresponding bounds under both interpretations.

\begin{prop}\label[prop]{prop-disclosure}
Let $I \colon U_L \to \R^N$ be a separating descriptor. Let $K \subset U_L$ be compact and semialgebraic, and let $C_K$ and $\alpha_K$ be as above. Then the following hold.
\begin{enumerate}
\item For every $c \in K$ and every $\varepsilon \geq 0$, $r^{\varepsilon}_{I} ( c ; K ) \leq C_K \, \varepsilon^{\alpha_K}$. In particular $r_{I} ( c ; K ) = 0$.
\item Let $c \in K$, let $\varepsilon \geq 0$, and let $y \in \R^N$ satisfy $\abs{y - I (c)} \leq \varepsilon$. Then every $c' \in K$ with $\abs{I (c') - y} \leq \varepsilon$ satisfies $\delta ( c, c' ) \leq C_K \, ( 2 \varepsilon )^{\alpha_K}$.
\end{enumerate}
\end{prop}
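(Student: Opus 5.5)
The plan is to apply the uniform \L{}ojasiewicz-type inequality stated just before \cref{prop-tolerance},
\[
\delta ( c, c' ) \leq C_K \, \abs{I (c) - I (c')}^{\alpha_K}, \qquad c, c' \in K,
\]
directly to the relevant pairs of points. The only further ingredients are the triangle inequality in $\R^N$ and the monotonicity of $t \mapsto t^{\alpha_K}$ on $[0,\infty)$, which holds because $\alpha_K > 0$.

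For part (1), fix $c \in K$ and $\varepsilon \geq 0$. Take any $c' \in K$ that is admissible in \cref{def-radius}, so that $\abs{I(c') - I(c)} \leq \varepsilon$. The inequality gives
\[
\delta(c,c') \leq C_K \abs{I(c)-I(c')}^{\alpha_K} \leq C_K \varepsilon^{\alpha_K},
\]
by monotonicity. The radius is a maximum over such $c'$, and that maximum exists by the remark after \cref{def-radius}, so $r^{\varepsilon}_{I}(c;K) \leq C_K \varepsilon^{\alpha_K}$. For $\varepsilon = 0$ we use the convention $0^{\alpha_K} = 0$, which gives $r_I(c;K) \leq 0$. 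Since $\delta \geq 0$, this forces $r_I(c;K) = 0$. This also agrees with separation: every admissible $c'$ then lies in $\SO(3)\cdot c$.

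For part (2), let $y$ satisfy $\abs{y - I(c)} \leq \varepsilon$, and let $c' \in K$ satisfy $\abs{I(c') - y} \leq \varepsilon$. The triangle inequality gives
\[
\abs{I(c') - I(c)} \leq \abs{I(c') - y} + \abs{y - I(c)} \leq 2\varepsilon.
\]
This is the same observation recorded after \cref{def-observation-fiber}, with $W = I$. Applying the uniform inequality to the pair $c, c' \in K$ and using monotonicity again yields $\delta(c,c') \leq C_K (2\varepsilon)^{\alpha_K}$.

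There is no real obstacle. The one point to stress is that the constants $C_K, \alpha_K$ come from the inequality cited from \cite{sh-181}, which holds uniformly for all pairs in $K \times K$. This uniformity is why, unlike in \cref{prop-tolerance}, the constants do not depend on $c$. It is also why the bound in (2) holds for every consistent $c'$ at once.
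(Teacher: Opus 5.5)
Your proof is correct and matches the paper's argument: for (1) you apply the uniform inequality to each admissible $c'$ and take the maximum, and for (2) you use the triangle inequality to get $\abs{I(c')-I(c)}\leq 2\varepsilon$ and then the same bound with $2\varepsilon$. One cosmetic point: in ``$W = I$'' you mean the identity matrix, but $I$ here names the descriptor, so write ``with $W$ the identity matrix'' instead.
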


\begin{proof}
Let $c' \in K$ with $\abs{I (c') - I (c)} \leq \varepsilon$. The displayed inequality gives $\delta ( c, c' ) \leq C_K \, \varepsilon^{\alpha_K}$. Taking the maximum over $c'$ proves (1). For $\varepsilon = 0$ the bound is zero. In (2) the triangle inequality gives $\abs{I (c') - I (c)} \leq 2 \varepsilon$, and (1) applies with $2 \varepsilon$.
\end{proof}

Part (1) says that a separating descriptor admits no decoys at tolerance zero, and only decoys within $C_K \varepsilon^{\alpha_K}$ at tolerance $\varepsilon$. Part (2) gives the disclosure interpretation: if a released value $y$ is known with error $\varepsilon$, then the enrolled tuple lies within orbit distance $C_K ( 2 \varepsilon )^{\alpha_K}$ of every point of $K$ whose descriptor lies within $\varepsilon$ of $y$. That set of points contains $c$, and membership in it is decided by evaluating $I$. The differential of $I$ annihilates the orbit directions, so its rank never exceeds $n - 3$ on the free locus. Where the Jacobian of $I$ restricted to a slice transverse to the orbit has full rank $n - 3$ on the principal stratum, the exponent can be taken equal to one locally \cite[Thm.~7.3(iv)]{sh-181}.

On $U_2$ the disclosure is explicit. Identify $\Harm_1$ with $\R^3$ and $\Harm_2$ with the traceless symmetric $3 \times 3$ matrices by $f_1 (x) = \ip{a}{x}$ and $f_2 (x) = x^\top Q x$. Then $R \cdot ( a, Q ) = ( R a, R Q R^\top )$. By \cite[Thm.~4.9]{sh-181} the invariants
\[
\abs{a}^2 , \quad \tr Q^2 , \quad \det Q , \quad a^\top Q a , \quad a^\top Q^2 a , \quad \det ( a, Q a, Q^2 a )
\]
separate the $\SO(3)$-orbits in $U_2$. The proof in \cite{sh-181} is constructive and yields an explicit reconstruction. The eigenvalues of $Q$ are the roots of $t^3 - \frac{1}{2} ( \tr Q^2 ) \, t - \det Q$. Let $a_1, a_2, a_3$ be the coordinates of $a$ in a positively oriented orthonormal eigenbasis of $Q$, with eigenvalues $\lambda_1, \lambda_2, \lambda_3$. Then
\[
\det ( a, Q a, Q^2 a ) = a_1 a_2 a_3 \prod_{i < j} ( \lambda_j - \lambda_i ) .
\]
When the eigenvalues are distinct, the squares $a_i^2$ solve the Vandermonde system $\sum_{i} \lambda_i^k a_i^2 = a^\top Q^k a$ for $k = 0, 1, 2$, and the displayed identity fixes the sign of $a_1 a_2 a_3$. Thus, on the locus where $Q$ has distinct eigenvalues, the six numbers determine $( a, Q )$ up to rotation by one cubic equation and one linear system. The cases of a repeated eigenvalue are treated separately in the proof of \cite[Thm.~4.9]{sh-181}, and separation holds on all of $U_2$.

\begin{prop}[Reconstruction from a released value]\label[prop]{prop-minimax-shape}
Let $\mathcal F=\mathcal F^{\varepsilon}_{\Phi,W}(y;K)$ be nonempty and put $D=D^{\varepsilon}_{\Phi,W}(y;K)$. Define the deterministic worst-case reconstruction error by
\[
\mathcal R^*(y;K)
=\min_{\widehat c\in K}\ \max_{x\in\mathcal F}\delta(\widehat c,x).
\]
Then $D/2\leq\mathcal R^*(y;K)\leq D$. For every $c\in\mathcal F$, if $r_c=\max_{x\in\mathcal F}\delta(c,x)$, then $r_c\leq D\leq 2r_c$.
\end{prop}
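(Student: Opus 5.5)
The argument uses only that $\delta$ is a metric on $U_L/\SO(3)$, together with compactness; no property of $\Phi$ beyond continuity is needed. First I will check that all the extrema exist. The set $\mathcal F$ is closed in the compact set $K$, since $x\mapsto\abs{W(\Phi(x)-y)}$ is continuous, so $\mathcal F$ is compact. Because $\delta$ is continuous on $U_L\times U_L$, the maximum defining $D$ is attained. The function $g(\widehat c)=\max_{x\in\mathcal F}\delta(\widehat c,x)$ is continuous; indeed it is $1$-Lipschitz in $\norm{\cdot}$, since $\delta(\widehat c,x)\le\norm{\widehat c-\widehat c'}+\delta(\widehat c',x)$. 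Hence $g$ attains its minimum on $K$, and $\mathcal R^*(y;K)$ is well defined. Note that $r_c=g(c)$.

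Next come the bounds on $r_c$. For $c\in\mathcal F$, the inequality $r_c\le D$ is immediate, because $r_c$ is a maximum of $\delta(c,x)$ over $x\in\mathcal F$, and every such pair $(c,x)$ is admissible in the maximum defining $D$. For the reverse bound, choose $x,x'\in\mathcal F$ with $\delta(x,x')=D$. The triangle inequality for $\delta$ gives
\[
D=\delta(x,x')\le\delta(x,c)+\delta(c,x')\le 2r_c .
\]

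For the bounds on $\mathcal R^*$, take the same extremal pair $x,x'$. For any $\widehat c\in K$,
\[
D\le\delta(x,\widehat c)+\delta(\widehat c,x')\le 2g(\widehat c),
\]
so $g(\widehat c)\ge D/2$. Minimizing over $\widehat c$ gives $\mathcal R^*\ge D/2$. For the upper bound, since $\mathcal F\subset K$ is nonempty, pick any $c\in\mathcal F$. Then $\mathcal R^*\le g(c)=r_c\le D$ by the previous step.

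There is no real obstacle here. The only points needing care are attainment of the extrema, which is handled above, and the fact that the candidate reconstruction $\widehat c$ ranges over $K$ and not merely over $\mathcal F$. That freedom matters only for the lower bound, and there the triangle inequality applies to every $\widehat c\in U_L$ anyway.
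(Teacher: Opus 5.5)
Your proof is correct and follows essentially the same route as the paper. Both arguments first get attainment of all extrema from compactness and from the fact that $\widehat c\mapsto\max_{x\in\mathcal F}\delta(\widehat c,x)$ is $1$-Lipschitz. Both then use the triangle inequality through $\widehat c$ (respectively $c$) to obtain $D/2\le\mathcal R^*(y;K)$ and $D\le 2r_c$, and choose $\widehat c\in\mathcal F$ to obtain $r_c\le D$ and $\mathcal R^*(y;K)\le D$.
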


\begin{proof}
Compactness and continuity give all maxima. The function of $\widehat c$ being minimized is continuous, since its change is at most $\delta(\widehat c,\widehat c')\leq\norm{\widehat c-\widehat c'}$, so the minimum also exists. For any $x,x'\in\mathcal F$ and $\widehat c\in K$, the triangle inequality gives $\delta(x,x')\leq\delta(x,\widehat c)+\delta(\widehat c,x')$. Taking maxima proves the lower bound. Taking $\widehat c$ to be any point of $\mathcal F$ proves the upper bound and $r_c\leq D$. The same triangle inequality through $c$ gives $D\leq 2r_c$.
\end{proof}

This is an information bound. It provides neither a reconstruction algorithm nor a probabilistic privacy guarantee: it assumes no distribution on $K$ and gives no bound on the computational effort needed to find a consistent point. A large shape diameter need not hide a particular attribute; \cref{sec-tasks,sec-volume} make that distinction explicit.

\section{Fibers of descriptors}\label{sec-fibers}

Let $L \geq 2$. For $c \in U_L$ let $( a, Q )$ correspond to $( f_1, f_2 )$ as in \cref{sec-disclosure}, and put
\[
\Omega_L = \{ c \in U_L : \det ( a, Q a, Q^2 a ) \neq 0 \} .
\]

\begin{lem}\label[lem]{lem-free}
Let $L \geq 2$. The set $\Omega_L$ is open, dense, and $\SO(3)$-invariant, and $\SO(3)$ acts freely on it.
\end{lem}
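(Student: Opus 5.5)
The function $g(c) = \det(a, Qa, Q^2 a)$ is a polynomial on $U_L$ that depends only on the components $(f_1, f_2)$; all four properties in the statement will be read off from it.

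\textbf{Openness.} $g$ is continuous, so $\Omega_L = g^{-1}(\R \setminus \{0\})$ is open.

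\textbf{Invariance.} Under $R \in \SO(3)$ we have $(a, Q) \mapsto (Ra, RQR^{\top})$. Then $Q^k a \mapsto R Q^k a$, so
\[
g(R \cdot c) = \det \bigl( R (a, Qa, Q^2 a) \bigr) = \det R \, g(c) = g(c).
\]
Thus $g$ is an invariant and $\Omega_L$ is $\SO(3)$-invariant.

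\textbf{Density.} A polynomial that is not identically zero has a zero set with empty interior, so its complement is dense. It therefore suffices to exhibit one point with $g \neq 0$. Take $Q = \diag(1, 0, -1)$, which is traceless, and $a = (1, 1, 1)$. The identity from \cref{sec-disclosure} gives
\[
g = a_1 a_2 a_3 \prod_{i<j} (\lambda_j - \lambda_i) \neq 0,
\]
since the eigenvalues are distinct. Alternatively one can check directly that $(a, Qa, Q^2 a)$ is a Vandermonde-type matrix with nonzero determinant. The higher bands $f_3, \dots, f_L$ are arbitrary, so $g \neq 0$ somewhere on $U_L$.

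\textbf{Freeness.} Suppose $R \cdot c = c$ with $c \in \Omega_L$. Then $Ra = a$ and $RQR^{\top} = Q$, that is, $RQ = QR$. Hence
\[
R(Q^k a) = Q^k R a = Q^k a \quad \text{for } k = 0, 1, 2.
\]
Since $g(c) \neq 0$, the vectors $a, Qa, Q^2 a$ form a basis of $\R^3$. So $R$ fixes a basis, and $R = I$. The stabilizer is trivial, and the action on $\Omega_L$ is free.

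None of these steps is a real obstacle. The only place where some care is needed is density, where one must exhibit an explicit point with $g \neq 0$ and cite the standard fact that a nonzero polynomial on $\R^n$ cannot vanish on an open set.
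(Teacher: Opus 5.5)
Your proof is correct. Openness, invariance, and density match the paper's argument, including the witness $a=(1,1,1)$, $Q=\diag(1,0,-1)$, where the determinant is $-2$.

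The two proofs differ in the freeness step. The paper goes through the factorization $\det(a,Qa,Q^2a)=a_1a_2a_3\prod_{i<j}(\lambda_j-\lambda_i)$. From it the paper concludes that $Q$ has distinct eigenvalues and that every eigen-coordinate $a_i$ is nonzero. Then $RQR^{\top}=Q$ forces $R$ to preserve each eigenline, so $R$ is diagonal with entries $\pm1$ in that basis, and $Ra=a$ forces every entry to be $1$.

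You instead read $\det(a,Qa,Q^2a)\neq0$ directly as the statement that $a$ is a cyclic vector for $Q$. Since $R$ commutes with $Q$ and fixes $a$, it fixes the basis $a,Qa,Q^2a$ and so is the identity. Your route is shorter and more elementary. It needs neither a diagonalization of $Q$ nor the factorization identity, which the paper states in the section on disclosure without proof. The paper's route extracts the spectral data (distinct eigenvalues, nonzero $a_i$) that also drive the explicit reconstruction on $U_2$, so it ties the lemma to that discussion.
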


\begin{proof}
For $R \in \SO(3)$,
\[
\det ( R a, R Q R^\top R a, R Q^2 R^\top R a ) = \det R \cdot \det ( a, Q a, Q^2 a ) = \det ( a, Q a, Q^2 a ) .
\]
So $\Omega_L$ is invariant. The polynomial $\det ( a, Q a, Q^2 a )$ is not zero: for $a = (1, 1, 1)$ and $Q = \diag (1, 0, -1)$ it equals $-2$. So $\Omega_L$ is the complement of the zero set of a nonzero polynomial, and it is open and dense. Let $c \in \Omega_L$, and let $\lambda_i$ and $a_i$ be as in \cref{sec-disclosure}. The identity $\det ( a, Q a, Q^2 a ) = a_1 a_2 a_3 \prod_{i<j} ( \lambda_j - \lambda_i )$ shows that the $\lambda_i$ are distinct and that every $a_i$ is nonzero. Let $R \cdot c = c$. Then $R Q R^\top = Q$, so $R$ preserves each eigenline of $Q$. In the eigenbasis $R$ is diagonal with entries $\pm 1$. Since $R a = a$ and every $a_i$ is nonzero, all entries equal $1$. Hence $R = I$.
\end{proof}

\begin{thm}\label[thm]{thm-fiber}
Let $L \geq 2$ and let $\Phi \colon U_L \to \R^N$ be a descriptor. Let $r$ be the maximum over $U_L$ of the rank of $d \Phi$, and let $U' = \{ c \in \Omega_L : \rank d \Phi_c = r \}$. Then the following hold.
\begin{enumerate}
\item The set $U'$ is open, dense, and $\SO(3)$-invariant, and $r \leq \min \{ N, n - 3 \}$.
\item For every $c \in U'$ the set $M_c = \Phi^{-1} ( \Phi (c) ) \cap U'$ is an $\SO(3)$-invariant embedded submanifold of $U_L$ of dimension $n - r$, and $M_c / \SO(3)$ is a smooth manifold of dimension $n - 3 - r$.
\item If $r < n - 3$, then $r_{\Phi} ( c ; K ) > 0$ for every $c \in U'$ and every compact neighborhood $K$ of $c$.
\item If $r = n - 3$, then $M_c$ is a finite union of $\SO(3)$-orbits for every $c \in U'$.
\end{enumerate}
\end{thm}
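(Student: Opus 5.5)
The proof will be a constant-rank argument on the free locus $\Omega_L$ of \cref{lem-free}, combined with a count of orbit directions. For (1), note first that $c \mapsto \rank d\Phi_c$ is lower semicontinuous. Since $r$ is a maximum, the set $\{\rank d\Phi_c = r\}$ is the complement of the common zero set of the $r\times r$ minors of the Jacobian. These minors are polynomials, not all identically zero, so the set is Zariski open and nonempty, hence open and dense. Intersecting with the open dense set $\Omega_L$ gives $U'$ open and dense. Invariance comes from differentiating $\Phi(R\cdot x)=\Phi(x)$, which gives $d\Phi_{R\cdot c}\circ\rho(R)=d\Phi_c$, so the rank is constant along orbits. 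The bound $r\le N$ is trivial. For $r\le n-3$, differentiate along one-parameter subgroups: $d\Phi_c$ vanishes on the tangent space $\mathfrak{so}(3)\cdot c$ of the orbit. On $\Omega_L$ the action is free, so the orbit map $R\mapsto R\cdot c$ is an injective immersion (its differential is injective at $I$ because the stabilizer is trivial, hence discrete, and equivariance carries this to every point). The kernel of $d\Phi_c$ therefore has dimension at least $3$ at points of $\Omega_L$. Since $U'\subset\Omega_L$ is nonempty and attains rank $r$, we get $r\le n-3$.

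For (2), apply the constant rank theorem on the open set $U'$. There $\Phi$ has constant rank $r$, so each level set $\Phi^{-1}(\Phi(c))\cap U'$ is an embedded submanifold of dimension $n-r$. It is invariant because both $\Phi$ and $U'$ are. To obtain the quotient, use that $\SO(3)$ is compact and acts freely and smoothly on $M_c$. The quotient manifold theorem (properness is automatic for compact groups) then makes $M_c/\SO(3)$ a smooth manifold of dimension $n-r-3$.

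For (3), suppose $r<n-3$. Then $M_c/\SO(3)$ has positive dimension, so every neighborhood of $c$ in $M_c$ contains points $c'$ whose orbit differs from the orbit of $c$. Otherwise a neighborhood of $c$ in $M_c$ would lie in the $3$-dimensional orbit, which is impossible since $\dim M_c=n-r>3$. Choose such $c'\in K$. Orbits are compact, hence closed, so $\delta(c,c')>0$ by the properties of $\delta$ recalled in \cref{sec-radii}. Since $\Phi(c')=\Phi(c)$, this gives $r_\Phi(c;K)\ge\delta(c,c')>0$.

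For (4), take $r=n-3$. Then $M_c$ has dimension $3$, and each orbit in it is a compact $3$-dimensional embedded submanifold. An injective immersion of a compact manifold is an embedding, and an equidimensional one is open by invariance of domain, so each orbit is open and closed in $M_c$. Hence $M_c$ is a disjoint union of orbits, and the quotient is $0$-dimensional. \textbf{The main obstacle is finiteness}, because $M_c$ need not be compact, having been cut out by the open condition defining $U'$. I would argue with semialgebraic geometry. The fiber $\Phi^{-1}(\Phi(c))$ is real algebraic and $U'$ is semialgebraic, so $M_c$ is semialgebraic and has finitely many connected components. Each orbit is connected (as an image of $\SO(3)$) and is open and closed in $M_c$, so each orbit is a connected component. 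Therefore there are only finitely many orbits.
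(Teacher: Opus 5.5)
Your proposal is correct and follows the paper's proof essentially step for step. The shared steps are: the Zariski-open maximal-rank set intersected with $\Omega_L$; the orbit tangent directions lying in $\ker d\Phi_c$ on the free locus; the constant-rank level set and quotient manifold theorems; and, for (4), orbits as open-and-closed connected pieces of a semialgebraic set with finitely many components. The only differences are cosmetic: you obtain density by intersecting two open dense sets rather than via irreducibility of $U_L$, and in (3) you use a dimension count where the paper uses a measure-zero argument.
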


\begin{proof}
(1) If $r = 0$, then $\Phi$ is constant, the set where $d \Phi$ has rank $r$ is $U_L$, and $U' = \Omega_L$. Assume $r > 0$. The Jacobian of $\Phi$ has rank at most $r$ everywhere. So the set where it has rank $r$ is the complement of the common zero set of its $r \times r$ minors. These minors are polynomials, and one of them is not identically zero. Hence this set is a nonempty Zariski open subset of $U_L$. By \cref{lem-free} so is $\Omega_L$. Their intersection $U'$ is nonempty because $U_L$ is irreducible, so it is open and dense. Invariance of $\Phi$ gives $d \Phi_{R \cdot c} \circ R = d \Phi_c$. So the rank of $d \Phi$ is constant on orbits, and $U'$ is invariant. Clearly $r \leq N$. Let $c \in U'$. The action is free at $c$, so the orbit $\SO(3) \cdot c$ is an embedded submanifold of dimension $3$. The map $\Phi$ is constant on it, so its tangent space at $c$ lies in the kernel of $d \Phi_c$. Hence $r \leq n - 3$.

(2) The map $\Phi$ has constant rank $r$ on the open set $U'$. By the constant-rank level set theorem in \cite{Lee13}, $M_c$ is a properly embedded submanifold of $U'$ of dimension $n - r$. It is invariant because $\Phi$ and $U'$ are. The compact group $\SO(3)$ acts smoothly and freely on $M_c$. By the quotient manifold theorem in \cite{Lee13}, $M_c / \SO(3)$ is a smooth manifold of dimension $n - r - 3$.

(3) Let $r < n - 3$, and let $K$ be a compact neighborhood of $c$. Choose $\eta > 0$ such that the closed ball of radius $\eta$ about $c$ lies in $K$. The intersection $\mathcal O$ of $M_c$ with the open ball of radius $\eta$ about $c$ is a nonempty open subset of $M_c$. It has dimension $n - r > 3$. The orbit $\SO(3) \cdot c$ is a submanifold of dimension $3$, so it has measure zero in $M_c$ and does not contain $\mathcal O$. Choose $c' \in \mathcal O$ outside $\SO(3) \cdot c$. Then $c' \in K$, $\Phi (c') = \Phi (c)$, and $\delta ( c, c' ) > 0$. Hence $r_{\Phi} ( c ; K ) \geq \delta ( c, c' ) > 0$.

(4) Let $r = n - 3$. Then $M_c$ is a manifold of dimension $3$. Every orbit in $M_c$ is a compact embedded submanifold of dimension $3$, so it is open and closed in $M_c$. Since $\SO(3)$ is connected, the orbits in $M_c$ are its connected components. The set $U'$ is semialgebraic: it is the intersection of $\Omega_L$ with the set where the sum of the squares of the $r \times r$ minors is nonzero. So $M_c$ is semialgebraic, and it has finitely many connected components; see \cite{BochnakCosteRoy98}. Hence $M_c$ is a finite union of orbits.
\end{proof}

We call $U'$ the \textbf{free maximal-rank locus} of $\Phi$. For $c \in U'$ we call $M_c$ the \textbf{regular fiber} of $\Phi$ through $c$.

\begin{cor}\label[cor]{cor-count}
Let $L \geq 2$ and let $\Phi \colon U_L \to \R^N$ be a descriptor with $N < n - 3$. Then $\Phi$ does not separate orbits. Moreover, there is an open dense set $U' \subset U_L$ such that $r_{\Phi} ( c ; K ) > 0$ for every $c \in U'$ and every compact neighborhood $K$ of $c$.
\end{cor}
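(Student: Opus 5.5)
The corollary follows by reducing to \cref{thm-fiber}. Let $r$ be the maximum rank of $d\Phi$ over $U_L$. By \cref{thm-fiber}(1), $r \leq \min\{N, n-3\}$. The hypothesis $N < n-3$ then gives the strict inequality $r < n-3$, so we are in the situation of \cref{thm-fiber}(3). I take $U'$ to be the free maximal-rank locus of $\Phi$. It is open and dense by \cref{thm-fiber}(1). Part (3) then gives $r_{\Phi}(c;K) > 0$ for every $c \in U'$ and every compact neighborhood $K$ of $c$, which is the second assertion.

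For non-separation, I fix any $c \in U'$ and any compact neighborhood $K$ of $c$, for instance a closed ball about $c$. Since $r_{\Phi}(c;K) > 0$, there is a point $c' \in K$ with $\Phi(c') = \Phi(c)$ and $\delta(c,c') > 0$. By the properties of the orbit distance recalled in \cref{sec-radii}, $\delta(c,c') > 0$ means $c' \notin \SO(3)\cdot c$. So the fiber of $\Phi$ through $c$ contains at least two orbits, and $\Phi$ does not separate orbits in the sense of \cref{def-descriptor}. Alternatively, I could use \cref{prop-disclosure}(1), which shows that a separating descriptor has $r_\Phi(c;K)=0$ on every compact semialgebraic $K$; a closed ball is such a set, so this would be a contradiction.

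There is no real obstacle here. The one point to check is the degenerate case $r=0$, where $\Phi$ is constant. \Cref{thm-fiber} already covers it, since its proof takes $U'=\Omega_L$ in that case. I will also note that the hypothesis $L\ge2$ is needed so that \cref{lem-free} and \cref{thm-fiber} apply.
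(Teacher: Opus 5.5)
Your proof is correct and matches the paper's own argument. Both proofs bound $r \leq N < n-3$ via \cref{thm-fiber}(1) and apply \cref{thm-fiber}(3) on the free maximal-rank locus. Both then take, from the positive radius, a point of the fiber outside $\SO(3)\cdot c$, which shows non-separation.
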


\begin{proof}
By \cref{thm-fiber}, $r \leq N < n - 3$. Part (3) of \cref{thm-fiber} gives the second claim. It also gives a point of a fiber outside the orbit of $c$, which proves the first claim.
\end{proof}

Only the number of components of $\Phi$ enters the corollary; the components themselves play no role. A recognizer built on fewer than $n - 3$ invariants admits decoys near every generic enrolled surface, at every tolerance.

\begin{cor}[Combined releases]\label[cor]{cor-composition}
Let $L \geq 2$, let $\Phi_1,\dots,\Phi_m$ be descriptors on $U_L$, and let $\Psi=(\Phi_1,\dots,\Phi_m)$. Then
\[
\Psi^{-1}(\Psi(c))=\bigcap_{j=1}^m\Phi_j^{-1}(\Phi_j(c)).
\]
On the open dense free maximal-rank locus for $\Psi$, the regular fiber modulo rotations has dimension $n-3-r_{\Psi}$, where
\[
r_{\Psi}=\max_{c\in U_L}\rank
\begin{pmatrix}
d(\Phi_1)_c\\[-1mm] \vdots\\[-1mm]d(\Phi_m)_c
\end{pmatrix}.
\]
For separately specified observation error budgets, the jointly consistent set is the intersection of the individual consistent sets, and when nonempty, its diameter cannot exceed any individual diameter.
\end{cor}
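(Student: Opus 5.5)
The corollary has three assertions, and I would prove them in order, each from a definition or from \cref{thm-fiber}.

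\textbf{Fiber identity.} This is a set-theoretic unpacking of \cref{def-descriptor}. Since $\Psi(x)=(\Phi_1(x),\dots,\Phi_m(x))$ in $\R^{N_1}\times\dots\times\R^{N_m}$, we have $\Psi(x)=\Psi(c)$ if and only if $\Phi_j(x)=\Phi_j(c)$ for every $j$. I would also note that $\Psi$ is itself a descriptor: a tuple of $\SO(3)$-invariant polynomial maps is an $\SO(3)$-invariant polynomial map into $\R^{N_1+\dots+N_m}$.

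\textbf{Dimension count.} I would apply \cref{thm-fiber} directly to $\Psi$. The differential of $\Psi$ at $c$ is the stacked matrix of the $d(\Phi_j)_c$, so its maximal rank over $U_L$ is exactly the displayed $r_\Psi$. Part (1) of \cref{thm-fiber} gives the open dense invariant locus $U'$ for $\Psi$. Part (2) gives, for $c\in U'$, that $M_c/\SO(3)$ is a smooth manifold of dimension $n-3-r_\Psi$. No further work is needed beyond this identification of the Jacobian.

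\textbf{Consistent sets.} Here the main point is to fix the convention, which I expect to be the only real obstacle. Separate budgets mean weights $W_j$, tolerances $\varepsilon_j$ and observations $y=(y_1,\dots,y_m)$. I would define the joint consistent set as
\[
\mathcal F=\{x\in K:\abs{W_j(\Phi_j(x)-y_j)}\leq\varepsilon_j \text{ for all } j\}.
\]
With this convention, $\mathcal F=\bigcap_j\mathcal F^{\varepsilon_j}_{\Phi_j,W_j}(y_j;K)$ holds by definition. The point to flag is that a single joint budget with one block-diagonal $W$ and one Euclidean tolerance would give a different set, namely an ellipsoidal rather than a box condition; the statement concerns the separate-budget version. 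When $\mathcal F$ is nonempty, each individual set is nonempty, since it contains $\mathcal F$. Each set is compact, being closed in $K$. The maximum of $\delta$ over pairs in the smaller set $\mathcal F$ is then at most the maximum over pairs in any containing set, which gives $D(\mathcal F)\leq D^{\varepsilon_j}_{\Phi_j,W_j}(y_j;K)$ for each $j$.
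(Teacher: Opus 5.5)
Your proposal is correct and follows the paper's own proof. The paper also obtains the fiber identity by unpacking equality of the concatenated values, gets the dimension count by applying \cref{thm-fiber} to $\Psi$ (whose differential is the stacked Jacobian), and derives the diameter bound from set inclusion and monotonicity of the maximum. Your remark that one block-diagonal $W$ with a single Euclidean tolerance would give a different consistent set usefully makes explicit the ``separately specified budgets'' convention, which the paper leaves implicit.
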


\begin{proof}
Equality of the concatenated values is equivalent to equality of each component value, and the displayed matrix is $d\Psi_c$. Apply \cref{thm-fiber}. The last assertion follows from set inclusion and from the monotonicity of the maximum that defines the diameter.
\end{proof}

Thus separate releases about the same tuple must be audited jointly. Their ranks need not add, because their differentials can be dependent. The conclusion applies to releases about one underlying tuple. It does not apply to measurements of unrelated objects.

\section{Band powers}\label{sec-powers}

The band powers $P_l (c) = \norm{f_l}^2$ form the descriptor $P = ( P_1, \dots, P_L )$ of \cite{KazhdanFunkhouserRusinkiewicz03}. The fiber of $P$ through $c$ is a product of spheres. It contains $( R_1 \cdot f_1, \dots, R_L \cdot f_L )$ for all $R_1, \dots, R_L \in \SO(3)$. The gradient of $P_l$ is a nonzero multiple of $f_l$, so $P$ has rank $L$ wherever every $f_l$ is nonzero. By \cref{thm-fiber}, for $L \geq 2$ the fiber of $P$ through a generic point is, modulo rotations, a manifold of dimension $L^2 + L - 3$.

On $\Harm_2$ we use the matrix $Q$ of \cref{sec-disclosure} and the Frobenius norm $\norm{Q}$. The invariant inner product on $\Harm_2$ is unique up to a positive factor. Any other choice multiplies all radii on $\Harm_2$ by one constant. In the next two statements the definitions of \cref{sec-radii} are applied with $\Harm_2$ in place of $U_L$; the orbit distance on $\Harm_2$ is the restriction of the orbit distance on $U_2$ to the pairs with $a = a' = 0$.

\begin{prop}\label[prop]{prop-quadrupole}
Let $r > 0$ and let $Q \in \Harm_2$ with $\norm{Q} = r$. Let $\theta \in [0, \pi/3]$ satisfy $\cos 3 \theta = 3 \sqrt{6} \, \det Q / r^3$. Let $K \subset \Harm_2$ be compact and contain the sphere $\{ Q' : \norm{Q'} = r \}$. Then
\[
r_{P_2} ( Q ; K ) = 2 r \sin \frac{ \max \{ \theta, \pi/3 - \theta \} }{2} .
\]
In particular $\frac{\sqrt{6} - \sqrt{2}}{2} \, r \leq r_{P_2} ( Q ; K ) \leq r$. The lower bound is attained if and only if $\det Q = 0$. The upper bound is attained if and only if $Q$ has a repeated eigenvalue.
\end{prop}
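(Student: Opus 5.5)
The plan is to reduce the radius computation to a planar problem about sorted spectra. First I identify the fiber. Since $P_2(Q') = \norm{Q'}^2$, the fiber of $P_2$ through $Q$ is the sphere $S_r = \{ Q' \in \Harm_2 : \norm{Q'} = r \}$. By hypothesis $S_r \subset K$, so
\[
r_{P_2}(Q;K) = \max_{Q' \in S_r} \delta(Q,Q').
\]

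Next I compute $\delta$ via \cite{HoffmanWielandt53}. For real symmetric $A,B$ with eigenvalues sorted decreasingly, $\lambda(A)$ and $\lambda(B)$, one has
\[
\min_{R \in \Or(3)} \norm{A - R B R^\top} = \abs{\lambda(A) - \lambda(B)},
\]
and the minimum is attained by aligning eigenbases. Conjugation by $R$ and by $-R$ coincide, so the minimum over $\SO(3)$ equals the minimum over $\Or(3)$. Hence $\delta(Q,Q') = \abs{\lambda(Q) - \lambda(Q')}$. The problem is now: over sorted vectors $\mu$ in the traceless plane with $\abs{\mu} = r$, maximize $\abs{\lambda - \mu}$. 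Each such $\mu$ is realized by $\diag(\mu) \in S_r$, so the maximum is attained in $K$.

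To parametrize, I write traceless vectors of norm $r$ as
\[
r\sqrt{2/3}\,\bigl(\cos\varphi,\ \cos(\varphi - 2\pi/3),\ \cos(\varphi + 2\pi/3)\bigr).
\]
This is a circle of radius $r$ in the plane $\sum x_i = 0$, traversed at unit angular speed. The sorted vectors form one Weyl chamber, an arc of angle $\pi/3$; I take it to be $\varphi \in [0,\pi/3]$ after relabeling. The identity $\cos\varphi\cos(\varphi - 2\pi/3)\cos(\varphi + 2\pi/3) = \tfrac14 \cos 3\varphi$ gives
\[
\det = r^3 (2/3)^{3/2}\tfrac14 \cos 3\varphi = \frac{r^3 \cos 3\varphi}{3\sqrt6}.
\]
So the point $\lambda(Q)$ sits at angle $\theta$ on the arc, with $\theta$ as in the statement. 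The one place I must take care is checking that this labeling matches the chosen sort order; reflecting the arc sends $\theta \mapsto \pi/3 - \theta$, and the formula is symmetric under that change.

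The chord between points of the circle at angular separation $\psi \le \pi$ has length $2r\sin(\psi/2)$, which is increasing in $\psi$. So the farthest point of the arc from $\lambda(Q)$ is an endpoint, at angular separation $\max\{\theta, \pi/3 - \theta\}$. This gives the formula for $r_{P_2}(Q;K)$. The maximum lies in $[\pi/6, \pi/3]$, and
\[
2\sin(\pi/12) = \tfrac{\sqrt6 - \sqrt2}{2}, \qquad 2\sin(\pi/6) = 1,
\]
which are the stated bounds. The lower bound is attained exactly when $\theta = \pi/6$, that is $\cos 3\theta = 0$, that is $\det Q = 0$. The upper bound is attained exactly when $\theta \in \{0, \pi/3\}$, the walls of the chamber, where two of the cosines coincide, that is when $Q$ has a repeated eigenvalue.
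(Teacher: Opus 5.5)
Your proposal is correct and follows the paper's proof essentially step for step. Hoffman--Wielandt, with the remark that $\pm R$ give the same conjugation, reduces $\delta$ to the distance between sorted spectra; the parametrization of the traceless circle together with $\prod_k \cos(\varphi + 2\pi k/3) = \tfrac14 \cos 3\varphi$ places $\lambda(Q)$ at angle $\theta$; and monotonicity of the chord length $2r\sin(\psi/2)$ finishes. Your ``after relabeling'' hedge is unnecessary, since the coordinates of your parametrization are already in decreasing order exactly for $\varphi \in [0, \pi/3]$, which the paper checks directly.
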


\begin{proof}
For $\varphi \in \R$ put
\[
\mu ( \varphi ) = r \sqrt{2/3} \, \bigl( \cos \varphi , \ \cos ( 2\pi/3 - \varphi ) , \ \cos ( 2\pi/3 + \varphi ) \bigr) .
\]
The identity $\sum_{k=0}^{2} \cos ( x + 2 \pi k / 3 ) \cos ( y + 2 \pi k / 3 ) = \frac{3}{2} \cos ( x - y )$ gives $\ip{\mu ( \varphi )}{\mu ( \varphi' )} = r^2 \cos ( \varphi - \varphi' )$. Hence
\[
\norm{\mu ( \varphi ) - \mu ( \varphi' )} = 2 r \sin \frac{ \abs{\varphi - \varphi'} }{2} , \qquad \abs{\varphi - \varphi'} \leq \pi .
\]
So $\mu$ traces the circle of radius $r$ in the plane of vectors with coordinate sum zero, at unit angular speed. The coordinates of $\mu ( \varphi )$ are in decreasing order exactly on the arc $\varphi \in [0, \pi/3]$. So the spectra, in decreasing order, of the matrices $Q' \in \Harm_2$ with $\norm{Q'} = r$ are exactly the vectors $\mu ( \varphi )$ with $\varphi \in [0, \pi/3]$.

Let $\lambda (A)$ denote the spectrum of a symmetric matrix $A$ in decreasing order. By the Hoffman--Wielandt inequality in \cite{HoffmanWielandt53} and the rearrangement inequality, $\norm{Q - U Q' U^\top} \geq \norm{\lambda (Q) - \lambda (Q')}$ for every $U \in \Or(3)$. Equality holds when $U$ carries an orthonormal eigenbasis of $Q'$ to one of $Q$ with the eigenvalues in the same order. Replacing $U$ by $-U$ does not change $U Q' U^\top$, so the minimum over $\SO(3)$ equals the minimum over $\Or(3)$. Hence $\delta ( Q, Q' ) = \norm{\lambda (Q) - \lambda (Q')}$.

Write $\lambda (Q) = \mu ( \theta' )$ with $\theta' \in [0, \pi/3]$. The identity $\prod_{k=0}^{2} \cos ( \theta' + 2 \pi k / 3 ) = \frac{1}{4} \cos 3 \theta'$ gives
\[
\det Q = r^3 \Bigl( \frac{2}{3} \Bigr)^{3/2} \frac{ \cos 3 \theta' }{4} = \frac{ r^3 \cos 3 \theta' }{ 3 \sqrt{6} } .
\]
Since $3 \theta' \in [0, \pi]$, this determines $\theta'$, and $\theta' = \theta$. The fiber of $P_2$ through $Q$ is the sphere $\{ \norm{Q'} = r \}$, which lies in $K$. Therefore
\[
r_{P_2} ( Q ; K ) = \max_{ \varphi \in [0, \pi/3] } 2 r \sin \frac{ \abs{\theta - \varphi} }{2} = 2 r \sin \frac{ \max \{ \theta, \pi/3 - \theta \} }{2} ,
\]
because $\sin$ is increasing on $[0, \pi/6]$. The quantity $\max \{ \theta, \pi/3 - \theta \}$ lies in $[\pi/6, \pi/3]$. It equals $\pi/6$ exactly when $\theta = \pi/6$, that is, when $\det Q = 0$. It equals $\pi/3$ exactly when $\theta \in \{ 0, \pi/3 \}$, that is, when two eigenvalues of $Q$ coincide. The values $2 r \sin ( \pi/12 ) = \frac{\sqrt{6} - \sqrt{2}}{2} \, r$ and $2 r \sin ( \pi/6 ) = r$ complete the proof.
\end{proof}

Adjoining $\det Q$ to $P_2$ gives a separating descriptor on $\Harm_2$. The numbers $\tr Q^2$ and $\det Q$ determine the spectrum of $Q$, and symmetric matrices with the same spectrum are conjugate by a rotation. So the radius of $( P_2, \det Q )$ vanishes at every point. The invariant cubic forms on $\Harm_2$ are the multiples of $\det Q$. So $\det Q$ is, up to a nonzero factor, the bispectrum of the quadrupole with itself in the sense of \cite{Kakarala12}.

At positive tolerance the radius of the separating pair is again explicit. Knowledge of the power is side information, recorded by the set $K$.

\begin{cor}\label[cor]{cor-quadrupole-tolerance}
Let $r > 0$, let $S_r = \{ Q' \in \Harm_2 : \norm{Q'} = r \}$, and let $I = ( P_2, \det )$. Let $Q \in S_r$, let $\theta$ be as in \cref{prop-quadrupole}, let $\varepsilon \geq 0$, and put $\eta = 3 \sqrt{6} \, \varepsilon / r^3$ and
\[
\begin{split}
\varphi_- & = \tfrac{1}{3} \arccos \bigl( \min \{ 1, \cos 3 \theta + \eta \} \bigr) , \\
\varphi_+ & = \tfrac{1}{3} \arccos \bigl( \max \{ -1, \cos 3 \theta - \eta \} \bigr) .
\end{split}
\]
Then
\[
r^{\varepsilon}_{I} ( Q ; S_r ) = 2 r \sin \frac{ \max \{ \theta - \varphi_- , \ \varphi_+ - \theta \} }{2} .
\]
As $\varepsilon \to 0$, $r^{\varepsilon}_{I} ( Q ; S_r ) = \frac{ r \eta }{ 3 \sin 3 \theta } + O ( \eta^2 )$ if $0 < \theta < \pi/3$, and $r^{\varepsilon}_{I} ( Q ; S_r ) = \frac{ r \sqrt{2 \eta} }{3} + O ( \eta^{3/2} )$ if $\theta \in \{ 0, \pi/3 \}$.
\end{cor}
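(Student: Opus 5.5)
On $S_r$ the power coordinate of $I$ is constant, so for $Q' \in S_r$ we have $\abs{I(Q') - I(Q)} = \abs{\det Q' - \det Q}$. The plan is therefore to reduce everything to the one-parameter description from the proof of \cref{prop-quadrupole}. That proof shows that every $Q' \in S_r$ has decreasing spectrum $\mu(\varphi)$ for a unique $\varphi \in [0,\pi/3]$. In terms of $\varphi$,
\[
\det Q' = \frac{r^3 \cos 3\varphi}{3\sqrt6}, \qquad \delta(Q,Q') = \norm{\mu(\theta)-\mu(\varphi)} = 2r\sin\frac{\abs{\theta-\varphi}}{2}.
\]
Conversely, every $\varphi \in [0,\pi/3]$ is realized by some $Q' \in S_r$. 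The tolerance condition thus becomes $\abs{\cos 3\varphi - \cos 3\theta} \leq \eta$ with $\eta = 3\sqrt6\,\varepsilon/r^3$.

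First step: describe the feasible set. The map $\varphi \mapsto \cos 3\varphi$ is a continuous strictly decreasing bijection $[0,\pi/3] \to [-1,1]$. Hence the condition $\cos 3\theta - \eta \leq \cos 3\varphi \leq \cos 3\theta + \eta$, intersected with $[-1,1]$, is equivalent to $\varphi \in [\varphi_-,\varphi_+]$. The clamps $\min\{1,\cdot\}$ and $\max\{-1,\cdot\}$ in the definitions account exactly for the intersection with $[-1,1]$. Clearly $\varphi_- \leq \theta \leq \varphi_+$.

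Second step: maximize the distance. On this interval the distance is $2r\sin(\abs{\theta-\varphi}/2)$ with $\abs{\theta-\varphi} \leq \pi/3$. Since $\sin$ is increasing on $[0,\pi/6]$, the maximum is attained at whichever endpoint is farther from $\theta$. This gives the displayed formula for $r^{\varepsilon}_I(Q;S_r)$.

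Third step, the asymptotics, which I expect to be the only real work. For $0<\theta<\pi/3$ we have $\abs{\cos 3\theta}<1$, so for small $\eta$ neither clamp is active. A Taylor expansion of $\arccos$ at the interior point $\cos 3\theta$, with derivative $-1/\sin 3\theta$, gives
\[
\varphi_\pm - \theta = \pm\frac{\eta}{3\sin 3\theta} + O(\eta^2).
\]
Both one-sided gaps therefore agree to first order, and their maximum is $\eta/(3\sin3\theta)+O(\eta^2)$. Then $2r\sin(x/2) = rx + O(x^3)$ yields the linear rate, with constants depending on $\theta$, as the statement permits.

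For $\theta = 0$ we have $\cos 3\theta = 1$, so $\varphi_- = 0 = \theta$. The other endpoint is $\varphi_+ = \tfrac13\arccos(1-\eta)$, and
\[
\arccos(1-\eta) = \sqrt{2\eta}\,\bigl(1+O(\eta)\bigr),
\]
which follows from $1-\cos x = x^2/2 - x^4/24+\cdots$. Hence $\varphi_+ = \sqrt{2\eta}/3 + O(\eta^{3/2})$, and the sine expansion gives $r\sqrt{2\eta}/3 + O(\eta^{3/2})$. The case $\theta = \pi/3$ is symmetric: $\varphi_+ = \pi/3$ and $\theta - \varphi_- = \tfrac13\bigl(\pi - \arccos(-1+\eta)\bigr) = \tfrac13\arccos(1-\eta)$.

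The main care needed is to check that the Taylor remainders are genuinely $O(\eta^2)$ and $O(\eta^{3/2})$. This follows from the analyticity of $\arccos$ at interior points and from the square-root expansion of $\arccos$ at $\pm1$.
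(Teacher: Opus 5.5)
Your proposal is correct and follows the paper's proof essentially step for step: you reduce to the spectral angle $\varphi\in[0,\pi/3]$ via the parametrization $\mu(\varphi)$ from \cref{prop-quadrupole}, use that $\cos 3\varphi$ is decreasing to identify the accepted set with $[\varphi_-,\varphi_+]$, take the endpoint farther from $\theta$, and expand $\arccos$ at an interior point and at $\pm1$. Your remarks on the clamps, on every $\varphi$ being realized in $S_r$, and on the identity $\arccos(-1+\eta)=\pi-\arccos(1-\eta)$ for $\theta=\pi/3$ make explicit what the paper leaves implicit.
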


\begin{proof}
For $Q' \in S_r$ write $\lambda (Q') = \mu ( \varphi )$ with $\varphi \in [0, \pi/3]$, as in the proof of \cref{prop-quadrupole}. The power is constant on $S_r$, and $\det Q' = r^3 \cos 3 \varphi / ( 3 \sqrt{6} )$. So $Q'$ is accepted if and only if $\abs{ \cos 3 \varphi - \cos 3 \theta } \leq \eta$. Since $\cos 3 \varphi$ decreases on $[0, \pi/3]$, this holds if and only if $\varphi \in [ \varphi_-, \varphi_+ ]$. By the proof of \cref{prop-quadrupole}, $\delta ( Q, Q' ) = 2 r \sin ( \abs{ \theta - \varphi } / 2 )$. Its maximum over $[ \varphi_-, \varphi_+ ]$ is the stated value. For $0 < \theta < \pi/3$ the derivative of $\arccos$ at $\cos 3 \theta$ is $-1 / \sin 3 \theta$, which gives $\varphi_{\pm} = \theta \pm \eta / ( 3 \sin 3 \theta ) + O ( \eta^2 )$. For $\theta = 0$ we have $\varphi_- = 0$ and $\varphi_+ = \frac{1}{3} \arccos ( 1 - \eta ) = \frac{1}{3} \sqrt{2 \eta} + O ( \eta^{3/2} )$. The case $\theta = \pi/3$ is symmetric.
\end{proof}

The maximum over the fixed-power sphere is also explicit:
\begin{equation}\label{eq-quadrupole-uniform}
\max_{Q\in S_r}r^{\varepsilon}_{I}(Q;S_r)
=2r\sin\left(\frac16\arccos\bigl(1-\min\{\eta,2\}\bigr)\right).
\end{equation}
To see this, for $0\leq a\leq b\leq\pi$ the identity $\cos a-\cos b=2\sin((a+b)/2)\sin((b-a)/2)$ gives $\cos a-\cos b\geq1-\cos(b-a)$. Apply this with $a,b$ the ordered angles $3\theta,3\varphi$; equality is attained when one angle is an endpoint. Thus the uniform exponent $1/2$ is attained and cannot be increased. At a fixed spectrum with $0 < \theta < \pi/3$ the radius is linear to first order. These statements assume exact knowledge of the power and do not describe joint noise in both invariants.

\begin{prop}\label[prop]{prop-dipole-quadrupole}
Let $L = 2$ and use on $U_2$ the invariant norm $\norm{( a, Q )}^2 = \abs{a}^2 + \norm{Q}^2$. Let $r_1, r_2 > 0$, let $F = \{ ( a, Q ) : \abs{a} = r_1 , \ \norm{Q} = r_2 \}$ be the corresponding fiber of $P = ( P_1, P_2 )$, and put
\[
c_{\pm} = \Bigl( r_1 e_1 , \ \pm \frac{r_2}{\sqrt{6}} \diag ( 2, -1, -1 ) \Bigr) .
\]
Then $c_{\pm} \in F$ and
\[
\delta ( c_+ , c_- )^2 =
\begin{cases}
2 r_1^2 + r_2^2 - \dfrac{ r_1^4 }{ 3 r_2^2 } , & r_1^2 \leq 3 r_2^2 , \\[2mm]
4 r_2^2 , & r_1^2 \geq 3 r_2^2 .
\end{cases}
\]
Consequently $r_P ( c ; K ) \geq \frac{1}{2} \, \delta ( c_+ , c_- )$ for every $c \in F$ and every compact $K \supset F$.
\end{prop}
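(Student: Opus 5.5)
The plan is to compute the orbit distance directly, reducing the minimization over $\SO(3)$ to a minimization over one scalar. First, $c_{\pm} \in F$: the dipole part has norm $r_1$, and $\norm{\diag(2,-1,-1)}^2 = 6$, so each quadrupole part has norm $r_2$. Both are traceless symmetric, so they lie in $\Harm_2$. Write $A = e_1 e_1^\top$, so that $Q_+ = \frac{r_2}{\sqrt6}(3A - I)$ and $Q_- = -Q_+$. For $R \in \SO(3)$ put $u = R e_1$, a unit vector, and $t = \ip{e_1}{u} \in [-1,1]$. Then $R Q_- R^\top = -\frac{r_2}{\sqrt6}(3 u u^\top - I)$. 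The quantity to minimize is
\[
\norm{c_+ - R\cdot c_-}^2 = r_1^2 \abs{e_1 - u}^2 + \frac{r_2^2}{6}\,\norm{3A + 3B - 2I}^2, \qquad B = u u^\top .
\]
This depends on $R$ only through $u$, and every unit $u$ arises as $Re_1$ for some $R \in \SO(3)$. So the minimum over $\SO(3)$ equals the minimum over the sphere of $u$, and hence the minimum over $t \in [-1,1]$.

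Next, expand the Frobenius norm using $\tr A = \tr B = \tr A^2 = \tr B^2 = 1$, $\tr AB = t^2$ and $\tr I = 3$:
\[
\norm{3A+3B-2I}^2 = 9 + 9 + 18t^2 + 12 - 12 - 12 = 6 + 18 t^2 .
\]
The dipole term is $2 r_1^2 (1 - t)$, so
\[
\delta(c_+,c_-)^2 = \min_{t\in[-1,1]} g(t), \qquad g(t) = 2 r_1^2 (1-t) + r_2^2 (1 + 3t^2).
\]
The function $g$ is a convex quadratic with critical point $t_0 = r_1^2/(3 r_2^2) > 0$.

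If $r_1^2 \leq 3 r_2^2$, then $t_0 \in (0,1]$ and the minimum is
\[
g(t_0) = 2r_1^2 - \frac{2r_1^4}{3r_2^2} + r_2^2 + \frac{r_1^4}{3r_2^2} = 2 r_1^2 + r_2^2 - \frac{r_1^4}{3 r_2^2}.
\]
If $r_1^2 \geq 3 r_2^2$, then $g$ is decreasing on $[-1,1]$ and the minimum is $g(1) = 4 r_2^2$. The two formulas agree at $r_1^2 = 3 r_2^2$, which is a useful check.

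For the consequence, let $c \in F$ and let $K \supset F$ be compact. Both $c_+$ and $c_-$ lie in the fiber of $P$ through $c$ and in $K$. The triangle inequality for the metric $\delta$ of \cref{sec-radii} gives
\[
\delta(c_+,c_-) \leq \delta(c_+,c) + \delta(c,c_-) \leq 2\, r_P(c;K),
\]
by \cref{def-radius}.

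The only step needing care is the reduction of the minimization over $R$ to the scalar $t$. The key point is that both summands depend on $R$ only through $u = Re_1$, because $Q_{\pm}$ are invariant under rotations fixing $e_1$. The rest is the routine trace expansion above.
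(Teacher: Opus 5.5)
Your proposal is correct and follows the paper's proof step for step. Both arguments reduce to $u=Re_1$ and $t=\ip{u}{e_1}$, minimize the same convex quadratic $g(t)=2r_1^2(1-t)+r_2^2(1+3t^2)$ with the case split at $t_0=r_1^2/(3r_2^2)$, and finish with the same triangle inequality. The only cosmetic difference is how $\norm{3e_1e_1^\top+3uu^\top-2I}^2=6+18t^2$ is obtained: the paper reads it off from the eigenvalues $1\pm t,\,0$ of $e_1e_1^\top+uu^\top$, whereas you expand the trace.
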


\begin{proof}
Put $s = r_2 / \sqrt{6}$ and $E = 3 e_1 e_1^\top - I = \diag ( 2, -1, -1 )$, so $c_{\pm} = ( r_1 e_1 , \pm s E )$. Both points lie in $F$ because $\norm{E}^2 = 6$. Let $R \in \SO(3)$, put $u = R e_1$, and put $t = \ip{u}{e_1}$. Then
\[
\begin{split}
\norm{r_1 e_1 - R ( r_1 e_1 )}^2 & = 2 r_1^2 ( 1 - t ) , \\
s E - R ( - s E ) R^\top & = s \bigl( 3 e_1 e_1^\top + 3 u u^\top - 2 I \bigr) .
\end{split}
\]
The symmetric matrix $e_1 e_1^\top + u u^\top$ has eigenvalues $1 + t$, $1 - t$, and $0$. So the second matrix has eigenvalues $s ( 1 + 3 t )$, $s ( 1 - 3 t )$, and $- 2 s$, and its squared norm is $s^2 ( 6 + 18 t^2 ) = r_2^2 ( 1 + 3 t^2 )$. Hence
\[
\norm{c_+ - R \cdot c_-}^2 = g (t) , \qquad g (t) = 2 r_1^2 ( 1 - t ) + r_2^2 ( 1 + 3 t^2 ) .
\]
Every $t \in [-1, 1]$ occurs. The function $g$ is convex, with minimum at $t_0 = r_1^2 / ( 3 r_2^2 ) > 0$. If $t_0 \leq 1$, then $\delta ( c_+ , c_- )^2 = g ( t_0 ) = 2 r_1^2 + r_2^2 - r_1^4 / ( 3 r_2^2 )$. If $t_0 \geq 1$, then $g$ is decreasing on $[-1, 1]$, and $\delta ( c_+ , c_- )^2 = g (1) = 4 r_2^2$. For $c \in F$ the triangle inequality gives $\max \{ \delta ( c, c_+ ), \delta ( c, c_- ) \} \geq \frac{1}{2} \, \delta ( c_+ , c_- )$, and both $c_{\pm}$ lie in $F \subset K$.
\end{proof}

For the norm $w_1 \abs{a}^2 + w_2 \norm{Q}^2$, replace $r_1$ and $r_2$ by $\sqrt{w_1} \, r_1$ and $\sqrt{w_2} \, r_2$. The map $( a, Q ) \mapsto ( \sqrt{w_1} \, a, \sqrt{w_2} \, Q )$ is an equivariant isometry onto $U_2$ with the norm of \cref{prop-dipole-quadrupole}, and it maps fibers of $P$ to fibers of $P$.

\begin{figure}[ht]
\centering
\includegraphics[width=0.8\textwidth]{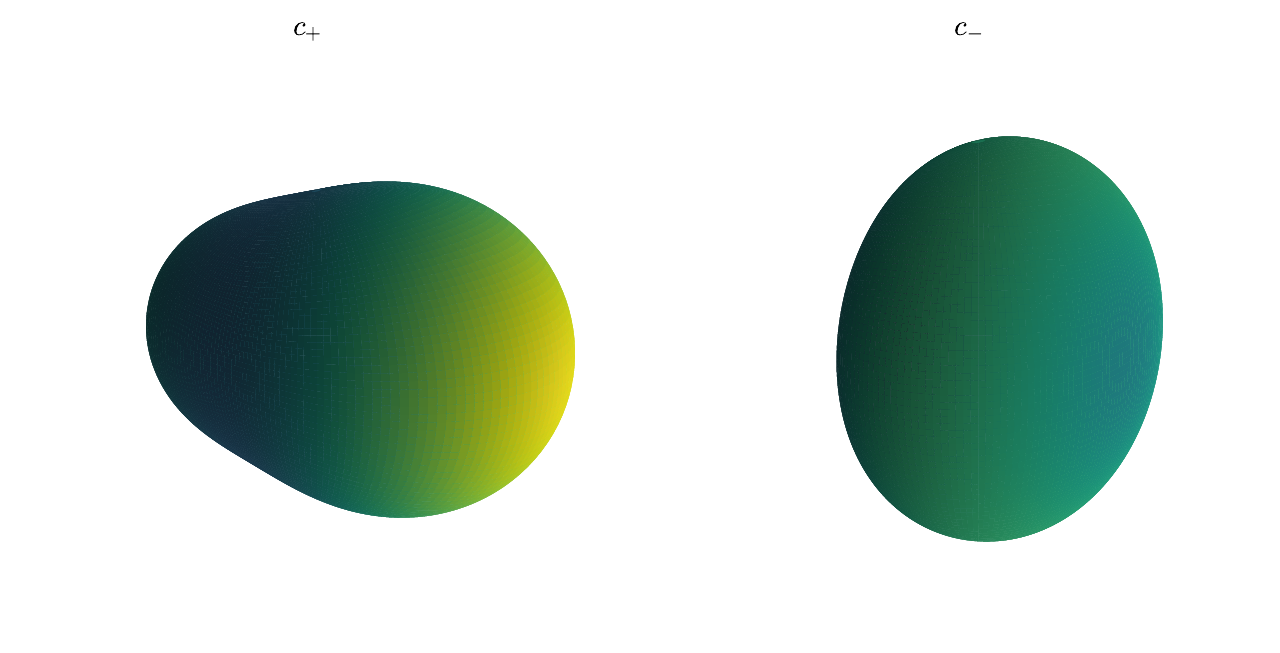}
\caption{The surfaces $c_+$ (left) and $c_-$ (right) of \cref{prop-dipole-quadrupole}, drawn as radial surfaces $\rho_0 + f_1 + f_2$ with $\rho_0 = 1$, $r_1 = 0.25$, and $r_2 = 0.35$, where $f_1 (p) = \ip{a}{p}$ and $f_2 (p) = p^\top Q p$. Their band powers coincide. Their orbit distance is $\delta ( c_+, c_- ) \approx 0.487$.}
\label{fig-power-decoy}
\end{figure}

\section{The spectral pool}\label{sec-pool}

For binary forms $F$ and $G$ of degrees $m$ and $m'$ in $( z_0, z_1 )$, the transvectant of order $k$ is
\[
( F, G )_k = \frac{ (m - k)! \, (m' - k)! }{ m! \, m'! } \sum_{i=0}^{k} (-1)^i \binom{k}{i} \frac{ \partial^k F }{ \partial z_0^{k-i} \, \partial z_1^{i} } \, \frac{ \partial^k G }{ \partial z_0^{i} \, \partial z_1^{k-i} } ,
\]
as in \cite[Eq.~(18)]{sh-181}. By \cite[Thm.~3.8]{sh-181}, restriction to the null cone gives an $\SL_2 ( \C )$-equivariant linear isomorphism $\Psi_l$ from $\Harm_l \otimes \C$ onto the binary forms of degree $2l$. Write $F_l = \Psi_l ( f_l )$ and
\[
\begin{split}
\cA (L) & = \{ ( l_1, l_2, l_3 ) \in \Z^3 : 1 \leq l_1 \leq l_2 \leq l_3 \leq L , \ l_3 \leq l_1 + l_2 , \ l_1 + l_2 + l_3 \ \text{even} \} , \\
\cA^- (L) & = \{ ( l_1, l_2, l_3 ) \in \Z^3 : 1 \leq l_1 < l_2 < l_3 \leq L , \ l_3 \leq l_1 + l_2 , \ l_1 + l_2 + l_3 \ \text{odd} \} .
\end{split}
\]
For $L \geq 2$ the pool $\Phi_L$ has the following components.
\begin{enumerate}
\item The powers $P_l = ( F_l, F_l )_{2l}$ for $1 \leq l \leq L$.
\item The bispectra $B_{l_1 l_2 l_3} = \bigl( ( F_{l_1}, F_{l_2} )_{l_1 + l_2 - l_3} , F_{l_3} \bigr)_{2 l_3}$ for $( l_1, l_2, l_3 ) \in \cA (L)$.
\item For $L \geq 3$, the invariants $J_4$, $J_6$, $J_{10}$ of the sextic $F_3$. They are obtained from its Igusa--Clebsch invariants by \cite[Eqs.~(20)--(21)]{sh-181}, $J_2 = I_2 / 8$, $J_4 = ( 4 J_2^2 - I_4 ) / 96$, $J_6 = ( 8 J_2^3 - 160 J_2 J_4 - I_6 ) / 576$, and $J_{10} = I_{10} / 4096$.
\end{enumerate}
The number of components is $N (2) = 4$ and $N (L) = L + \abs{\cA (L)} + 3$ for $L \geq 3$. The odd bispectra $B_{l_1 l_2 l_3}$ with $( l_1, l_2, l_3 ) \in \cA^- (L)$ are given by the same formula. In the symmetric $3j$ normalization, an invariant trilinear contraction changes by $(-1)^{l_1+l_2+l_3}$ under interchange of two factors; another nonzero normalization gives the same vanishing criterion. Thus an odd bispectrum with two equal bands vanishes when evaluated on the same band vector twice; equivalently, after reordering the contraction, $(F,F)_k=0$ for odd $k$. The pool $\Phi^+_L$ consists of $\Phi_L$ and the odd bispectra divided by $i$, and it has $N^+ (L) = N (L) + \abs{\cA^- (L)}$ components. We call $\Phi_L$ the \textbf{spectral pool}, or the \textbf{even pool} when it is contrasted with $\Phi^+_L$, and we call $\Phi^+_L$ the \textbf{augmented pool}.

The powers are the power spectrum of \cite{KazhdanFunkhouserRusinkiewicz03} and the $B_{l_1 l_2 l_3}$ are bispectra in the sense of \cite{Kakarala12}. The pool is the radial invariant block of \cite[Sec.~5.2]{sh-182}. There the power of band $l$ is $\norm{f_l}^2$. Its complexification is a nonzero multiple of $P_l$, because an invariant quadratic form on the irreducible module of binary forms of degree $2l$ is unique up to a scalar. Let $d_l$ be the degree of a component in $F_l$. For every component of $\Phi_L$ the sum $\sum_l l \, d_l$ is even, and for every odd bispectrum it is odd. So the components of $\Phi_L$ are real on real harmonics and the odd bispectra are purely imaginary by \cite[Lem.~3.6(iii)]{sh-181}. Hence $\Phi_L$ and $\Phi^+_L$ are descriptors on $U_L$. The block of \cite{sh-182} does not contain the odd bispectra.

\begin{table}[ht]
\centering
\setlength{\tabcolsep}{4pt}
\begin{tabular}{rrrrrrrrr}
\toprule
$L$ & $n$ & $n - 3$ & $N (L)$ & $r (L)$ & $n - 3 - r (L)$ & $N^+ (L)$ & $r^+ (L)$ & $n - 3 - r^+ (L)$ \\
\midrule
2 & 8 & 5 & 4 & 4 & 1 & 4 & 4 & 1 \\
3 & 15 & 12 & 10 & 10 & 2 & 10 & 10 & 2 \\
4 & 24 & 21 & 16 & 16 & 5 & 17 & 17 & 4 \\
5 & 35 & 32 & 22 & 22 & 10 & 24 & 24 & 8 \\
6 & 48 & 45 & 32 & 32 & 13 & 37 & 37 & 8 \\
7 & 63 & 60 & 42 & 42 & 18 & 50 & 50 & 10 \\
8 & 80 & 77 & 57 & 57 & 20 & 71 & 71 & 6 \\
9 & 99 & 96 & 72 & 72 & 24 & 92 & 92 & 4 \\
10 & 120 & 117 & 93 & 93 & 24 & 123 & 117 & 0 \\
11 & 143 & 140 & 114 & 114 & 26 & 154 & 140 & 0 \\
12 & 168 & 165 & 142 & 142 & 23 & 197 & 165 & 0 \\
13 & 195 & 192 & 170 & 170 & 22 & 240 & 192 & 0 \\
14 & 224 & 221 & 206 & 206 & 15 & 297 & 221 & 0 \\
15 & 255 & 252 & 242 & 242 & 10 & 354 & 252 & 0 \\
16 & 288 & 285 & 287 & 285 & 0 & 427 & 285 & 0 \\
\bottomrule
\end{tabular}
\caption{Sizes, generic ranks, and generic fiber dimensions of the pools $\Phi_L$ and $\Phi^+_L$.}
\label{tab-pool}
\end{table}

\begin{thm}\label[thm]{thm-pool}
Let $2 \leq L \leq 16$, and let $r (L)$ and $r^+ (L)$ be the generic ranks of $\Phi_L$ and $\Phi^+_L$. Then $r (L) = N (L) < n - 3$ for $L \leq 15$, and $r (16) = n - 3$. Moreover $r^+ (L) = N^+ (L) < n - 3$ for $L \leq 9$, and $r^+ (L) = n - 3$ for $10 \leq L \leq 16$. Let $\Phi$ be $\Phi_L$ or $\Phi^+_L$, and let $U'$ and $M_c$ be as in \cref{thm-fiber}. Then the following hold.
\begin{enumerate}
\item If the generic rank of $\Phi$ is less than $n - 3$ and $c \in U'$, then $M_c / \SO(3)$ is a smooth manifold of positive dimension, listed in \cref{tab-pool}, and $r_{\Phi} ( c ; K ) > 0$ for every compact neighborhood $K$ of $c$.
\item If the generic rank of $\Phi$ equals $n - 3$ and $c \in U'$, then $M_c$ is a finite union of $\SO(3)$-orbits.
\end{enumerate}
\end{thm}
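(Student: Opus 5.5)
The proof has two parts: an exact rank certification for each $L$, followed by a direct appeal to \cref{thm-fiber}. Parts (1) and (2) are formal once the ranks are known. Given $r(L)$ or $r^+(L)$, part (3) of \cref{thm-fiber} gives (1) whenever the rank is below $n-3$, and part (4) gives (2) when it equals $n-3$. The fiber dimensions in \cref{tab-pool} are then just $n-3-r$, by part (2) of \cref{thm-fiber}. So the content is the computation of the generic ranks.

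For the lower bounds I will use exact evaluation at one point. The rank of $d\Phi$ is lower semicontinuous, and its maximum $r$ is attained on a Zariski open set. Hence exhibiting one rational tuple $c_L\in U_L$ at which the Jacobian of $\Phi_L$ (resp.\ $\Phi^+_L$) has rank $\geq N(L)$ (resp.\ $\geq\min\{N^+(L),n-3\}$) gives $r\geq$ that number.

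For each $2\le L\le16$ I will choose $c_L$ with small integer coefficients in a rational basis of each $\Harm_l$, for instance via the binary-form model $F_l$ with Gaussian-integer coefficients satisfying the reality condition of \cite[Lem.~3.6]{sh-181}. I will then compute the Jacobian exactly and determine its rank over $\Q$. To keep the cost manageable I will compute modulo a large prime $p$ first: rank mod $p$ is a lower bound for the rank over $\Q$, which suffices here. The odd bispectra are handled after division by $i$, so that all entries are rational.

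The upper bounds come from two sources. The trivial bound $r\le\min\{N,n-3\}$ is part (1) of \cref{thm-fiber}. It gives $r(L)=N(L)$ for $L\le15$, since $N(L)<n-3$ there by the table, and it gives $r(16)=n-3$ since $N(16)=287>285$. The same reasoning gives $r^+(L)=N^+(L)$ for $L\le9$ and $r^+(L)=n-3$ for $10\le L\le16$, because $N^+(L)\ge n-3$ in that range. The required inequalities $N(L)<n-3$ and related comparisons are just counts of $\abs{\cA(L)}$ and $\abs{\cA^-(L)}$, which I will tabulate directly from the definitions.

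The main obstacle is the size of the exact Jacobian computation at large $L$: at $L=16$ the matrix is $427\times288$, with entries that are cubic and higher-degree polynomials in the coefficients. The bispectra must be evaluated through the transvectant formula, with the $J_{10}$ of the sextic included. My plan is to differentiate symbolically only along the directions of a single band at a time, exploiting multilinearity of each $B_{l_1l_2l_3}$. This gives exact rational entries, to which I will apply modular rank checks at two primes.

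A secondary point needs care. Full rank must be achieved at a point of $\Omega_L$ for the lower bound to refer to the same $r$ as in \cref{thm-fiber}. This is automatic, since the maximal rank over $U_L$ is what \cref{thm-fiber} uses, but I will also check $\det(a,Qa,Q^2a)\ne0$ at $c_L$ for consistency.
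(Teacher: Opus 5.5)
Your proposal is correct and follows the paper's proof. The upper bounds come from \cref{thm-fiber}(1) together with the component counts, the lower bounds come from a Jacobian rank reduced modulo a large prime at one explicit integer point, and parts (1)--(2) are read off from \cref{thm-fiber}. The differences are only implementational. The paper evaluates at a non-real integer point of $U_L\otimes\C$, having noted that complexifying does not change the generic rank, and it replaces $J_4,J_6,J_{10}$ by Clebsch invariants with the same Jacobian row space. You instead use a real rational point and differentiate the $J$'s directly, which is also valid once you state, as the paper does, that all denominators (from factorials up to $(2L)!$ and the small normalizing constants of the $J$'s) are prime to $p$.
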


\begin{proof}
By \cref{thm-fiber}, the generic rank is at most the number of components and at most $n - 3$. The counts in \cref{tab-pool} give $N (L) < n - 3$ for $L \leq 15$ and $N (16) > n - 3$, and $N^+ (L) < n - 3$ for $L \leq 9$ and $N^+ (L) > n - 3$ for $L \geq 10$. It remains to bound the ranks from below.

The coefficients of the forms $F_l$ are complex linear coordinates on $U_L \otimes \C$. In these coordinates the components of $\Phi_L$, and the odd bispectra up to the constant $i$, are polynomials with rational coefficients, and they define the complexifications. A minor of the Jacobian that vanishes on $U_L$ vanishes on $U_L \otimes \C$, because a polynomial that vanishes on a real form of a complex vector space is zero. So each generic rank equals that of the complexification. It is at least the rank of the Jacobian at any point of $U_L \otimes \C$.

Put $i = ( F_3, F_3 )_4$ and $\Delta = ( i, i )_2$. In this paragraph $i$ and $\Delta$ denote these covariants, and not the imaginary unit and the Laplacian. In the rows of the Jacobian we replace $J_4$, $J_6$, $J_{10}$ by the invariants $( i, i )_4$, $( i, \Delta )_4$, and the discriminant of $F_3$, of degrees $4$, $6$, $10$. Together with $P_3$, these generate the same algebra as $P_3$, $J_4$, $J_6$, $J_{10}$. Indeed, the Clebsch invariants $( F_3, F_3 )_6$, $( i, i )_4$, $( i, \Delta )_4$ and the Clebsch invariant of degree ten generate the algebra of invariants of even degree of the sextic. The invariants $I_2$, $I_4$, $I_6$, $I_{10}$ are polynomials in them, and $I_{10}$ is the discriminant up to a nonzero factor, with a nonzero coefficient on the degree-ten generator. Each set is therefore a polynomial function of the other. By the chain rule, each Jacobian row space is contained in the other at every point; hence the two row spaces coincide.

We evaluate the Jacobian at a point whose coordinates in the monomial bases are integers in $[-9, 9]$. The coordinates are produced, band by band and coefficient by coefficient, by the linear congruential generator $x \mapsto ( 1664525 \, x + 1013904223 ) \bmod 2^{32}$, started at $x = 1$, with coordinate $( \lfloor x / 2^{16} \rfloor \bmod 19 ) - 9$. The same point serves for every $L$. The entries of the Jacobian are rational numbers. Their denominators divide products of factorials of integers at most $2L \leq 32$. So they are prime to the prime $p = 2^{31} - 1$. The rank of the reduction modulo $p$ is at most the rank over $\Q$. At this point the reductions have the ranks listed in \cref{tab-pool}. This gives the lower bounds. Parts (1) and (2) follow from \cref{thm-fiber}.
\end{proof}

The accompanying program \texttt{rank\_witness.py} performs this computation exactly modulo $p$ for $2 \leq L \leq 16$. For each $L$ and each pool it prints the evaluation point, the labels of a maximal linearly independent set of Jacobian rows, the pivot columns of that row set, and the determinant of the resulting square minor modulo $p$. A nonzero determinant certifies the rank. For $L \leq 15$ the minor of $\Phi_L$ has size $N (L)$, and for $L = 16$ it has size $285$. The starting value of the generator is a parameter of the program; the starting values $2$, $3$ and $7$ were also tested and produced the same ranks. At the cutoffs $L\in\{4,6,8\}$ of \cite[Sec.~7.3]{sh-182}, the fiber dimensions are $5,13,20$ for $\Phi_L$ and $4,8,6$ for $\Phi_L^+$.

We compare these ranks with known orbit-recovery results. For one spherical function including the constant band, \cite[Thm.~4.1 and Cor.~4.2]{FanEtAl24} proves finite generic recovery from invariants of degree at most three for every $L\geq10$. On $\Harm_0 \oplus U_L$ the invariants of degree at most three are polynomials in $f_0$, the powers, and the even and odd bispectra. Finite generic recovery gives their map the generic rank $( L + 1 )^2 - 3$. Fixing $f_0$ lowers the rank by at most one, so the powers and bispectra have generic rank at least $n-3$ on $U_L$; adjoining the sextic invariants cannot lower it. In particular, $r^+(L)=n-3$ holds for every $L\geq10$, and the saturation of $\Phi^+_L$ at $L = 10$ is not a new generic-recovery threshold. The comparison made here concerns the even pool $\Phi_L$, its radii and mirror ambiguity, and the information retained for particular tasks.

For $R$ spherical shells, \cite[Cor.~5.1]{EdidinSatriano24} gives recovery under $L\geq3$ and $R\geq L+2$, while \cite[Thm.~2.2]{BendoryEtAl25} gives generic separation from degree-at-most-three invariants for $R\geq3$. The radial function considered here has one copy of each harmonic band, whereas those results require several independently sampled shells.

The odd bispectra change sign under reflection, and $\Phi_L$ does not change. Let $\tau \colon U_L \to U_L$ be $\tau ( f_1, \dots, f_L ) = ( - f_1, f_2, \dots, (-1)^L f_L )$, so that $\tau$ replaces the radial function by $p \mapsto \rho (-p)$. Put $\chi (c) = \det ( a, Q a, Q^2 a )$, as in \cref{sec-fibers}.

\begin{prop}\label[prop]{prop-mirror}
Let $L \geq 2$. Then the following hold.
\begin{enumerate}
\item The map $\tau$ commutes with the action of $\SO(3)$. Moreover $\Phi_L \circ \tau = \Phi_L$, $B \circ \tau = - B$ for every odd bispectrum $B$, and $\chi \circ \tau = - \chi$.
\item For every $c \in \Omega_L$, the point $\tau c$ lies in the fiber of $\Phi_L$ through $c$ and not in $\SO(3) \cdot c$. In particular $\Phi_L$ does not separate orbits.
\item Let $U'$ and $M_c$ be as in \cref{thm-fiber} for $\Phi = \Phi_L$. Then $\tau (U') = U'$ and $\tau (M_c) = M_c$ for every $c \in U'$. For $L = 16$ and $c \in U'$, the number of orbits in $M_c$ is even, and at least two.
\end{enumerate}
\end{prop}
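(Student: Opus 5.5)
The plan is to deduce everything from the behaviour of the components of $\Phi_L$ under the antipodal map, and then to pass to regular fibers using \cref{thm-fiber}.

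\textbf{Part (1).} The map $\tau$ is $-\mathrm{id}$ on odd bands and $\mathrm{id}$ on even bands. Each band is an $\SO(3)$-module, so $\pm\mathrm{id}$ commutes with the action. Equivalently, $\rho\mapsto\rho(-\,\cdot)$ commutes with $R\cdot$ because $-I$ is central in $\Or(3)$.

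For the invariants, I would use the grading already recorded in \cref{sec-pool}. A component that is homogeneous of degree $d_l$ in $F_l$ is multiplied by $(-1)^{\sum_l l\,d_l}$ under $\tau$, since $f_l\mapsto(-1)^l f_l$. That sum is even for every component of $\Phi_L$ and odd for every odd bispectrum. The only items needing a check are the sextic invariants $J_4,J_6,J_{10}$: they have even degree in $F_3$, so the sum is even. For the odd bispectrum, the degrees are one in each of $l_1,l_2,l_3$, so the sign is $(-1)^{l_1+l_2+l_3}=-1$. Finally, $\chi$ has degree $3$ in $a$ and degree $3$ in $Q$, so it acquires the sign $(-1)^3=-1$.

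\textbf{Part (2).} Take $c\in\Omega_L$. By part (1), $\Phi_L(\tau c)=\Phi_L(c)$, so $\tau c$ lies in the fiber through $c$. Suppose $\tau c=R\cdot c$ for some $R\in\SO(3)$. By invariance, $\chi(\tau c)=\chi(c)$. But part (1) gives $\chi(\tau c)=-\chi(c)$, so $\chi(c)=0$, which contradicts $c\in\Omega_L$. Since $\Omega_L$ is nonempty by \cref{lem-free}, $\Phi_L$ does not separate orbits.

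\textbf{Part (3).}
\begin{enumerate}
\item $\tau$ preserves $\Omega_L$, because $\chi\circ\tau=-\chi$.
\item Differentiating $\Phi_L\circ\tau=\Phi_L$ gives $d(\Phi_L)_{\tau c}\circ\tau=d(\Phi_L)_c$. Since $\tau$ is a linear involution, the ranks at $c$ and $\tau c$ agree. Hence $\tau(U')=U'$.
\item Combining this with the fiber equality from part (2) gives $\tau(M_c)=M_{\tau c}=M_c$, where $M_{\tau c}=M_c$ because $\Phi_L(\tau c)=\Phi_L(c)$.
\item By the commutation in part (1), $\tau$ maps orbits to orbits, so it induces an involution on the set of orbits in $M_c$.
\item This involution has no fixed orbit, by the argument of part (2) applied to each point of $M_c\subset\Omega_L$.
\end{enumerate}
For $L=16$, \cref{thm-pool} gives $r(16)=n-3$, so by \cref{thm-fiber}(4) the set $M_c$ is a finite union of orbits. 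A fixed-point-free involution on a finite nonempty set forces an even number of elements. The set contains $\SO(3)\cdot c$ and $\SO(3)\cdot\tau c$, which are distinct, so the number is at least two.

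\textbf{Main obstacle.} The only real point is the sign bookkeeping in part (1). I would make sure that the parity statement of \cref{sec-pool} applies to $\tau$ via the scaling $f_l\mapsto(-1)^lf_l$ on each homogeneous component. Everything else is formal.
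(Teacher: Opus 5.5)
Your proof is correct and follows essentially the same route as the paper. Part (1) uses the parity count $(-1)^{\sum_l l\,d_l}$, part (2) uses the sign change of the $\SO(3)$-invariant $\chi$ to separate $c$ from $\tau c$ on $\Omega_L$, and part (3) uses the rank identity $d(\Phi_L)_{\tau c}\circ\tau=d(\Phi_L)_c$, then a fixed-point-free involution on the finitely many orbits of $M_c$ at $L=16$, via \cref{thm-pool} and \cref{thm-fiber}(4).
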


\begin{proof}
(1) For $R \in \SO(3)$, $( R \cdot \tau f ) (p) = f ( - R^{-1} p ) = ( \tau ( R \cdot f ) ) (p)$. The map $\Psi_l$ is linear, so $\tau$ multiplies $F_l$ by $(-1)^l$. A component of degree $d_l$ in $F_l$ is multiplied by $(-1)^{\sum_l l \, d_l}$, which is $1$ for the components of $\Phi_L$ and $-1$ for the odd bispectra. The map $\tau$ fixes $Q$ and negates $a$, so $\chi ( \tau c ) = \det ( - a, - Q a, - Q^2 a ) = - \chi (c)$.

(2) By (1), $\Phi_L ( \tau c ) = \Phi_L (c)$. The function $\chi$ is constant on orbits by the proof of \cref{lem-free}, and $\chi ( \tau c ) = - \chi (c) \neq \chi (c)$ for $c \in \Omega_L$. Hence $\tau c \notin \SO(3) \cdot c$. The set $\Omega_L$ is not empty, so $\Phi_L$ does not separate orbits.

(3) By (1), $\tau$ is linear with $\Phi_L \circ \tau = \Phi_L$, so $d ( \Phi_L )_{\tau c} \circ \tau = d ( \Phi_L )_c$, and the rank is the same at $c$ and $\tau c$. By (1) also $\tau ( \Omega_L ) = \Omega_L$. Hence $\tau ( U' ) = U'$ and $\tau ( M_c ) = M_c$. By (1), $\tau$ permutes the orbits in $M_c$, and $\tau^2 = 1$. No orbit is fixed, because $\chi$ is constant and nonzero on each orbit in $M_c$ and changes sign under $\tau$. For $L = 16$ the orbits in $M_c$ are finite in number by \cref{thm-pool}, so their number is even. It is at least two, since $M_c$ contains $c$ and $\tau c$.
\end{proof}

The point $\tau c$ matches every component of $\Phi_L$ exactly. It is a distinct mirror decoy precisely when $\tau c\notin\SO(3)\cdot c$, equivalently when the surface has no orientation-reversing orthogonal symmetry about the chosen origin. Such a symmetry need not be reflection in a plane. On $\Omega_L$ the two rotation classes are always distinct. At $L=16$, continuous ambiguity disappears on the free maximal-rank locus, but the mirror pair remains. The proposition neither counts all other finite ambiguities nor asserts that singular parts of a fiber have been classified.

At $L = 2$ the pool consists, up to nonzero factors, of $\abs{a}^2$, $\tr Q^2$, $a^\top Q a$, and $\det Q$. Its fibers have dimension one. By \cite[Thm.~4.9]{sh-181}, adjoining $a^\top Q^2 a$ and $\chi = \det ( a, Q a, Q^2 a )$ gives a separating descriptor. These two invariants have degrees four and six, and every bispectrum is cubic. The second is odd under $\tau$.

For the two adversaries of \cref{sec-radii} these results read as follows. For $2 \leq L \leq 15$ and every $c \in U'$, the decoy radius $r^{\varepsilon}_{\Phi_L} ( c ; K )$ is positive for every tolerance $\varepsilon$ and every compact neighborhood $K$ of $c$. A recognizer built on $\Phi_L$ accepts, near every generic enrolled surface, a family of decoys of the dimension listed in \cref{tab-pool}, and it accepts $\tau c$ whenever $\tau c \notin \SO(3) \cdot c$, at every $L$. For $2 \leq L \leq 15$, a captured value of $\Phi_L$ at a point of $U'$ is consistent with a family of rotation classes of positive dimension, and that family contains the mirror class. For $L = 16$, within $U'$, a captured value is consistent with finitely many rotation classes, including at least the mirror pair; additional finite ambiguities are not excluded. For $L\geq4$ the odd bispectrum $B_{234}$ is a nonzero trilinear invariant, so it is nonzero on an open dense subset and distinguishes $c$ from $\tau c$ there. There are no odd bispectra in these pools for $L=2,3$. Neither the addition of odd bispectra nor full generic rank proves global separation of all orbits.

\section{An explicit finite-tolerance benchmark}\label{sec-decoy}

In this section $w_l = 1$ for all $l$, so $\norm{\cdot}$ is the $L^2$ norm for the round probability measure. For $\theta \in \R^3$ let $R (\theta) \in \SO(3)$ be the rotation with rotation vector $\theta$. The closed ball of radius $\pi$ is mapped onto $\SO(3)$.

\begin{lem}\label[lem]{lem-lipschitz}
Let $c, c' \in U_L$ with $c' = ( f_1', \dots, f_L' )$, and put
\[
K' = \Bigl( \sum_{l=1}^{L} l^2 \norm{f_l'}^2 \Bigr)^{1/2} .
\]
Then the following hold.
\begin{enumerate}
\item $\abs{ \norm{c - R (\theta_1) \cdot c'} - \norm{c - R (\theta_2) \cdot c'} } \leq K' \abs{\theta_1 - \theta_2}$ for all $\theta_1, \theta_2 \in \R^3$.
\item For $R_1, R_2 \in \SO(3)$ let $\phi ( R_1, R_2 ) \in [0, \pi]$ be the rotation angle of $R_1^{-1} R_2$. Then $\abs{ \norm{c - R_1 \cdot c'} - \norm{c - R_2 \cdot c'} } \leq K' \phi ( R_1, R_2 )$.
\end{enumerate}
\end{lem}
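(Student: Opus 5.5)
The plan is to prove part (2) first by differentiating along a one-parameter subgroup, and then to obtain part (1) from the same derivative bound applied along the straight segment in rotation-vector coordinates. Both parts begin with the reverse triangle inequality
\[
\bigl| \norm{c - R_1 \cdot c'} - \norm{c - R_2 \cdot c'} \bigr| \leq \norm{R_1 \cdot c' - R_2 \cdot c'} = \norm{c' - R_1^{-1} R_2 \cdot c'} ,
\]
where the equality holds because the norm is $\SO(3)$-invariant. So it suffices to bound the displacement of $c'$ under a rotation.

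The basic estimate is infinitesimal. For a unit vector $\omega \in \R^3$ let $X_\omega$ be the generator of the rotations about $\omega$ acting on $\Harm_l$, so that $\frac{d}{dt} \bigl( R ( t \omega ) \cdot f \bigr) = R ( t \omega ) \cdot ( X_\omega f )$. I will show that $\norm{X_\omega f} \leq l \norm{f}$ on $\Harm_l$.
\begin{itemize}
\item The operator $X_\omega$ is skew-adjoint for $\ip{\cdot}{\cdot}$, since the measure $\mu$ is invariant.
\item After rotating $\omega$ to $e_3$, $X_\omega$ is $\partial_\varphi$ in spherical coordinates.
\item On the complexification of $\Harm_l$, $\partial_\varphi$ has eigenvalues $i m$ with $\abs{m} \leq l$, with eigenvectors $Y_l^m$. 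Equivalently, in the binary-form model it acts diagonally on monomials of degree $2l$ with weights $m \in \{ -l, \dots, l \}$.
\item Hence its operator norm is exactly $l$.
\end{itemize}
Summing over bands, with $w_l = 1$, gives $\norm{X_\omega c'} \leq K'$, and more generally $\norm{X_v c'} \leq \abs{v} K'$ for every $v \in \R^3$.

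For part (2), write $R_1^{-1} R_2 = R ( \phi \omega )$ with $\phi = \phi ( R_1, R_2 )$ and $\abs{\omega} = 1$. Then
\[
\norm{c' - R ( \phi \omega ) \cdot c'} \leq \int_0^\phi \norm{R ( t \omega ) \cdot X_\omega c'} \, dt = \phi \norm{X_\omega c'} \leq \phi K' ,
\]
again by invariance of the norm.

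For part (1), put $\gamma (t) = \theta_1 + t ( \theta_2 - \theta_1 )$. Then
\[
\frac{d}{dt} \bigl( R ( \gamma (t) ) \cdot c' \bigr) = R ( \gamma (t) ) \cdot \bigl( X_{Y(t)} c' \bigr) ,
\]
where $Y(t)$ is the left-trivialized velocity. By the standard formula for the differential of the exponential map, $Y(t) = \frac{1 - e^{- \mathrm{ad}_{\gamma}}}{\mathrm{ad}_{\gamma}} \, \gamma'$. Integrating this derivative and using the bound $\norm{X_{Y} c'} \leq \abs{Y} K'$ will give part (1), once we show $\abs{Y(t)} \leq \abs{\gamma'} = \abs{\theta_2 - \theta_1}$.

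That inequality is the step I expect to be the main obstacle. The key observation is that $\mathrm{ad}_\gamma$ on $\mathfrak{so}(3) \cong \R^3$ is $v \mapsto \gamma \times v$, which is skew with eigenvalues $0$ and $\pm i \abs{\gamma}$. It is therefore normal, and $\frac{1 - e^{-z}}{z}$ applied to it has operator norm $\max_{x \in \R} \abs{ ( 1 - e^{-ix} ) / ( ix ) } = \max_x \abs{ \sin ( x/2 ) / ( x/2 ) } = 1$. Once this is in place, part (1) follows without needing that the segment be a geodesic, and without restricting $\abs{\theta_i} \leq \pi$.
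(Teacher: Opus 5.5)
Your proposal is correct and follows essentially the same route as the paper's proof. Both use invariance of the norm and the triangle inequality, the speed bound coming from the skew generator with eigenvalues $im$, $\abs{m}\leq l$, on each band, and integration along the one-parameter subgroup for (2). For (1), both integrate along the straight segment in rotation-vector coordinates, using that the left-trivialized differential of $\exp$ is a normal operator with eigenvalues $1$ and $(1-e^{\mp i\abs{\theta}})/(\pm i\abs{\theta})$, each of modulus at most $1$.
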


\begin{proof}
(2) Write $R_1^{-1} R_2 = R ( \phi w )$ with a unit vector $w$ and $\phi = \phi ( R_1, R_2 )$. On $\Harm_l$ the generator of the rotations about a unit axis is skew, with eigenvalues $i m$ for $\abs{m} \leq l$. So the curve $t \mapsto R ( t \phi w ) \cdot c'$ has speed at most $\phi K'$, and $\norm{R_1 \cdot c' - R_2 \cdot c'} = \norm{c' - R_1^{-1} R_2 \cdot c'} \leq K' \phi$. The triangle inequality completes the proof.

(1) Put $\theta (t) = \theta_1 + t ( \theta_2 - \theta_1 )$ and $R_t = R ( \theta (t) )$ for $t \in [0, 1]$. Then $\dot R_t = R_t \, \widehat{\omega (t)}$, where $\widehat{\omega}$ is the skew matrix of $\omega \in \R^3$. The derivative of the exponential map of $\SO(3)$ at $\theta$ is $\exp ( \widehat{\theta} )$ composed with a normal operator on $\R^3$ whose eigenvalues are $1$ and $( 1 - e^{\mp i \abs{\theta}} ) / ( \pm i \abs{\theta} )$. These have modulus at most $1$, so $\abs{\omega (t)} \leq \abs{\theta_2 - \theta_1}$. On $\Harm_l$ the generator of the rotations about a unit axis is skew, with eigenvalues $i m$ for $\abs{m} \leq l$. Hence
\[
\Bigl\lVert \frac{d}{dt} \bigl( R_t \cdot c' \bigr) \Bigr\rVert \leq \abs{\omega (t)} \Bigl( \sum_{l=1}^{L} l^2 \norm{f_l'}^2 \Bigr)^{1/2} \leq K' \abs{\theta_2 - \theta_1} ,
\]
because $R_t$ acts isometrically. Integration gives $\norm{R (\theta_1) \cdot c' - R (\theta_2) \cdot c'} \leq K' \abs{\theta_1 - \theta_2}$, and the triangle inequality completes the proof.
\end{proof}

A second parametrization avoids trigonometric functions. For a unit quaternion $q = ( q_0, q_1, q_2, q_3 )$ let $R (q) \in \SO(3)$ be the rotation it represents; the entries of $R (q)$ are quadratic forms in $q$, and $R (q) = R (-q)$. For $u \in \R^3$ and $k \in \{ 0, 1, 2, 3 \}$ let $S_k (u) = R (q)$, where $q$ is the unit quaternion whose $k$-th coordinate is $( 1 + \abs{u}^2 )^{-1/2}$ and whose remaining three coordinates, in order, are the coordinates of $( 1 + \abs{u}^2 )^{-1/2} u$. The entries of $S_k (u)$ are rational functions of $u$ with denominator $1 + \abs{u}^2$.

\begin{lem}\label[lem]{lem-charts}
Every rotation is $S_k (u)$ for some $k$ and some $u \in [-1, 1]^3$. For $u, u' \in [-1, 1]^3$ with $\abs{u - u'} < \sqrt{2}$,
\[
\phi \bigl( S_k (u), S_k (u') \bigr) \leq \frac{\pi}{\sqrt{2}} \, \abs{u - u'} .
\]
\end{lem}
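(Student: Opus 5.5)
The plan has two parts: coverage, by normalizing a quaternion by its largest coordinate, and the angle bound, by viewing $u \mapsto$ quaternion as a gnomonic (central) projection of $\R^3$ onto a hemisphere of $S^3$.

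\emph{Coverage.} Let $R \in \SO(3)$ and choose a unit quaternion $q$ with $R = R(q)$. Pick $k$ with $\abs{q_k}$ maximal; then $q_k \neq 0$. Replacing $q$ by $-q$ if necessary (allowed since $R(q) = R(-q)$), we may take $q_k > 0$. Let $u \in \R^3$ consist of the remaining three coordinates of $q$, in order, divided by $q_k$. By maximality, $u \in [-1,1]^3$. Since $\abs{q} = 1$, we get $q_k^2 ( 1 + \abs{u}^2 ) = 1$, so $q_k = ( 1 + \abs{u}^2 )^{-1/2}$. Hence $q$ is exactly the quaternion in the definition of $S_k(u)$, and $R = S_k(u)$.

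\emph{Angle bound.} Fix $k$ and write $s(u) \in S^3$ for the unit quaternion used to define $S_k(u)$. Up to a coordinate permutation, $s(u) = (1,u)/\sqrt{1+\abs{u}^2}$. I will use the standard fact that, for unit quaternions, the rotation angle of $R(q)^{-1}R(q')$ is $2\arccos\abs{\ip{q}{q'}}$. Writing $\psi \in [0,\pi]$ for the spherical angle between $q$ and $q'$, this gives
\[
\phi\bigl(R(q),R(q')\bigr) = 2\min\{\psi, \pi - \psi\} \leq 2\psi .
\]
So it suffices to bound the geodesic distance $\psi$ between $s(u)$ and $s(u')$ on $S^3$.

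For this I compute the differential of $s$. At $u$, for a direction $h$, $ds_u(h)$ is the orthogonal projection of $(0,h)$ onto $s(u)^{\perp}$, divided by $\sqrt{1+\abs{u}^2}$. Its singular values are therefore $1/(1+\abs{u}^2)$ (for $h \parallel u$) and $1/\sqrt{1+\abs{u}^2}$ (for $h \perp u$). Both are at most $1$.

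Now take the segment from $u$ to $u'$. It stays in the convex cube, and its image under $s$ is a curve on $S^3$ from $s(u)$ to $s(u')$ of length at most $\abs{u-u'}$. The geodesic distance is at most the length of any connecting curve, so $\psi \leq \abs{u-u'}$. Therefore
\[
\phi\bigl(S_k(u),S_k(u')\bigr) \leq 2\abs{u-u'} \leq \frac{\pi}{\sqrt{2}}\,\abs{u-u'} ,
\]
since $2 < \pi/\sqrt{2}$. On this route the hypothesis $\abs{u-u'} < \sqrt{2}$ is not needed; it only matters if one instead bounds $\psi$ through the chord length.

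The main obstacle is the rotation-angle formula $\phi = 2\arccos\abs{\ip{q}{q'}}$. The clean derivation is to note that $R(q)^{-1}R(q') = R(\bar q q')$, and that a unit quaternion with real part $\cos(\phi/2)$ represents rotation by the angle $\phi$. This gives $\cos(\phi/2) = \abs{\ip{q}{q'}}$, with the absolute value accounting for the sign ambiguity $q \mapsto -q$.
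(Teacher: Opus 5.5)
Your proof is correct. The coverage half is the same as the paper's. The angle bound shares the paper's key ingredient: the normalization $v \mapsto v/\abs{v}$ has differential of norm at most $1/\abs{v} \leq 1$ along the segment from $(1,u)$ to $(1,u')$. You use that fact differently, however.

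The paper extracts only the chord bound $\abs{q-q'} \leq \abs{u-u'}$. It then uses $\abs{u-u'}<\sqrt{2}$ to force $\ip{q}{q'}>0$, so that $\psi=\arccos\ip{q}{q'}$ lies in $[0,\pi/2)$ and the rotation angle is exactly $2\psi$. Finally it converts chord to arc with the concavity estimate $\sin(\psi/2)\geq (2\sqrt{2}/\pi)(\psi/2)$. That last step is the source of both the constant $\pi/\sqrt{2}$ and the hypothesis.

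You instead integrate the same differential bound to control the length of the image curve on $S^3$, and hence the geodesic distance $\psi$ directly. You handle the sign ambiguity through $\phi = 2\arccos\abs{\ip{q}{q'}} \leq 2\psi$. This avoids the chord-to-arc loss, gives the sharper constant $2<\pi/\sqrt{2}$, and removes the restriction $\abs{u-u'}<\sqrt{2}$. In \cref{cor-validated-rotation}(2) one could then drop the condition $h_B<\sqrt{2}$ and replace the factor $\pi/\sqrt{2}$ by $2$, which makes the box-discarding test slightly less conservative.

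The paper's route gives a weaker constant, but it needs only the Euclidean Lipschitz property of normalization and an elementary sine inequality. Its conclusion follows from yours.
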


\begin{proof}
Let $q_k$ be a coordinate of largest absolute value of a unit quaternion $q$; then $q_k \neq 0$. Replacing $q$ by $-q$, we may take $q_k > 0$; then $q = q_k ( 1, u )$ up to the position of the coordinate $1$, with $\abs{u_j} = \abs{q_j} / q_k \leq 1$. This proves the first claim. Let $q$ and $q'$ be the unit quaternions of $S_k (u)$ and $S_k (u')$. The map $v \mapsto v / \abs{v}$ has differential of norm at most $1 / \abs{v}$, so on the segment from $( 1, u )$ to $( 1, u' )$, where $\abs{v} \geq 1$, it is $1$-Lipschitz; hence $\abs{q - q'} \leq \abs{u - u'} < \sqrt{2}$. Put $\psi = \arccos \ip{q}{q'}$. From $\abs{q - q'}^2 = 2 - 2 \ip{q}{q'} < 2$ we get $\ip{q}{q'} > 0$ and $\psi \in [0, \pi/2)$. The rotation $R (q)^{-1} R (q')$ is represented by the quaternion $\bar q q'$, whose real part is $\ip{q}{q'} = \cos \psi$, so its rotation angle is $2 \psi$. Since $\sin$ is concave on $[0, \pi/2]$, $\sin ( \psi / 2 ) \geq ( 2 \sqrt{2} / \pi ) ( \psi / 2 )$ for $\psi / 2 \in [0, \pi/4]$. Hence $\abs{q - q'} = 2 \sin ( \psi / 2 ) \geq ( \sqrt{2} / \pi ) ( 2 \psi )$, which is the claim.
\end{proof}

\begin{cor}[Validated rotation covering]\label[cor]{cor-validated-rotation}
Let $c, c' \in U_L$, let $K'$ be as in \cref{lem-lipschitz}, and let $\overline K \geq K'$. Then the following hold.
\begin{enumerate}
\item Let $\mathcal B$ be a finite collection of boxes covering $[-\pi,\pi]^3$. For $B\in\mathcal B$ let $\theta_B$ be its center, let $h_B$ bound the distance from the center to every point of the box, and let $\ell_B \leq \norm{c - R ( \theta_B ) \cdot c'}$ be a validated lower bound. Then
\[
\delta(c,c')\geq
\max\bigl\{0,\min_{B\in\mathcal B}(\ell_B-\overline K h_B)\bigr\}.
\]
\item For $k = 0, 1, 2, 3$ let $\mathcal B_k$ be a finite collection of boxes covering $[-1, 1]^3$. For $B \in \mathcal B_k$ let $u_B$ be its center, let $h_B < \sqrt{2}$ bound the distance from the center to every point of the box, and let $\ell_B \leq \norm{c - S_k ( u_B ) \cdot c'}$ be a validated lower bound. Then
\[
\delta(c,c')\geq
\max\Bigl\{0,\min_{k}\min_{B\in\mathcal B_k}\Bigl(\ell_B-\frac{\pi}{\sqrt{2}}\,\overline K h_B\Bigr)\Bigr\}.
\]
\end{enumerate}
Any validated upper bound on $\norm{c-R_0\cdot c'}$ at a single rotation $R_0$ is an upper bound on $\delta(c,c')$.
\end{cor}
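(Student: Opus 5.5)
Both parts reduce to one covering argument, fed by \cref{lem-lipschitz} for part (1) and by \cref{lem-lipschitz} together with \cref{lem-charts} for part (2). The plan is to show that the chosen finite family of boxes yields a lower bound on $\norm{c - R \cdot c'}$ valid at \emph{every} rotation $R$. Then we take the minimum over $R$. The function $R \mapsto \norm{c - R\cdot c'}$ attains its minimum $\delta(c,c')$ on $\SO(3)$. Since norms are nonnegative, the $\max\{0,\cdot\}$ costs nothing, so it suffices to prove
\[
\norm{c - R\cdot c'} \geq \min_B(\ell_B - \overline K h_B)
\]
for each $R$, with the extra factor $\pi/\sqrt2$ in part (2).

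For part (1), let $R \in \SO(3)$ be arbitrary. Write $R = R(\theta)$ with $\abs{\theta} \leq \pi$, which is possible because the closed ball of radius $\pi$ maps onto $\SO(3)$. That ball lies in $[-\pi,\pi]^3$, so $\theta$ belongs to some box $B \in \mathcal B$, and $\abs{\theta - \theta_B} \leq h_B$. \Cref{lem-lipschitz}(1) then gives
\[
\norm{c - R(\theta)\cdot c'} \geq \norm{c - R(\theta_B)\cdot c'} - K' h_B \geq \ell_B - \overline K h_B,
\]
using $K' \leq \overline K$ and $h_B \geq 0$. This is at least the minimum over all boxes, and minimizing over $R$ finishes part (1).

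For part (2), again fix $R$. By \cref{lem-charts}, $R = S_k(u)$ for some $k$ and some $u \in [-1,1]^3$. Then $u$ lies in a box $B \in \mathcal B_k$, so $\abs{u - u_B} \leq h_B < \sqrt2$. \Cref{lem-charts} gives $\phi(S_k(u_B), S_k(u)) \leq (\pi/\sqrt2)\abs{u-u_B} \leq (\pi/\sqrt2) h_B$. \Cref{lem-lipschitz}(2) converts this angle bound into
\[
\norm{c - S_k(u)\cdot c'} \geq \ell_B - \tfrac{\pi}{\sqrt2}\overline K h_B .
\]
Minimizing over $k$, over $B$, and over $R$ gives the claim. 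The only delicate point is the hypothesis $\abs{u-u'}<\sqrt2$ in \cref{lem-charts}. It is guaranteed by $h_B<\sqrt2$, since the bound needs to hold only between the center of a box and a point of the same box. Both points use the same chart index $k$, which is why the covering is organized chart by chart.

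The final sentence is immediate. If $U \geq \norm{c - R_0\cdot c'}$, then $\delta(c,c') \leq \norm{c - R_0\cdot c'} \leq U$, because $\delta$ is a minimum over $\SO(3)$. There is no real obstacle anywhere in the argument. The step needing the most care is surjectivity of the parametrizations onto $\SO(3)$, so that every rotation is covered: in part (1) this comes from the ball of radius $\pi$ lying in the cube, and in part (2) from the first claim of \cref{lem-charts}.
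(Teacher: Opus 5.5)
Your proof is correct and follows the paper's argument: every rotation is represented in a covered parameter domain (the ball of radius $\pi$ inside $[-\pi,\pi]^3$, or the four charts $S_k$ on $[-1,1]^3$), and \cref{lem-lipschitz}, together with \cref{lem-charts} in part (2), transfers the validated lower bound at each box center to the whole box, after which one minimizes over boxes and rotations. One small point, which the paper's proof shares: if a box of $\mathcal B_k$ protrudes from $[-1,1]^3$, its center $u_B$ may lie outside the cube, but the proof of \cref{lem-charts} only uses $\abs{u-u'}<\sqrt2$ and never uses membership in the cube, so your application of it remains valid.
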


\begin{proof}
(1) For $\theta\in B$, \cref{lem-lipschitz}(1) gives $\norm{c-R(\theta)\cdot c'}\geq\ell_B-\overline K h_B$. The boxes cover all rotation vectors of length at most $\pi$, which represent every rotation. Taking the minimum over all boxes gives the stated bound. (2) For $u \in B \in \mathcal B_k$, \cref{lem-lipschitz}(2) and \cref{lem-charts} give $\norm{c - S_k (u) \cdot c'} \geq \ell_B - ( \pi / \sqrt{2} ) \overline K h_B$, and by \cref{lem-charts} the four families cover $\SO(3)$. The upper bound follows by evaluating the function at one rotation.
\end{proof}

For the computation we parametrize the null cone of $\C^3$ by $( x, y, z ) = ( z_0^2 - z_1^2 , \ i ( z_0^2 + z_1^2 ) , \ - 2 z_0 z_1 )$ and take $F_l$ to be the restriction of $f_l$. Two parametrizations differ by a linear substitution in $( z_0, z_1 )$. Under it every component of $\Phi_L$ is multiplied by a nonzero constant, because transvectants are relative invariants. Relative differences of components do not depend on the choice. We fix bases of $\Harm_1, \dots, \Harm_6$ consisting of harmonic polynomials with integer coefficients, listed in the data file \texttt{sh-183-benchmark.json} that accompanies this paper together with the programs used below. The enrolled tuple is
\[
\begin{split}
f_1 & = ( -2, -3, 0 ) , \qquad f_2 = ( -3, 3, 3, 3, 2 ) , \qquad f_3 = ( -1, -3, 3, -4, 2, 2, -4 ) , \\
f_4 & = ( 3, 0, -1, -3, 1, -4, -4, -4, 4 ) , \qquad f_5 = ( -4, 2, -1, 2, -4, 4, -1, 3, 3, 4, -1 ) , \\
f_6 & = ( 1, -1, -1, 3, 0, -4, 2, 4, -3, -2, 0, -3, 1 ) ,
\end{split}
\]
in coordinates with respect to these bases, and $\norm{c}^2 = 1633361 / 45045$, so $\norm{c} < 6.0217$. The decoy $c'$ has coordinates in $10^{-20} \Z$; they are listed in the data file.

\begin{exa}[Certified $L=6$ benchmark]\label[exa]{thm-decoy}
The tuples $c, c' \in U_6$ above satisfy the following.
\begin{enumerate}
\item For every component $\varphi$ of $\Phi_6$, $\abs{\varphi (c') - \varphi (c)} \leq 2 \cdot 10^{-18} \abs{\varphi (c)}$. In particular $\abs{\varphi (c') - \varphi (c)} \leq 10^{-14} \abs{\varphi (c)}$.
\item $21/4 \leq \delta ( c, c' ) \leq 5.3643$. In particular $\delta ( c, c' ) \geq 0.87 \, \norm{c}$.
\item $4.5 \leq \delta ( c, \tau c ) \leq 4.8048$ and $4.3 \leq \delta ( \tau c, c' ) \leq 4.6035$.
\item Let $\varepsilon = 10^{-14} \abs{\Phi_6 (c)}$, and let $K \subset U_6$ be compact and contain $c$ and $c'$. Then $r^{\varepsilon}_{\Phi_6} ( c ; K ) \geq 21/4$.
\item $s_c > -20$ and $s_{c'} > -20$ on $\Sph$, where $s_c = f_1 + \dots + f_6$ for $c = ( f_1, \dots, f_6 )$ and $s_{c'} = f_1' + \dots + f_6'$ for $c' = ( f_1', \dots, f_6' )$.
\end{enumerate}
\end{exa}

\noindent\textbf{Computation.}
(1) The components are evaluated at $c$ and $c'$ in exact arithmetic over $\Q (i)$, with the invariants $J_4, J_6, J_{10}$ computed from the Clebsch covariants of $F_3$. All $32$ values are real and nonzero, and the largest relative difference is $1.54 \cdot 10^{-18}$.

(2) The norms are computed from the exact Gram matrices of the bases. The upper bound is the exact value of $\norm{c - R_0 \cdot c'}$ at the rational rotation $R_0 = R (q_0)$ of the quaternion $q_0 = ( 1427 / 905074 , \ 290689 / 653469 , \ -292467 / 330422 , \ 1 )$, normalized; $\norm{c - R_0 \cdot c'}^2$ is a rational number smaller than $28.77556 < 5.3643^2$. The lower bound is \cref{cor-validated-rotation}(2). Each cube $[-1, 1]^3$ is divided into $8^3$ boxes. For a box $B$ the value $\norm{c - S_k ( u_B ) \cdot c'}^2 = \norm{c}^2 + \norm{c'}^2 - 2 \ip{c}{S_k ( u_B ) \cdot c'}$ is enclosed by evaluating $\ip{c}{S_k ( u_B ) \cdot c'}$ as an explicit polynomial in the nine entries of $S_k ( u_B )$, in interval arithmetic, in IEEE double precision with rounding to nearest and with outward rounding by one unit in the last place after every operation. With $\overline K = 22.806 > K'$ and $h_B$ the half-diagonal of $B$, a box with $\ell_B - ( \pi / \sqrt{2} ) \overline K h_B \geq 21/4$ is discarded, and every other box is divided into eight. All boxes are discarded after at most eight levels, with $990\,624$ center evaluations in total. The ratio follows from $( 21/4 )^2 > 0.87^2 \cdot 1633361 / 45045$.

(3) The same procedure, applied to the pairs $( c, \tau c )$ and $( \tau c, c' )$ with the thresholds $9/2$ and $43/10$, discards all boxes after $249\,744$ and $221\,200$ evaluations. The upper bounds come from the rational quaternions listed in the data file.

(4) By (1), $\abs{\Phi_6 (c') - \Phi_6 (c)} \leq \varepsilon$. By (2), $c' \notin \SO(3) \cdot c$ and $\delta ( c, c' ) \geq 21/4$.

(5) Write $p = v / \abs{v}$ with $v$ on a face of the cube $[-1, 1]^3$, so that $s_c (p) = \sum_l f_l (v) / \abs{v}^l$. Each face is covered by boxes, and $s_c$ is enclosed on each box by interval arithmetic as in (2), with boxes subdivided until the lower bound exceeds $-20$.

The distance bounds distinguish $c'$ from both $c$ and $\tau c$. The mirror match itself is an exact algebraic identity by \cref{prop-mirror}. The rational inequality in part (1) is a finite-tolerance match: it does not establish that $c'$ belongs to the exact fiber or to a particular connected component of that fiber.

We found $c'$ by numerical continuation along the descriptor level set, moving in the quotient tangent space of \cref{thm-pool}, which has dimension $13$ at a free maximal-rank point, with Gauss--Newton correction of the descriptor residual, followed by Newton iteration in $60$-digit arithmetic and rounding of the coordinates to $10^{-20} \Z$. The continuation is not part of the certificate; parts (1)--(5) are verified from the final rational pair alone.

Part (1) is a componentwise relative check. No component of $\Phi_6 (c)$ vanishes, so it is the discrepancy of \cref{sec-radii} with $W$ the diagonal matrix of the reciprocals $1 / \abs{\varphi (c)}$ and with the maximum norm in place of the Euclidean norm. A matcher on $\Phi_6$ with componentwise relative tolerance $10^{-14}$ therefore accepts $c'$, and the Euclidean tolerance $\varepsilon$ of part (4) is a consequence. The band powers of $c$ and $c'$ agree to the same precision, so their energies agree to that precision in every band. By (5), the radial functions $24 + s_c$ and $24 + s_{c'}$ exceed $4$ on the sphere. \Cref{fig-decoy} shows both as radial surfaces with constant band $f_0 = 24$.

\begin{rem}[Certification status]\label[rem]{rem-numerical-status}
Part (1) is established in exact rational arithmetic. Parts (2), (3) and (5) are established by interval arithmetic in IEEE double precision in round-to-nearest mode, with every elementary result widened by one unit in the last place in each direction; this covers the rounding error of correctly rounded addition, subtraction, multiplication, division and square root. The constant $\pi / \sqrt{2}$ is enclosed from above by the double-precision quotient of the nearest doubles to $\pi$ and $\sqrt{2}$, increased by two units in the last place; this exceeds $\pi / \sqrt{2}$ because the nearest double to $\pi$ lies below $\pi$ by less than $1.3 \cdot 10^{-16}$ and the nearest double to $\sqrt{2}$ lies above $\sqrt{2}$ by less than $10^{-16}$. The entries of $S_k ( u_B )$ have absolute value at most $1$ and all coefficients are bounded, so no overflow occurs, and results in the subnormal range remain correctly rounded. The certificate depends on the correctness of this floating-point arithmetic and of the interval routines in the accompanying program \texttt{certify.py}, and on nothing else. The exact value of $\norm{c - R_0 \cdot c'}^2$, the rational quaternions, the box counts and the Lipschitz constants are recorded by the program.
\end{rem}

The factor $0.87$ is normalized by the nonconstant coefficient tuple. For the rendered model with $\rho_0=24$,
\[
\norm{\rho_c}_{L^2(d\mu)}=\sqrt{576+1633361/45045}\approx24.744.
\]
The distance interval $[5.25,5.3643]$ corresponds to approximately $0.2122$--$0.2168$ of that full radial norm. Both normalizations are radial $L^2$ quantities. They are neither Hausdorff distances nor violations of a dimensional tolerance.

\begin{figure}[ht]
\centering
\includegraphics[width=0.9\textwidth]{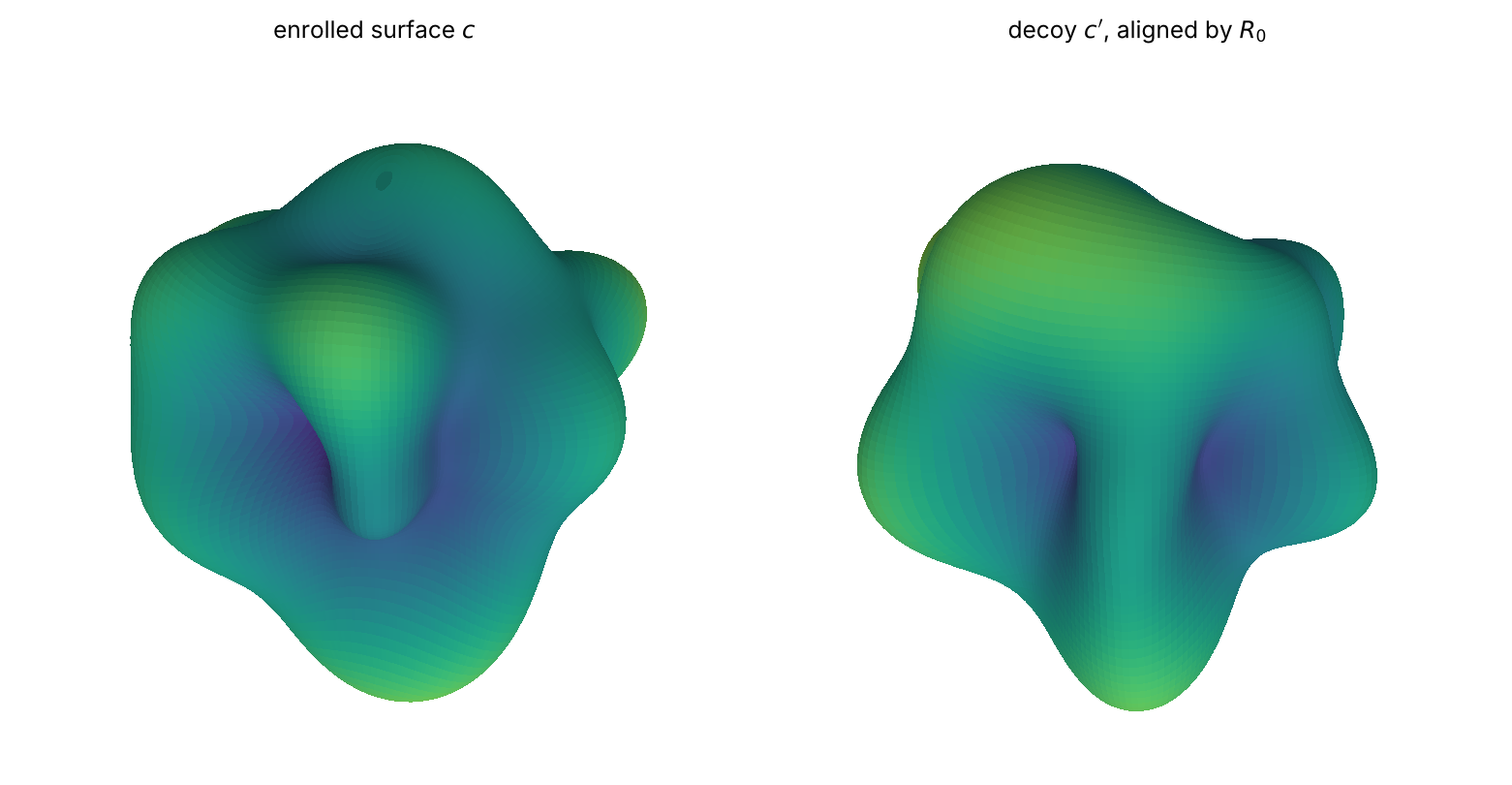}
\caption{The enrolled surface $c$ (left) and the decoy $c'$ (right), drawn with $f_0 = 24$; $c'$ is rotated by the rotation $R_0$ of \cref{thm-decoy}(2). All $32$ components of $\Phi_6$ agree within relative tolerance $2 \cdot 10^{-18}$; see \cref{rem-numerical-status}.}
\label{fig-decoy}
\end{figure}

\FloatBarrier
\section{Task-specific certification and disclosure}\label{sec-tasks}

A shape collision is not necessarily an error for every task. An inspection quantity can be constant on a fiber that contains many noncongruent shapes. Conversely, a shape descriptor can fail to determine geometry while revealing an attribute exactly. The distinction can be stated without choosing a classifier.

\begin{defn}\label[defn]{def-task-interval}
Let $T\colon U_L\to\R$ be a continuous rotation-invariant task quantity. For a nonempty set $\mathcal F=\mathcal F^{\varepsilon}_{\Phi,W}(y;K)$, define
\[
\begin{split}
T_-(y) & =\min_{x\in\mathcal F}T(x),\qquad
T_+(y)=\max_{x\in\mathcal F}T(x),\\
A^{\varepsilon}_{\Phi,W,T}(y;K) & =T_+(y)-T_-(y).
\end{split}
\]
We call $[T_-(y),T_+(y)]$ the \textbf{task interval} and $A^{\varepsilon}_{\Phi,W,T}(y;K)$ its width.
\end{defn}

The interval is an enclosure of the attainable task values; these values need not fill it when the consistent set is disconnected. For a requirement $T\leq b$, an established upper bound $t_{\mathrm{up}}\geq T_+(y)$ with $t_{\mathrm{up}}\leq b$ certifies compliance. An established lower bound $t_{\mathrm{low}}\leq T_-(y)$ with $t_{\mathrm{low}}>b$ certifies noncompliance. To demonstrate ambiguity one needs two feasible shapes with validated task values on opposite sides of $b$. A loose enclosing interval that crosses $b$ does not establish such a pair.

\begin{prop}[Task information]\label[prop]{prop-task-information}
With the notation of \cref{def-task-interval}, the following hold.
\begin{enumerate}
\item The smallest worst-case error of a scalar estimate based on $y$ is
\[
\min_{t\in\R}\max_{x\in\mathcal F}\abs{t-T(x)}
=\frac{T_+(y)-T_-(y)}{2}.
\]
The midpoint of the task interval attains this value.
\item If $\abs{T(x)-T(x')}\leq M_T\delta(x,x')$ for all $x, x' \in K$, then
\[
A^{\varepsilon}_{\Phi,W,T}(y;K)\leq M_TD^{\varepsilon}_{\Phi,W}(y;K).
\]
In particular, for $y=\Phi(c)$,
\[
\max_{x\in\mathcal F}\abs{T(x)-T(c)}\leq M_T r^{\varepsilon}_{W\Phi}(c;K).
\]
\item At exact tolerance, every task interval is a singleton if and only if there is a map $g\colon\Phi(K)\to\R$ such that $T|_K=g\circ\Phi|_K$. In that case $g$ is continuous.
\end{enumerate}
\end{prop}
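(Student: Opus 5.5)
The three parts are independent, and each reduces to compactness of $\mathcal F$ together with elementary estimates. First I record that $\mathcal F=\mathcal F^{\varepsilon}_{\Phi,W}(y;K)$ is a closed subset of the compact set $K$, because $\Phi$ is continuous. Hence it is compact, and the continuous function $T$ attains $T_\pm(y)$ on it.

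For (1), fix $t\in\R$. Then $\max_{x\in\mathcal F}\abs{t-T(x)}=\max\{t-T_-(y),\,T_+(y)-t\}$, since the maximum of $\abs{t-T}$ over a set of values lying in $[T_-,T_+]$ and containing both endpoints is the distance to the farther endpoint. The sum of the two entries is $T_+-T_-$, so their maximum is at least $(T_+-T_-)/2$. Equality holds exactly at the midpoint $t=(T_++T_-)/2$, which gives both the formula and the attainment. For (2), choose $x_\pm\in\mathcal F$ with $T(x_\pm)=T_\pm(y)$. Then $A=T(x_+)-T(x_-)\leq M_T\,\delta(x_+,x_-)\leq M_T D$ by the definition of the orbit diameter in \cref{def-observation-fiber}. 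For $y=\Phi(c)$, the point $c$ lies in $\mathcal F$. For every $x\in\mathcal F$ we have $\abs{T(x)-T(c)}\leq M_T\delta(c,x)$, and the maximum of $\delta(c,x)$ over $\mathcal F$ is $r^{\varepsilon}_{W\Phi}(c;K)$, as noted after \cref{def-observation-fiber}.

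For (3), set $\varepsilon=0$. Since $W$ is nonsingular, $\mathcal F=\{x\in K:\Phi(x)=y\}$ for $y\in\Phi(K)$. If every such set is a singleton for $T$, define $g(y)$ as the common value of $T$ on it; then $T|_K=g\circ\Phi|_K$. Conversely, such a factorization makes $T$ constant on each $\Phi|_K$-fiber. The step I expect to be the main obstacle is the continuity of $g$. I would argue by sequences. Take $y_k\to y$ in $\Phi(K)$ and pick $x_k\in K$ with $\Phi(x_k)=y_k$. If $g(y_k)$ does not converge to $g(y)$, pass to a subsequence with $\abs{g(y_k)-g(y)}\geq\eta>0$, and by compactness of $K$ to a further subsequence with $x_k\to x\in K$. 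Continuity of $\Phi$ gives $\Phi(x)=y$. Continuity of $T$ then gives $g(y_k)=T(x_k)\to T(x)=g(y)$, a contradiction. Equivalently, $\Phi|_K$ is a closed map from a compact space onto the Hausdorff space $\Phi(K)$, hence a quotient map, and $g$ is continuous because $g\circ\Phi|_K=T|_K$ is.
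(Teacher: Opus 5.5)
Your proposal is correct and follows essentially the same route as the paper: attained endpoints for (1), a maximizing pair for (2), and for (3) well-definedness of $g$ from constancy on fibers together with the fact that $\Phi|_K$ is a quotient map from a compact space onto the Hausdorff space $\Phi(K)$. Your sequential argument for the continuity of $g$ simply unpacks that quotient-map step, and your explicit formula $\max_{x\in\mathcal F}\abs{t-T(x)}=\max\{t-T_-(y),\,T_+(y)-t\}$ makes precise the paper's two one-line bounds in (1).
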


\begin{proof}
For (1), both endpoints are attained, so every estimate has error at least half their difference. Every attainable value lies between them, so their midpoint has error at most half their difference. For (2), apply the assumed inequality to a maximizing pair and, for the second assertion, to $c$ and a feasible point. For (3), singleton intervals say exactly that $T$ is constant on each fiber in $K$, making $g(\Phi(x))=T(x)$ well-defined. Conversely, such a factorization makes the intervals singletons. The continuous surjection $\Phi|_K$ from a compact space onto the Hausdorff space $\Phi(K)$ is a quotient map, so continuity of $T|_K$ gives continuity of $g$.
\end{proof}

For exact descriptor collisions, every deterministic predictor $h\circ\Phi$ takes the same value at both shapes. Thus if the required task values differ, no such predictor can be correct at both. At positive tolerance, closeness of two descriptors alone does not imply that an arbitrary predictor assigns them the same label. The corresponding impossibility statement concerns a common observation $y$: if both shapes lie in its consistent set, no decision based only on that observation can be guaranteed correct for both under the stated error model.

A local test uses the same Jacobian calculation as the fiber theorem.

\begin{prop}[Differential test for task ambiguity]\label[prop]{prop-task-rank}
Let $L \geq 2$, let $\Phi \colon U_L \to \R^N$ be a descriptor, and let $U'$ and $M_c$ be as in \cref{thm-fiber}. Let $c\in U'$, and let $T \colon U_L \to \R$ be a $C^1$ rotation-invariant function. If
\[
\rank d(\Phi,T)_c>\rank d\Phi_c,
\]
then every neighborhood of $c$ in the regular fiber $M_c$ contains points $c_-$ and $c_+$ with
\[
T(c_-)<T(c)<T(c_+).
\]
The same conclusion holds inside an admissible set $K$ whenever $K$ contains a neighborhood of $c$ in that fiber.
\end{prop}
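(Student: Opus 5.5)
The plan is to show that $T$ restricted to the regular fiber has nonzero differential at $c$, and then move along a curve in $M_c$ on which $T$ is strictly monotone.

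By \cref{thm-fiber}(2), $M_c$ is an embedded submanifold of $U'$ of dimension $n-r$, where $r=\rank d\Phi_c$. Because $\Phi$ has constant rank $r$ on $U'$, the constant-rank level set theorem gives $T_cM_c=\ker d\Phi_c$. This is the first point to check carefully: for a constant-rank map, the tangent space of a level set is the full kernel. Once that is in hand, the rank hypothesis says that the row $dT_c$ is not in the row space of $d\Phi_c$. By linear algebra this holds exactly when $dT_c$ does not vanish identically on $\ker d\Phi_c$. Choose $v\in\ker d\Phi_c=T_cM_c$ with $dT_c(v)>0$.

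Since $M_c$ is a $C^\infty$ embedded submanifold, take a $C^1$ curve $\gamma\colon(-s_0,s_0)\to M_c$ with $\gamma(0)=c$ and $\gamma'(0)=v$; a chart of $M_c$ at $c$ provides one. Then $T\circ\gamma$ is $C^1$ with $(T\circ\gamma)'(0)=dT_c(v)>0$. Hence there is $s_1>0$ such that $T(\gamma(s))<T(c)$ for $-s_1<s<0$ and $T(\gamma(s))>T(c)$ for $0<s<s_1$.

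Now let $V$ be any neighborhood of $c$ in $M_c$. By continuity of $\gamma$, $\gamma(s)\in V$ for all sufficiently small $\abs{s}$. Set $c_\mp=\gamma(\mp s)$ for small $s>0$; these points lie in $V\subset M_c$ and satisfy $T(c_-)<T(c)<T(c_+)$. Rotation invariance of $T$ is not needed for this local argument. It only ensures that the conclusion concerns shapes, since $c_\pm$ lie in the fiber and $T$ is constant on orbits.

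For the version with an admissible set, suppose $K$ contains a neighborhood $V_0$ of $c$ in $M_c$. Apply the argument above to $V\cap V_0$. This gives $c_\pm\in K$ with $\Phi(c_\pm)=\Phi(c)$ and $T$ on both sides of $T(c)$, so the exact task interval at $y=\Phi(c)$ contains $T(c)$ in its interior.

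The only step I expect to need care is the identification $T_cM_c=\ker d\Phi_c$. It follows from the constant-rank theorem in \cite{Lee13} applied on the open set $U'$: locally $\Phi$ looks like a linear projection of rank $r$, so the level set is a graph whose tangent space is the kernel. The remaining steps are elementary calculus.
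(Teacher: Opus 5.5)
Your proposal is correct and follows the paper's proof: the rank inequality gives $v\in\ker d\Phi_c$ with $dT_c(v)\neq 0$, the constant-rank theorem identifies $\ker d\Phi_c$ with the tangent space of the regular fiber, and a $C^1$ curve in $M_c$ with initial velocity $v$ yields $c_\pm$ for small parameters of opposite sign. Your dimension check for $T_cM_c=\ker d\Phi_c$ and the treatment of the admissible set $K$ fill in details the paper leaves implicit.
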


\begin{proof}
The rank inequality is equivalent to the existence of $v\in\ker d\Phi_c$ with $dT_c(v)\neq0$. By the constant-rank theorem, $\ker d\Phi_c$ is the tangent space to the regular fiber. Choose a $C^1$ curve in that fiber through $c$ with derivative $v$. The derivative of $T$ along the curve is nonzero at $c$, so sufficiently small parameters of opposite signs give the two inequalities.
\end{proof}

The rank test detects first-order variation only. It does not measure the size of the variation, and at a single point it is sufficient but not necessary. It also cannot exclude a jump between disconnected components. For example, let $c$ lie in the free maximal-rank locus of $\Phi_{16}$. The differential of every invariant annihilates the three-dimensional tangent space of the orbit, so adjoining $\chi$ does not increase the rank at $c$. Yet $\chi$ takes opposite signs at $c$ and $\tau c$, which $\Phi_{16}$ does not distinguish. Full local rank therefore does not imply that a task quantity is determined globally.

\section{Volume disclosed by the even spectral pool}\label{sec-volume}

Fix a known constant radial term $\rho_0>0$ and use the round probability measure $d\mu=d\omega/(4\pi)$ on $\Sph$. Put
\[
s_c(p)=\sum_{l=1}^L f_l(p),\qquad \rho_c(p)=\rho_0+s_c(p),
\]
and suppose $\rho_c>0$. The enclosed radial solid is $\{tp:p\in\Sph,\ 0\leq t\leq\rho_c(p)\}$. Its volume is
\begin{equation}\label{eq-radial-volume}
V(c)=\frac{4\pi}{3}\int_{\Sph}\rho_c(p)^3\,d\mu(p).
\end{equation}
For a positive continuous radial function $\rho$ on $\Sph$ we write $V(\rho)=\frac{4\pi}{3}\int_{\Sph}\rho^3\,d\mu$, so that $V(c)=V(\rho_c)$. This is the volume enclosed by the modeled surface. For a hollow object it is neither the material volume nor a measure of mass.

\begin{thm}[Exact volume disclosure]\label[thm]{thm-volume}
For $L\geq2$ and fixed $\rho_0$, there is an explicitly computable vector $\beta(\rho_0)$ such that every $c\in U_L$ with $\rho_c>0$ satisfies
\begin{equation}\label{eq-volume-affine}
V(c)=\frac{4\pi}{3}\rho_0^3+\beta(\rho_0)^\top\Phi_L(c).
\end{equation}
The entries of $\beta$ corresponding to $J_4,J_6,J_{10}$ are zero. The same identity holds for $\Phi_L^+$ with zero coefficients on the odd bispectra. Consequently, at fixed $\rho_0$, every exact fiber intersected with the set $\{ c : \rho_c > 0 \}$ has zero volume-task width, irrespective of its shape dimension or diameter.
\end{thm}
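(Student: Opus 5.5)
Expand $\rho_c^3=(\rho_0+s_c)^3$ and integrate term by term against $d\mu$:
\[
\int_{\Sph}\rho_c^3\,d\mu=\rho_0^3+3\rho_0^2\int s_c\,d\mu+3\rho_0\int s_c^2\,d\mu+\int s_c^3\,d\mu .
\]
The linear term vanishes because each $f_l$ with $l\geq1$ is orthogonal to $\Harm_0$. Orthogonality of the bands gives $\int s_c^2\,d\mu=\sum_l\ip{f_l}{f_l}$. Each $\ip{f_l}{f_l}$ is, as noted in \cref{sec-pool}, a fixed nonzero real multiple of $P_l$, since an invariant quadratic form on the irreducible band is unique up to scale. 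The cubic term is $\sum_{l_1,l_2,l_3}\int f_{l_1}f_{l_2}f_{l_3}\,d\mu$. It suffices to regroup this sum by unordered triples $l_1\leq l_2\leq l_3$ and show that each group is a fixed real multiple of $B_{l_1l_2l_3}$ when the triple lies in $\cA(L)$, and zero otherwise. The explicit constants, computed from the normalizations of $\Psi_l$ and of the transvectants, then form $\beta(\rho_0)$, whose entries are $4\pi\rho_0\cdot(\text{const})$ on the powers and $\frac{4\pi}{3}\cdot(\text{const})$ on the bispectra. By construction the entries on $J_4,J_6,J_{10}$ are zero. The positivity hypothesis $\rho_c>0$ is used only so that \eqref{eq-radial-volume} is the volume; the polynomial identity itself holds on all of $U_L$.

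For the cubic terms I would argue as follows. The map $(g_1,g_2,g_3)\mapsto\int g_1g_2g_3\,d\mu$ on $\Harm_{l_1}\times\Harm_{l_2}\times\Harm_{l_3}$ is an $\SO(3)$-invariant trilinear form. Products of harmonics of degrees $l_1$ and $l_2$ have components only in $\Harm_k$ with $\abs{l_1-l_2}\leq k\leq l_1+l_2$ and $k\equiv l_1+l_2 \pmod 2$. This is a parity argument: $f(-p)=(-1)^l f(p)$, so the integral vanishes when $l_1+l_2+l_3$ is odd. The triangle condition follows from the Clebsch--Gordan decomposition. Hence the form vanishes off the triples with $l_3\leq l_1+l_2$ and even sum, which are exactly those of $\cA(L)$ after sorting. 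For such a triple, the space of invariant trilinear forms on the three irreducible bands is one-dimensional, since the multiplicity of $\Harm_{l_3}$ in $\Harm_{l_1}\otimes\Harm_{l_2}$ is at most one. After complexification and transport by $\Psi_l$ to binary forms, $B_{l_1l_2l_3}$ is a nonzero element of that line. It follows that $\int f_{l_1}f_{l_2}f_{l_3}\,d\mu=\kappa_{l_1l_2l_3}B_{l_1l_2l_3}(c)$ for a constant $\kappa$. That constant is real, since both sides are real on real harmonics by \cref{sec-pool}. When bands repeat, the integrand is symmetric, so the multinomial count of ordered triples simply multiplies $\kappa$.

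The step I expect to be the main obstacle is the claim that $B_{l_1l_2l_3}$ is not identically zero for every triple in $\cA(L)$. Without it, proportionality to the integral can fail. I would first evaluate both sides on explicit zonal or highest-weight test forms, for instance $F_{l}=z_0^{l+k}z_1^{l-k}$ choices with weights summing to zero, where the transvectant reduces to a single nonzero Clebsch--Gordan coefficient. Failing that, I would appeal to the standard fact that the iterated transvectant realizes the unique invariant of $V_{2l_1}\otimes V_{2l_2}\otimes V_{2l_3}$ whenever the triangle condition holds.

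For $\Phi_L^+$, the odd bispectra receive coefficient zero and the same identity holds. The exact-fiber consequence is then immediate. On any fiber, intersected with $\{\rho_c>0\}$, \eqref{eq-volume-affine} makes $V$ constant. So by \cref{prop-task-information}(3), with $g$ affine, every task interval for $T=V$ is a singleton and its width is zero.
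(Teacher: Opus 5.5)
Your proposal is correct and follows essentially the same route as the paper: expand $(\rho_0+s_c)^3$, use orthogonality of the bands and uniqueness of the invariant quadratic form for the powers, kill cubic terms by parity and the triangle condition, use the one-dimensionality of the invariant trilinear space to make each surviving triple integral a multiple of $B_{l_1l_2l_3}$, and count ordered triples with multiplicities $1,3,6$. The nonvanishing of $B_{l_1l_2l_3}$ that you flag is what the paper relies on in \cref{sec-volume-coefficients}: it evaluates at the zonal forms $q_l=(-1)^l\binom{2l}{l}z_0^lz_1^l$, which is your weight-zero test, and asserts that the denominator of $b_{ijk}$ is nonzero on $\cA(L)$; its appeal to the Gaunt formula only adds that the constants $b_{ijk}$ are themselves nonzero, which the identity does not need.
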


\begin{proof}
Every $f_l$ has mean zero. Orthogonality gives
\[
\int\rho_c^3\,d\mu
=\rho_0^3+3\rho_0\sum_l\norm{f_l}_{L^2(d\mu)}^2
+\sum_{i,j,k=1}^L\int f_if_jf_k\,d\mu.
\]
For each $l$, the quadratic integral is a nonzero fixed multiple $a_lP_l$. A triple integral vanishes when the sum of the degrees is odd, by parity, or when the triangle condition fails. For an ordered even triangle triple, the invariant trilinear space is one-dimensional: in the Clebsch--Gordan decomposition, $\Harm_k$ occurs with multiplicity one in $\Harm_i\otimes\Harm_j$. The triple integral and the iterated transvectant are proportional. The Gaunt formula shows that the former is nonzero for these triples; see \cite[Eq.~(34.3.5) and \S34.3(vii)]{DLMFGaunt}. Thus there are nonzero constants $b_{ijk}$ with $\int f_if_jf_k\,d\mu=b_{ijk}B_{ijk}$. Collecting the ordered triples gives
\begin{equation}\label{eq-volume-expanded}
V(c)=\frac{4\pi}{3}\left(
\rho_0^3+3\rho_0\sum_{l=1}^La_lP_l(c)
+\sum_{(i,j,k)\in\cA(L)}m_{ijk}b_{ijk}B_{ijk}(c)\right),
\end{equation}
where $m_{ijk}=1,3,6$ according as all three indices, exactly two indices, or no indices coincide. This gives the stated vector and shows that the other invariants are unnecessary for volume.
\end{proof}

\subsection{Normalization and exact coefficient computation}\label{sec-volume-coefficients}

The constants must match the descriptor normalization; norm powers cannot be substituted directly for the transvectant powers. For the null-cone parametrization of \cref{sec-decoy}, let $\mathcal P_l$ be the Legendre polynomial with $\mathcal P_l(1)=1$. The solid harmonic extending $\mathcal P_l(p_z)$ restricts to
\[
q_l(z_0,z_1)=(-1)^l\binom{2l}{l}z_0^lz_1^l.
\]
Indeed only the leading term in $\mathcal P_l$ survives restriction to $x^2+y^2+z^2=0$, and $z=-2z_0z_1$. Consequently
\begin{equation}\label{eq-volume-calibration}
\begin{split}
a_l&=\frac{1}{(2l+1)(-1)^l\binom{2l}{l}},\\
b_{ijk}&=\frac{G_{ijk}}{\bigl((q_i,q_j)_{i+j-k},q_k\bigr)_{2k}},
\qquad
G_{ijk}=\frac12\int_{-1}^{1}\mathcal P_i(t)\mathcal P_j(t)\mathcal P_k(t)\,dt.
\end{split}
\end{equation}
The denominator is nonzero for $(i,j,k)\in\cA(L)$. With $s=(i+j+k)/2$, the zonal Gaunt coefficient is
\[
G_{ijk}=
\frac{(s!)^2(2s-2i)!(2s-2j)!(2s-2k)!}
{(s-i)!^2(s-j)!^2(s-k)!^2(2s+1)!},
\]
by \cite[Eq.~(34.3.5) and \S34.3(vii)]{DLMFGaunt}. Thus the calibration uses rational arithmetic only. For example,
\[
\begin{split}
a_1 & =-\frac16,\qquad a_2=\frac1{30},\\
b_{112} & =\frac1{30},\qquad b_{222}=-\frac1{105}.
\end{split}
\]
The accompanying program \texttt{volume\_from\_pool.py} implements these formulas, evaluates volume as an exact rational multiple of $\pi$ when the supplied descriptor is rational, and propagates componentwise absolute error bounds. It does not test realizability of an arbitrary descriptor or validity of a supplied measurement-error bound.

\subsection{Descriptor and radial-model error}\label{sec-volume-error}

\begin{cor}\label[cor]{cor-volume-error}
Let $L \geq 2$, let $c \in U_L$ with $\rho_c > 0$, let $W$ be a nonsingular matrix, and put $\widehat V(y)=\frac{4\pi}{3}\rho_0^3+\beta(\rho_0)^\top y$. If $\abs{W(\Phi_L(c)-y)}\leq\varepsilon$, then
\begin{equation}\label{eq-volume-descriptor-error}
\abs{V(c)-\widehat V(y)}\leq\abs{W^{-\top}\beta(\rho_0)}\,\varepsilon.
\end{equation}
If instead each component satisfies $\abs{\Phi_{L,j}(c)-y_j}\leq e_j$, the right-hand side may be replaced by $\sum_j\abs{\beta_j}e_j$.

Suppose also that the actual radial function $\rho$ and the modeled radial function $\rho_c$ satisfy $0<\rho,\rho_c\leq\rho_{\max}$ and $\norm{\rho-\rho_c}_{L^2(d\mu)}\leq\gamma$. Then
\begin{equation}\label{eq-volume-total-error}
\abs{V(\rho)-\widehat V(y)}
\leq\abs{W^{-\top}\beta(\rho_0)}\,\varepsilon+4\pi\rho_{\max}^2\gamma.
\end{equation}
The componentwise version holds with the same additional term.
\end{cor}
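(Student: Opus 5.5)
The plan is to use the affine identity \eqref{eq-volume-affine} of \cref{thm-volume} for the descriptor error, and a separate elementary estimate for the cube integral in \eqref{eq-radial-volume} for the model error.

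\emph{Descriptor error.} Since $\rho_c>0$, \cref{thm-volume} gives $V(c)=\frac{4\pi}{3}\rho_0^3+\beta(\rho_0)^\top\Phi_L(c)$. Subtracting the definition of $\widehat V(y)$ leaves
\[
V(c)-\widehat V(y)=\beta(\rho_0)^\top\bigl(\Phi_L(c)-y\bigr).
\]
Because $W$ is nonsingular, insert $W^{-1}W$ and rewrite this as $\bigl(W^{-\top}\beta(\rho_0)\bigr)^\top W(\Phi_L(c)-y)$. The Cauchy--Schwarz inequality then gives \eqref{eq-volume-descriptor-error}. For the componentwise version, use the triangle inequality $\abs{\sum_j\beta_j(\Phi_{L,j}(c)-y_j)}\leq\sum_j\abs{\beta_j}e_j$ directly.

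\emph{Model error.} Write $\abs{V(\rho)-\widehat V(y)}\leq\abs{V(\rho)-V(c)}+\abs{V(c)-\widehat V(y)}$. The second term is bounded above, so only the first remains, and it is the only step needing care. Here $V(c)=V(\rho_c)$. Factor
\[
\rho^3-\rho_c^3=(\rho-\rho_c)(\rho^2+\rho\rho_c+\rho_c^2).
\]
Positivity and the bound $\rho,\rho_c\leq\rho_{\max}$ give $0<\rho^2+\rho\rho_c+\rho_c^2\leq3\rho_{\max}^2$ pointwise. Then
\[
\abs{V(\rho)-V(\rho_c)}\leq\frac{4\pi}{3}\cdot3\rho_{\max}^2\int_{\Sph}\abs{\rho-\rho_c}\,d\mu\leq4\pi\rho_{\max}^2\norm{\rho-\rho_c}_{L^1(d\mu)}.
\]
Since $\mu$ is a probability measure, Cauchy--Schwarz (or Jensen) gives $\norm{\cdot}_{L^1(d\mu)}\leq\norm{\cdot}_{L^2(d\mu)}\leq\gamma$. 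This yields the additional term $4\pi\rho_{\max}^2\gamma$ in \eqref{eq-volume-total-error}.

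The same triangle-inequality splitting, with the componentwise descriptor bound in place of the $W$-bound, proves the componentwise form of \eqref{eq-volume-total-error}.

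The main obstacle is only bookkeeping. Continuity of $\rho$ makes $V(\rho)$ well defined, and the hypothesis $\rho_c>0$ is exactly what \cref{thm-volume} requires. No invariant-theoretic input beyond \eqref{eq-volume-affine} is needed.
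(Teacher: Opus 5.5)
Your proposal is correct and follows the paper's proof essentially step for step. You subtract the affine identity of \cref{thm-volume}, apply Cauchy--Schwarz to $(W^{-\top}\beta)^\top W(\Phi_L(c)-y)$, use $\abs{\rho^3-\rho_c^3}\leq 3\rho_{\max}^2\abs{\rho-\rho_c}$ (which you obtain by factoring), pass from $L^1(d\mu)$ to $L^2(d\mu)$ because $\mu$ is a probability measure, and close with the triangle inequality.
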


\begin{proof}
Subtract \cref{eq-volume-affine} from the definition of $\widehat V$ and apply Cauchy--Schwarz to 
\[
(W^{-\top}\beta)^\top W(\Phi_L(c)-y).
\]
 The componentwise estimate is the triangle inequality. For $a, b \in [0,\rho_{\max}]$ we have 
 \[
 \abs{a^3-b^3}\leq3\rho_{\max}^2\abs{a-b}.
 \]
  Therefore \cref{eq-radial-volume} and the fact that $\mu$ is a probability measure give
\[
\abs{V(\rho)-V(c)}\leq4\pi\rho_{\max}^2\norm{\rho-\rho_c}_{L^1(d\mu)}
\leq4\pi\rho_{\max}^2\gamma.
\]
Another triangle inequality proves \cref{eq-volume-total-error}.
\end{proof}

An uncertain constant term must be accounted for: its contribution can be included in the radial error $\gamma$ or bounded separately. Similarly, a descriptor of a truncated tuple determines the modeled volume. It determines the physical volume only up to an estimate of the truncation and acquisition errors.

\subsection{An exact clearance test with identical volume}\label{sec-clearance-example}

The following example exhibits a task conflict in closed form, without a numerical search. The reference origin is fixed; no centroid normalization is imposed.

\begin{prop}\label[prop]{prop-clearance-example}
Let $p=(x,y,z)\in\Sph$ and define
\[
\begin{split}
\rho_A(p)&=3+\tfrac23x+\tfrac13y+\tfrac23z+x^2-z^2,\\
\rho_B(p)&=3+y+x^2-z^2.
\end{split}
\]
Both radial functions are at least $1$. Their coefficient tuples $c_A, c_B \in U_2$, taken with $\rho_0 = 3$, have exactly the same even pool $\Phi_2$, and their enclosed volumes are both $216\pi/5$. Nevertheless, for $T_{\mathrm{out}}(c)=\max_{p\in\Sph}\rho_c(p)$,
\[
T_{\mathrm{out}}(c_B)=\frac{17}{4}<\frac92<\frac{14}{3}\leq T_{\mathrm{out}}(c_A).
\]
Thus the same exact descriptor supports incompatible decisions for containment in the origin-centered ball of radius $9/2$, while it determines the enclosed volume exactly.
\end{prop}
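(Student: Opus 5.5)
The plan is to reduce everything to the explicit description of $U_2$ in \cref{sec-disclosure}. Writing $f_1(p)=\ip{a}{p}$ and $f_2(p)=p^\top Qp$, both functions have the same quadrupole part $f_2=x^2-z^2$, that is, $Q=\diag(1,0,-1)$. This $f_2$ is harmonic, so it lies in $\Harm_2$ and has mean zero. The dipoles are $a_A=(2/3,1/3,2/3)$ and $a_B=(0,1,0)$. Hence, with $\rho_0=3$, the coefficient tuples are $c_A=(a_A,Q)$ and $c_B=(a_B,Q)$.

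By the remark following \cref{prop-mirror}, $\Phi_2$ consists, up to nonzero factors, of $\abs{a}^2$, $\tr Q^2$, $a^\top Qa$ and $\det Q$. So I only need to check the following. First, $\abs{a_A}^2=4/9+1/9+4/9=1=\abs{a_B}^2$. Second, the $Q$-invariants coincide trivially, since $Q$ is shared, and $\det Q=0$. Third, $a_A^\top Qa_A=4/9-4/9=0=a_B^\top Qa_B$. Since the nonzero normalizing factors are the same for both tuples, the even pools agree exactly.

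Positivity is immediate. For every $p\in\Sph$ we have $\abs{\ip{a}{p}}\leq\abs{a}=1$ and $\abs{x^2-z^2}\leq1$. Hence $\rho_A,\rho_B\geq3-1-1=1$.

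For the volume, \cref{thm-volume} already shows that $V(c_A)=V(c_B)$, so only the common value must be computed. I will compute it directly from the expansion in the proof of \cref{thm-volume}, rather than via the calibrated constants:
\[
\int\rho_c^3\,d\mu=27+9\bigl(\norm{f_1}^2+\norm{f_2}^2\bigr)+3\int f_1^2f_2\,d\mu+\int f_2^3\,d\mu .
\]
Here $\int f_1^3\,d\mu$ and $\int f_1f_2^2\,d\mu$ vanish by antipodal parity. The moment identities are $\int p_i^2\,d\mu=1/3$, $\int p_i^4\,d\mu=1/5$ and $\int p_i^2p_j^2\,d\mu=1/15$ for $i\neq j$. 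They give $\norm{f_1}^2=\abs{a}^2/3=1/3$ and $\norm{f_2}^2=1/5+1/5-2/15=4/15$. For traceless $Q$ they also give $\int(a\cdot p)^2\,p^\top Qp\,d\mu=\tfrac{2}{15}\,a^\top Qa$, which is $0$ here. Finally, $\int f_2^3\,d\mu=0$ because the swap $x\leftrightarrow z$ preserves $\mu$ and negates $f_2$. Hence
\[
\int\rho_c^3\,d\mu=27+\tfrac{27}{5}=\tfrac{162}{5},\qquad V=\tfrac{4\pi}{3}\cdot\tfrac{162}{5}=\tfrac{216\pi}{5}.
\]

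The clearance values are elementary, and this is the only step needing care; I will handle it by reducing to a one-variable maximization.
\begin{itemize}
\item For $\rho_B$, use $x^2=1-y^2-z^2$ to write $\rho_B=4+y-y^2-2z^2$. This is maximized at $z=0$, $y=1/2$, giving $\rho_B=17/4$.
\item For $\rho_A$, a lower bound suffices. At $p=e_1$ we get $\rho_A=3+2/3+1=14/3$, so $T_{\mathrm{out}}(c_A)\geq14/3$.
\end{itemize}
Together with $17/4<9/2<14/3$, this gives the stated inequalities and hence the opposite containment decisions for the origin-centered ball of radius $9/2$.
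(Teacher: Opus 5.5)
Your proof is correct and follows the paper's argument step for step: you check the same four invariants $\abs{a}^2$, $\tr Q^2$, $a^\top Qa$, $\det Q$, use the same positivity bound, make the same reduction $\rho_B\leq 4+y-y^2\leq 17/4$ (attained at $x=\sqrt3/2$, $y=1/2$, $z=0$), and evaluate $\rho_A$ at $e_1$ as the paper does. The only deviation is the volume: you compute $\int\rho_c^3\,d\mu$ directly from sphere moments and the $x\leftrightarrow z$ symmetry, whereas the paper inserts the null-cone values $(P_1,P_2,B_{112},B_{222})=(-2,8,0,0)$ into the calibrated identity with constants $a_l$, $b_{ijk}$; you arrive at the same $27+9(1/3+4/15)$ without depending on the transvectant normalization.
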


\begin{proof}
Both have $Q=\diag(1,0,-1)$, with $a_A=(2/3,1/3,2/3)$ and $a_B=(0,1,0)$. Then
\[
\begin{split}
\abs{a_A}^2 & =\abs{a_B}^2=1,\quad\tr Q^2=2,\quad\det Q=0,\\
a_A^\top Qa_A & =a_B^\top Qa_B=0.
\end{split}
\]
These are the four invariants of $\Phi_2$, up to its fixed nonzero normalization factors. In the null-cone convention of \cref{sec-decoy}, the ordered values $(P_1,P_2,B_{112},B_{222})$ are $(-2,8,0,0)$ for both tuples. Also $\abs{\ip{a}{p}}\leq1$ for $a \in \{ a_A, a_B \}$ and $\abs{p^\top Qp}\leq1$, so each radius is at least $3-1-1=1$. Applying \cref{eq-volume-expanded} with the constants in \cref{eq-volume-calibration} gives
\[
V=\frac{4\pi}{3}\left(27+9\left(\frac13+\frac4{15}\right)\right)=\frac{216\pi}{5}.
\]
For $\rho_B$, use $x^2-z^2\leq1-y^2$ to obtain
\[
\rho_B\leq4+y-y^2\leq\frac{17}{4}.
\]
Equality is attained at $y=1/2$, $z=0$, $x=\sqrt3/2$. For $\rho_A$, evaluation at $p=(1,0,0)$ gives $14/3$. The displayed strict inequalities follow.
\end{proof}

The two objects cannot be congruent even under $\Or(3)$, since $T_{\mathrm{out}}$ is invariant under that group and their values differ. The failure is therefore not solely a handedness ambiguity. At the common descriptor value the task interval for any admissible $K$ containing both tuples has width at least $14/3-17/4=5/12$. By \cref{prop-task-information}, every scalar estimator based on that value has worst-case outer-radius error at least $5/24$ on these two objects. No comparable error is forced for volume, whose exact task width is zero.

\section{Admissible shapes and a representation audit}\label{sec-audit}

\subsection{Origin, scale, and the feasible domain}\label{sec-admissibility}

The set $K$ records geometric side information for both recognition and disclosure. One concrete model, with fixed $\rho_0$, is
\begin{equation}\label{eq-admissible-set}
K=\{c\in U_L:\norm{c}\leq M,\quad
\rho_{\min}\leq\rho_0+s_c(p)\leq\rho_{\max}\text{ for all }p\in\Sph\},
\end{equation}
where $0<\rho_{\min}<\rho_{\max}$. It is compact and rotation-invariant. It is semialgebraic, because each harmonic is the restriction of a polynomial and the quantified conditions on $p\cdot p=1$ can be eliminated over the reals; see \cite{BochnakCosteRoy98}. Positivity of this kind is an assumption on the truncated model: the truncation of a positive function need not be positive.

A declared reference origin and a centroid convention are different assumptions. For the enclosed solid, the volume centroid is at the origin precisely when
\[
\int_{\Sph}p\,\rho_c(p)^4\,d\mu(p)=0,
\]
since its unnormalized first moment is $\frac14\int p\rho_c^4\,d\omega$. A surface-area centroid imposes a different condition. Recentering changes the radial function and generally its harmonic coefficients. The exact example of \cref{prop-clearance-example} uses the declared origin; it is not a counterexample to an unspecified recentered pipeline.

More generally, if equality constraints restrict coefficients to a smooth rotation-invariant manifold $\mathcal M$ of dimension $d$ on which the action is free, the local regular-fiber dimension is
\[
d-3-\rank\bigl(d\Phi_c|_{T_c\mathcal M}\bigr),
\]
where the restricted rank is assumed constant in a neighborhood. This follows from the same constant-rank and quotient-manifold argument as \cref{thm-fiber}. The ambient count $n-3-r$ does not carry over after such a normalization. Physical size must also be retained when the task depends on dimensions; removing $\Harm_0$ for orbit analysis does not make it irrelevant to inspection.

\subsection{A computable geometric bound}\label{sec-geometric-bound}

\begin{prop}\label[prop]{prop-envelope-lipschitz}
For the weighted norm of \cref{sec-radii}, put
\[
\Gamma_{L,w}=\left(\sum_{l=1}^L\frac{2l+1}{w_l}\right)^{1/2}.
\]
Then $\norm{s_c}_{\infty}\leq\Gamma_{L,w}\norm{c}$ for every $c \in U_L$. At a fixed constant term, the outer-radius task satisfies
\[
\abs{T_{\mathrm{out}}(c)-T_{\mathrm{out}}(c')}
\leq\Gamma_{L,w}\delta(c,c')
\]
for all $c, c' \in U_L$. For $w_l=1$, $\Gamma_{L,w}=\sqrt{L^2+2L}$.
\end{prop}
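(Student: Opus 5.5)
The plan is to obtain the sup-norm bound from the reproducing kernel of each band, and then to derive the Lipschitz estimate from it by choosing an optimal rotation.

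First fix $l$ and $f \in \Harm_l$, and choose an orthonormal basis $Y_1,\dots,Y_{2l+1}$ of $\Harm_l$ for the product $\ip{\cdot}{\cdot}$ with respect to $d\mu$. Writing $f=\sum_m \ip{f}{Y_m} Y_m$, Cauchy--Schwarz gives $\abs{f(p)}^2 \leq \norm{f}^2 \sum_m Y_m(p)^2$. The function $p\mapsto\sum_m Y_m(p)^2$ does not depend on the orthonormal basis, and it is invariant under $\SO(3)$ because $\rho_l(R)$ is orthogonal. Since $\SO(3)$ is transitive on $\Sph$, this function is constant. Integrating it against the probability measure $\mu$ gives $2l+1$, so $\norm{f}_\infty \leq \sqrt{2l+1}\,\norm{f}$.

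Next, for $c=(f_1,\dots,f_L)$, apply Cauchy--Schwarz across bands:
\[
\abs{s_c(p)} \leq \sum_l \sqrt{\tfrac{2l+1}{w_l}}\,\sqrt{w_l}\,\norm{f_l} \leq \Gamma_{L,w}\Bigl(\sum_l w_l\norm{f_l}^2\Bigr)^{1/2} = \Gamma_{L,w}\norm{c}.
\]
This proves the first claim. When $w_l=1$, the sum $\sum_{l=1}^L (2l+1)$ equals $L^2+2L$, as already computed for $n$.

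For the task estimate, note that $T_{\mathrm{out}}$ is rotation invariant: $\rho_{R\cdot c}=R\cdot\rho_c$ is $\rho_c$ composed with a bijection of $\Sph$, so the two functions have the same maximum. Choose $R$ attaining $\delta(c,c')=\norm{c-R\cdot c'}$. The maximum over $\Sph$ is $1$-Lipschitz for the sup norm, which gives
\[
\abs{T_{\mathrm{out}}(c)-T_{\mathrm{out}}(R\cdot c')} \leq \norm{\rho_c-\rho_{R\cdot c'}}_\infty = \norm{s_{c-R\cdot c'}}_\infty \leq \Gamma_{L,w}\,\delta(c,c').
\]
Here the constant term $\rho_0$ cancels because it is fixed, and $s$ is linear in $c$. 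The only step requiring care is the constancy of the kernel diagonal. It uses only invariance and transitivity, so I expect no real obstacle; one could alternatively cite the addition theorem, which gives the same value $2l+1$ for the normalized measure.
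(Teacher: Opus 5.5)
Your proposal is correct and follows essentially the same route as the paper's proof: an orthonormal basis of each band with $\sum_m Y_m(p)^2 = 2l+1$, weighted Cauchy--Schwarz, and then the bound applied to $c - R\cdot c'$ at an optimal rotation, using rotation invariance of the maximum. The only difference is that you derive the constancy of $\sum_m Y_m(p)^2$ from invariance and transitivity instead of citing the addition theorem, which is a self-contained justification of the same fact.
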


\begin{proof}
Choose an orthonormal basis of each band for $L^2(d\mu)$. The addition theorem makes the sum of its squared values at any point equal to $2l+1$. Weighted Cauchy--Schwarz therefore gives $\abs{s_c(p)}\leq\Gamma_{L,w}\norm{c}$. Apply this to $c-R\cdot c'$ and use invariance of the maximum under rotations. Minimizing over $R$ proves the task bound.
\end{proof}

In particular, $\rho_0-\Gamma_{L,w}M\geq\rho_{\min}$ is a simple sufficient positivity condition on a coefficient ball. It is not necessary. The task bound yields conservative clearance certificates from an upper bound on the shape radius; optimizing the task itself can be substantially tighter.

\subsection{An error budget for the actual representation}\label{sec-error-budget}

A numerical matching tolerance is not a sensor specification. Suppose $c$ and a fitted tuple $\widehat c$ belong to a convex bounded region on which $\norm{W d\Phi_x}_{\mathrm{op}}\leq M_{\Phi,W}$, with the domain norm fixed as in \cref{sec-radii}. If $\norm{\widehat c-c}\leq\sigma$ and the stored value satisfies $\abs{W(y-\Phi(\widehat c))}\leq\eta$, integration along the line segment gives
\[
\abs{W(y-\Phi(c))}\leq M_{\Phi,W}\sigma+\eta.
\]
Thus $\varepsilon=M_{\Phi,W}\sigma+\eta$ is a sufficient descriptor-error budget under these assumptions. Truncation error in a geometric quantity must still be propagated separately, as in \cref{cor-volume-error}. If the acquisition bounds hold only with a stated probability, the resulting certificate holds with that probability and is not unconditional.

\subsection{Use in independent evaluation}\label{sec-evaluation}

The framework is intended primarily for evaluators who select or test geometric representations, not for operators who apply a radius to an unspecified field observation. The input is one documented representation, one admissible shape model, a calibrated error budget, and a task requirement. The output is a task certificate, a verified conflicting pair, or an unresolved case. A failure to find a conflicting shape is not a certificate of its absence. Similarly, a numerical distance computed at one aligning rotation bounds the orbit distance from above only; \cref{cor-validated-rotation} gives the lower bound needed to certify geometric separation.

One such setting is an independent validation study of scan-based identification or inspection of equipment or calibration objects, for instance in defense or aerospace programs. Such a study is relevant only if the representation under test actually discards geometry in the manner modeled here. A comparison against complete geometry does not automatically inherit a collision of $\Phi_L$.

A focused validation would first reproduce the exact quadrupole formulas and \cref{prop-clearance-example}. It would then use a controlled collection of fully observed radial objects with independently measured reference geometry, evaluate band powers, the even pool, and a task-augmented representation under the same acquisition and preprocessing conditions, and report task-error bounds, conflicting decisions, the fraction of unresolved cases, computation time, and descriptor storage. The high-band benchmark of \cref{sec-decoy} is a certified calibration case for such a comparison, and \cref{sec-bennu} applies the same audit to a space object.

For shared records, the same audit specifies what is released: at fixed known $\rho_0$, the modeled enclosed volume is recoverable by \cref{thm-volume}, and multiple records about one tuple must be assessed jointly by \cref{cor-composition}. An application need not demand complete shape reconstruction. It may be better served by retaining one additional task measurement than by enlarging a global invariant pool. That decision depends on the required task and on which attributes may be disclosed.

\section{A case study: asteroid (101955) Bennu}\label{sec-bennu}

Rotation-invariant descriptors are natural for a tumbling object of unknown attitude. We apply the framework to the radar shape model of \cite{NolanEtAl13} for the near-Earth asteroid (101955) Bennu. It is archived in the Planetary Data System as the file \texttt{data/101955bennu.tab} of the bundle
 \texttt{urn:nasa:pds:ast-bennu.radar.shape-model}, version 1.1; see \cite{BennuPDS}. The model is a closed triangulated surface with $1348$ vertices and $2692$ facets. Its origin is the center of mass, and its axes are the principal axes. We use meters.

The polyhedral volume of the model is $V_{\mathrm{mesh}} = 6.22656 \cdot 10^{7} \, \mathrm{m}^3$, and its equivalent-sphere diameter is $491.8 \, \mathrm{m}$. The facets are oriented so that the signed volume is positive. Every facet has outward normal $\nu$ with $\ip{\nu}{a} > 0$ at its vertices $a$; the smallest value of $\ip{\nu}{a} / ( \abs{\nu} \abs{a} )$ is $0.667$. So the origin lies in the open inner half-space of every facet plane, that is, in the interior of the kernel of the polyhedron, and the surface is star-shaped about the origin. As an independent check, the radial projection maps every facet onto a positively oriented spherical triangle, and the solid angles of these triangles sum to $4 \pi$; so the projection has degree one, and a regular direction is covered exactly once. The radial function $\rho$ is positive and Lipschitz. The vertices of the model lie at distances between $224.2 \, \mathrm{m}$ and $289.8 \, \mathrm{m}$ from the center of mass. Since $\abs{x}$ is convex, $\rho \leq 289.8 \, \mathrm{m}$ on $\Sph$.

The constant term $\rho_0$ and the bands $f_1, \dots, f_L$ approximate the orthogonal projections of $\rho$ for the round probability measure; they are computed using a product Gauss rule with $200 \times 400$ nodes, without a validated quadrature bound, since $\rho$ is only piecewise smooth. For $L = 6$ the coordinates of $f_1, \dots, f_6$ in the bases of \cref{sec-decoy} are rounded to $10^{-4} \, \mathrm{m}$, and $\rho_0$ is rounded to $245.2830 \, \mathrm{m}$. The rounded tuple is denoted $c \in U_6$; it is listed in the data file \texttt{sh-183-bennu.json}. All statements about $L = 6$ concern $c$ and its radial model $\rho_c = \rho_0 + s_c$. We have $\norm{c} < 11.52 \, \mathrm{m}$.

\begin{table}[ht]
\centering
\setlength{\tabcolsep}{5pt}
\begin{tabular}{rrrrrr}
\toprule
$L$ & components & $V_{\mathrm{pool}}$ ($\mathrm{km}^3$) & $\abs{V_{\mathrm{pool}} - V_L} / V_{\mathrm{mesh}}$ & $( V_{\mathrm{pool}} - V_{\mathrm{mesh}} ) / V_{\mathrm{mesh}}$ & $\gamma_L$ ($\mathrm{m}$) \\
\midrule
2 & 4 & 0.061969 & $1.1 \cdot 10^{-15}$ & $-0.477\,\%$ & 9.71 \\
3 & 7 & 0.062009 & $8.4 \cdot 10^{-16}$ & $-0.412\,\%$ & 9.01 \\
4 & 13 & 0.062186 & $3.6 \cdot 10^{-16}$ & $-0.127\,\%$ & 4.97 \\
5 & 19 & 0.062204 & $2.4 \cdot 10^{-16}$ & $-0.098\,\%$ & 4.32 \\
6 & 29 & 0.062227 & $< 10^{-16}$ & $-0.063\,\%$ & 3.45 \\
7 & 39 & 0.062232 & $3.6 \cdot 10^{-16}$ & $-0.054\,\%$ & 3.18 \\
8 & 54 & 0.062248 & $3.6 \cdot 10^{-16}$ & $-0.029\,\%$ & 2.31 \\
9 & 69 & 0.062250 & $3.6 \cdot 10^{-16}$ & $-0.026\,\%$ & 2.17 \\
10 & 90 & 0.062253 & $1.2 \cdot 10^{-16}$ & $-0.021\,\%$ & 1.95 \\
11 & 111 & 0.062254 & $3.6 \cdot 10^{-16}$ & $-0.019\,\%$ & 1.88 \\
12 & 139 & 0.062256 & $4.8 \cdot 10^{-16}$ & $-0.016\,\%$ & 1.73 \\
\bottomrule
\end{tabular}
\caption{Volume of Bennu from the even pool. $V_{\mathrm{pool}}$ is the right-hand side of \cref{eq-volume-expanded} evaluated on the powers and even bispectra; $V_L$ is the volume of the truncated radial model; $\gamma_L = \norm{\rho - \rho_0 - f_1 - \dots - f_L}_{L^2(d\mu)}$ is the truncation error of the radial function. The column ``components'' counts the powers and even bispectra.}
\label{tab-bennu-volume}
\end{table}

\begin{exa}[Bennu]\label[exa]{exa-bennu}
Let $c \in U_6$ be the tuple of Bennu above, let $T_{\mathrm{out}} (x) = \rho_0 + \max_{\Sph} s_x$ be the keep-out radius about the fixed reference origin, the center of mass of the radar model, and let $V (x)$ be the volume of the radial model $\rho_0 + s_x$. There are tuples $d_+, d_- \in U_6$ with coordinates in $10^{-20} \Z$ such that the following hold.
\begin{enumerate}
\item $\Phi_6 ( \tau c ) = \Phi_6 (c)$. The five odd bispectra of $c$ are nonzero, and each changes sign under $\tau$. Moreover $5 \leq \delta ( c, \tau c ) \leq 5.5138 \, \mathrm{m}$.
\item For $2 \leq L \leq 12$, the volume obtained from the even pool by \cref{thm-volume} equals the volume of the truncated model. In floating-point arithmetic the two agree to relative precision $2 \cdot 10^{-15}$, and $V_{\mathrm{pool}}$ differs from $V_{\mathrm{mesh}}$ by the amounts in \cref{tab-bennu-volume}. This part is numerical and is not certified.
\item For every component $\varphi$ of $\Phi_6$, $\abs{\varphi ( d_\pm ) - \varphi (c)} \leq 4 \cdot 10^{-19} \abs{\varphi (c)}$, and $\abs{V ( d_\pm ) - V (c)} \leq 2 \cdot 10^{-16} \, \mathrm{m}^3$.
\item The keep-out radii satisfy
\[
\begin{split}
272.161 & \leq T_{\mathrm{out}} ( d_- ) \leq 272.172 , \\
274.321 & \leq T_{\mathrm{out}} ( c ) \leq 274.332 , \\
279.588 & \leq T_{\mathrm{out}} ( d_+ ) \leq 279.599 ,
\end{split}
\]
in meters. In particular $d_+$ and $d_-$ lie neither in $\Or(3) \cdot c$ nor in each other's $\Or(3)$-orbit.
\item $4 \leq \delta ( c, d_+ ) \leq 4.2930 \, \mathrm{m}$.
\item $s_c$, $s_{d_+}$ and $s_{d_-}$ exceed $-40 \, \mathrm{m}$ on $\Sph$. Hence each of the radial functions $\rho_0 + s_c$, $\rho_0 + s_{d_+}$ and $\rho_0 + s_{d_-}$ exceeds $205 \, \mathrm{m}$ in every direction.
\end{enumerate}
\end{exa}

\noindent\textbf{Computation.}
(1) The values of $\Phi_6$ at $c$ and $\tau c$ are computed in exact arithmetic over $\Q (i)$ and coincide; this is also \cref{prop-mirror}(1). The odd bispectra $B_{234}$, $B_{245}$, $B_{256}$, $B_{346}$, $B_{456}$ are computed exactly at $c$ and $\tau c$. The distance bounds are obtained as in \cref{thm-decoy}(2), with $64\,176$ boxes, and the upper bound at a rational rotation listed in the data file.

(2) The identity \cref{eq-volume-expanded} is exact. In floating-point arithmetic both sides agree to the relative accuracy in \cref{tab-bennu-volume}. The last two columns of the table are numerical: they combine the truncation error of the radial model with the quadrature error of the projection, and neither is enclosed by a validated bound. \Cref{cor-volume-error} with $\rho_{\max} = 289.8 \, \mathrm{m}$ and $\gamma = \gamma_6$ bounds the volume error at $L = 6$ by $5.8 \, \%$ of $V_{\mathrm{mesh}}$; the actual error is $0.063 \, \%$.

(3) The tuples $d_\pm$ are found by numerical continuation along the level set of $\Phi_6$ through $c$, in the direction of the projection onto the quotient tangent space of the gradient of $\pm \bigl( \int ( \rho_0 + s_x )^{24} \, d\mu \bigr)^{1/24}$, followed by Newton iteration in $60$-digit arithmetic and rounding. The relative differences of the components are computed exactly, and are at most $3.7 \cdot 10^{-19}$. The volume bound is the componentwise estimate of \cref{cor-volume-error}, computed exactly.

(4) Each lower bound is the exact value of $\rho_0 + s_x$ at a rational point of $\Sph$ of the form $( 2u, 2v, \pm ( 1 - u^2 - v^2 ) ) / ( 1 + u^2 + v^2 )$ with $u, v \in \Q$. Each upper bound is established by the interval cover of \cref{thm-decoy}(5) applied to $- s_x$. The function $T_{\mathrm{out}}$ is invariant under $\Or(3)$, which gives the last assertion.

(5) As in \cref{thm-decoy}(2), with $43\,392$ boxes.

(6) As in \cref{thm-decoy}(5).

\begin{figure}[ht]
\centering
\includegraphics[width=\textwidth]{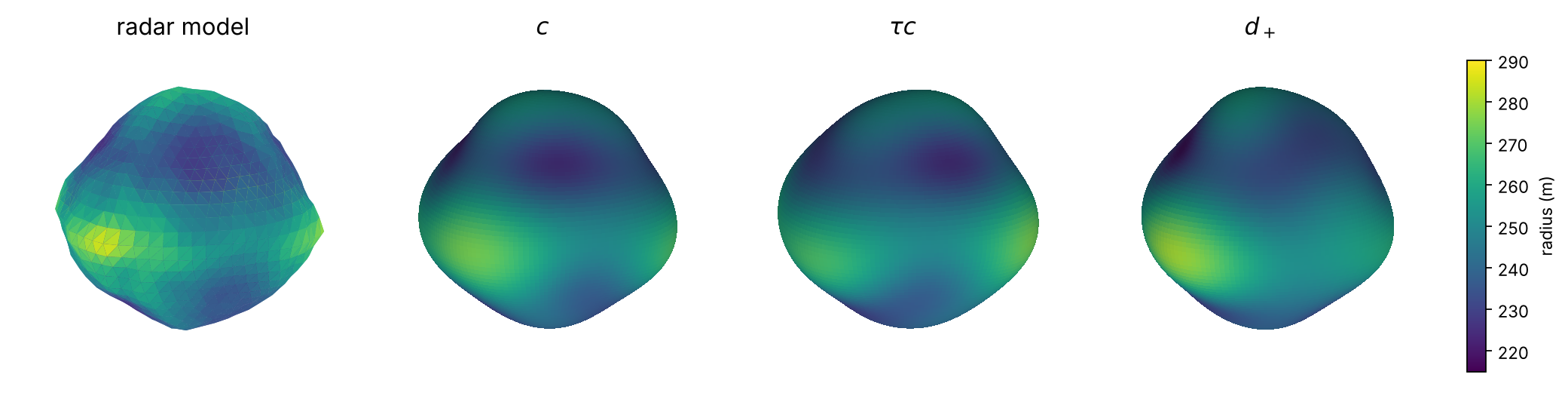}
\caption{Bennu: the radar shape model of \cite{NolanEtAl13}, its $L = 6$ model $c$, the mirror image $\tau c$, and the decoy $d_+$, labeled in each panel. The last two are rotated to their numerically best alignment with $c$. All four are drawn at the same scale. The color shows the distance from the origin in meters; see the color bar.}
\label{fig-bennu}
\end{figure}

\Cref{exa-bennu} has the following implications for the two adversarial perspectives and for the keep-out task. The even pool at $L = 6$ determines the modeled volume of Bennu, and hence its mass for a given bulk density. It does not determine the handedness of the asteroid: the mirror image matches all $32$ components exactly and lies at orbit distance at least $5 \, \mathrm{m}$. The odd bispectra do determine it. The even pool does not determine the keep-out radius either. The requirement $T_{\mathrm{out}} \leq 277 \, \mathrm{m}$ holds for $c$ and fails for $d_+$. The requirement $T_{\mathrm{out}} \leq 273 \, \mathrm{m}$ holds for $d_-$ and fails for $c$. For every admissible set $K$ containing $c$, $d_+$ and $d_-$, the task interval of $T_{\mathrm{out}}$ at the value $\Phi_6 (c)$ with componentwise relative tolerance $10^{-14}$ has width at least $279.588 - 272.172 = 7.416 \, \mathrm{m}$. By \cref{prop-task-information}, every estimate of the keep-out radius from that value has worst-case error at least $3.708 \, \mathrm{m}$ on these three tuples. The modeled volumes of $d_+$ and $d_-$ differ from that of $c$ by at most $2 \cdot 10^{-16} \, \mathrm{m}^3$. The continuation that produced $d_\pm$ imposes no centroid constraint. The example therefore audits the representation with the center-of-mass origin of the radar model held fixed, as in \cref{sec-admissibility}, and not a pipeline that recenters every candidate shape.

These statements concern the degree-six model of a radar shape model with a vertex spacing of about $25 \, \mathrm{m}$; see \cite{BennuPDS}. They do not describe the surface of Bennu at finer scales.

\section{Conclusion}\label{sec-conclusion}

The fiber radius measures geometric ambiguity, but does not by itself determine task utility or attribute confidentiality. Continuous ambiguity, discrete mirror ambiguity, finite-precision instability, and task ambiguity are distinct. The spectral pool gives a concrete example: its fibers can leave the shape undetermined while fixing enclosed volume exactly. The explicit clearance pair shows, without a numerical search, that a second geometric decision can remain impossible from the same descriptor. In an audit seeded by the radar shape model of Bennu, these phenomena appear together: the even pool fixes the modeled volume, misses the handedness, and, at componentwise relative descriptor tolerance $10^{-14}$, leaves the keep-out radius uncertain by more than $7 \, \mathrm{m}$. Together with task intervals, observer-centered reconstruction bounds, and the analysis of combined releases, this gives a precise basis for evaluating what a retained geometric representation can certify and what sharing it discloses.

\bibliography{ref}

\end{document}